\title{On Discriminative Probabilistic Modeling for Self-Supervised Representation Learning}
\author{
\begin{tabular}{@{}l}
Bokun Wang$^1$ \quad Yunwen Lei$^2$ \quad Yiming Ying$^3$   \quad Tianbao Yang\thanks{Correspondence to: \href{mailto:tianbao-yang@tamu.edu}{tianbao-yang@tamu.edu} } ~$^1$\rule{0pt}{1.0\normalbaselineskip}
\end{tabular}\\
$^1$Texas A\&M University \quad $^2$University of Hong Kong \quad $^3$University of Sydney \rule{0pt}{1.0\normalbaselineskip}
}
\newcommand\B{\rule[-1ex]{0pt}{0pt}} 
\newcommand{\removed}[1]{}
\def \X {\mathcal{X}}
\def \Y {\mathcal{Y}}
\def \R {\mathbb{R}}
\def \w {\mathbf{w}}
\def \v {\mathbf{v}}
\def \x {\mathbf{x}}
\def \x {\mathbf{x}}
\def \a {\mathbf{a}}
\def \b {\mathbf{b}}
\def \1 {\mathbf{1}}
\def \y {\mathbf{y}}
\def \u {\mathbf{u}}
\def \H {\mathcal{H}}
\def \F {\mathcal{F}}
\def \Q {\mathcal{Q}}
\newlist{todolist}{itemize}{2}
\setlist[todolist]{label=$\square$}
\def \B {\mathcalB}
\def \y {\mathbf{y}}
\def \E {\mathbb{E}}
\def \x {\mathbf{x}}
\def \u {\mathbf{u}}
\def \H {\mathcal{H}}
\def \Z {\mathcal{Z}}
\def \w {\mathbf{w}}
\def \R {\mathbb{R}}
\def \Q {\mathcal{Q}}
\def \W {\mathcal{W}}
\def \q {\mathbf{q}}
\def \v {\mathbf{v}}
\def \q {\mathbf{q}}
\newcommand{\hhat}[1]{%
\begingroup%
  \let\macc@kerna\z@%
  \let\macc@kernb\z@%
  \let\macc@nucleus\@empty%
  \hat{\mathchoice%
    {\raisebox{.2ex}{\vphantom{\ensuremath{\displaystyle #1}}}}%
    {\raisebox{.2ex}{\vphantom{\ensuremath{\textstyle #1}}}}%
    {\raisebox{.16ex}{\vphantom{\ensuremath{\scriptstyle #1}}}}%
    {\raisebox{.14ex}{\vphantom{\ensuremath{\scriptscriptstyle #1}}}}%
    \smash{\hat{#1}}}%
\endgroup%
}
\def \a {\mathbf{a}}
\def \b {\mathbf{b}}
\def \B {\mathcal{B}}
\def \G {\mathcal {G}}
\def \X {\mathcal{X}}
\def \F {\mathcal{F}}
\def \L {\mathcal{L}}
\newtheorem{asm}{Assumption}
\newtheorem{proposition}{Proposition}
\newtheorem{theorem}{Theorem}
\newtheorem{lemma}{Lemma}
\theoremstyle{definition}
\newtheorem{remark}{Remark}
\DeclareMathOperator*{\argmax}{arg\,max}
\newcommand{\Norm}[1]{\left\|#1\right\|}
\def \alg {NUCLR}
\begin{document}

\maketitle

\begin{abstract}
We study the discriminative probabilistic modeling on a continuous domain for the data prediction task of (multimodal) self-supervised representation learning. 
To address the challenge of computing the integral in the partition function for each anchor data, we leverage the multiple importance sampling (MIS) technique for robust Monte Carlo integration, which can recover InfoNCE-based contrastive loss as a special case. Within this probabilistic modeling framework,  we conduct generalization error analysis to reveal the limitation of current InfoNCE-based contrastive loss for self-supervised representation learning and derive insights for developing better approaches by reducing the error of Monte Carlo integration.  To this end, we propose a novel non-parametric method for approximating the sum of conditional probability densities required by MIS through convex optimization, yielding a new contrastive objective for self-supervised representation learning.
Moreover, we design an efficient algorithm for solving the proposed objective.
We empirically compare our algorithm to representative baselines on the contrastive image-language pretraining task
. Experimental results on the CC3M and CC12M datasets demonstrate the superior overall performance of our algorithm. Our code is available at {\footnotesize \url{https://github.com/bokun-wang/NUCLR}}.
\end{abstract}

\section{Introduction}\label{sec:intro}

Recently, self-supervised learning (SSL) of large models has emerged as a prominent paradigm for building artificial intelligence (AI) systems~\citep{bommasani2021opportunities, ozbulak2023know,zhou2023comprehensive,zong2023self}. Unlike traditional supervised learning that relies on human-labeled datasets, SSL can fully exploit the vast multimodal data readily available on the internet via self-supervision (pretext tasks such as instance discrimination, masked modeling, and inpainting) to learn large foundation models that are useful for many downstream tasks. Although self-supervision differs from human supervision, self-supervised and supervised learning share similarities. For instance, many successful self-supervised learning models, e.g., CLIP~\citep{radford2021learning}, still use the softmax function and the cross-entropy loss to define their objective functions, similar to traditional multi-class classification in supervised learning. The key difference is that self-supervised learning focuses on {\bf predicting relevant data instead of relevant labels}.
\vspace{-6pt}
\begin{wrapfigure}{r}{0.5\linewidth}\label{fig:framework}
\includegraphics[width=\linewidth]{./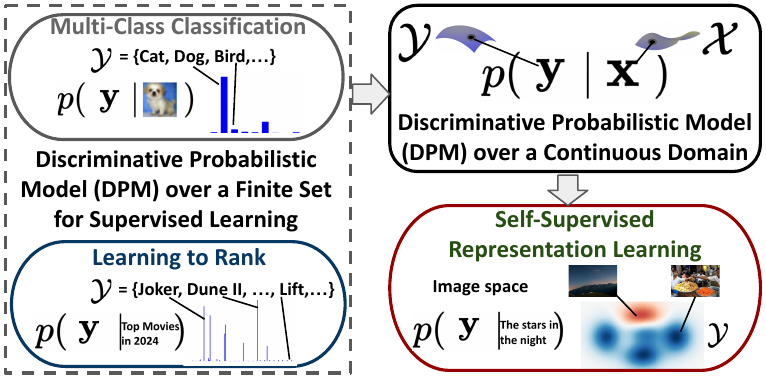}
\caption{DPM for supervised learning and self-supervised representation learning.}
\end{wrapfigure} 
\vspace{-6pt}

Discriminative probabilistic modeling (DPM) uses a parameterized model to capture the {\bf conditional} probability $\Pr(\y|\x)$ of a target $\y\in\Y$ given an input data point $\x$. It is a fundamental approach for supervised learning (see, e.g., Chapter 4.3 in~\citealt{bishop2006pattern}).  For example, logistic regression for multi-class classification (MCC) uses $\Pr(\y|\x)$ to define the probability of a label $\y$ given a data point $\x$, whose maximum likelihood estimation (MLE) yields the cross-entropy (CE) loss. Similarly, probabilistic modeling approaches such as ListNet~\citep{cao2007learning} have been used for learning to rank (L2R) to model the probability of a candidate $\y$ in a list given a query $\x$. In these supervised learning tasks, the target $\y$ is from a discrete and finite set $\Y$ (e.g. class labels or a list of candidates). 

What if the target $\y$ in DPM is from a continuous domain $\Y$? This is particularly useful for modeling the {\bf data prediction task} of self-supervised representation learning. 
Considering that each underlying object in the real world generates various forms of observational data, such as images, texts, and audio, DPM is a natural choice to model the probability of observing a data point from a continuous domain (e.g., the space of natural images, audio, or the continuous input embedding space of texts) given an “anchor” data point. The anchor data may come from a different modality. 
However, solving DPM over a continuous domain is deemed as a challenging task (c.f. Section 1.3 in \citealt{lecun2006tutorial}). 
Compared to the probabilistic modeling over discrete and finite sets, such as in traditional supervised learning tasks like MCC and L2R, the DPM problem over a continuous domain (real vector space) necessitates computing the partition function (i.e., the normalizing constant) for each anchor. This involves an integration over an underlying continuous space, rather than a finite summation. Previous works tackle or circumvent this issue using Markov Chain Monte Carlo (MCMC) sampling~\citep{song2021train,du2019implicit,ta2022conditional}, noise-contrastive estimation (NCE)~\citep{gutmann2010noise}, and energy-based models (EBMs)~\citep{lecun2006tutorial,assran2023self}. However, these approaches have drawbacks that  hinder their wide adoption in self-supervised representation learning. Specifically, MCMC approaches are computationally expensive and prone to divergence during training~\citep{yang2021jem++,kim2022energy}, NCE suffers from slow convergence for high-dimensional data when the noise distribution is not well-configured~\citep{liu2022analyzing}, and EBMs neglects the partition functions that may throw away some discriminative power of DPM. The state-of-the-art discriminative self-supervised representation learning algorithms rely on InfoNCE-based contrastive losses~\citep{oord2018representation,chen2020simple,radford2021learning}. However, the connection between these losses and DPMs is not fully revealed, limiting our understanding of their strengths and weaknesses.

In this work, we study the DPM problem over a continuous infinite domain for self-supervised representation learning, called infinite discriminative learning (\textsc{$\infty$-DL}). We investigate a computational framework of robust Monte Carlo integration of the partition functions based on the multiple importance sampling (MIS) approach~\citep{veach1995optimally, veach1998robust}. Our contributions are: 

$\bullet$ We construct an empirical risk for maximum likelihood estimation (MLE) based on MIS. Unlike the setting in MIS for traditional stochastic simulation, the sums of conditional densities on training data are not directly accessible, requiring approximations of these sums. We show that an InfoNCE-based contrastive loss, referred to as the global contrastive loss (GCL)~\citep{yuan2022provable}, can be derived as a special case of this empirical risk when a simple uniform approximation is used.
    
$\bullet$ Our finite-sample generalization analysis reveals that GCL incurs a non-diminishing error due to the rather crude uniform approximation. To reduce this error, we introduce a novel non-parametric method to approximate the sum of conditional densities required by the MIS approach through optimization. In a toy experiment, we demonstrate that the proposed non-parametric method results in better approximations and significantly smaller generalization errors than GCL.  

$\bullet$ The MLE framework based on MIS and our non-parametric method inspires a new contrastive objective for self-supervised representation learning. We design an efficient algorithm, \alg, to optimize the proposed objective, which does not require the costly MCMC steps or the elusive noise distribution selection. Furthermore, \alg~can be interpreted as a contrastive representation learning algorithm with \emph{nonuniform} margins that are dynamically learned for negative pairs.
    
$\bullet$ Experimental results on language-image pretraining using CC3M and CC12M datasets show the superior overall performance of \alg~compared to baselines on downstream retrieval tasks. 

\subsection{Related Work}\label{sec:related}

\textbf{Probabilistic Models for Self-Supervised Representation Learning:} Discriminative probabilistic models learn the \emph{conditional} probability mass/density function $p(\y\mid\x)$ of $\y$ given data $\x$. Recently, some works have focused on modeling the conditional probability density function $p(\y\mid \x)$ for the unsupervised representation learning task, where both $\x$ and $\y$ may belong to uncountable spaces. \citet{khemakhem2020ice} studied the identifiability (i.e., the learned representations
are unique up to a linear transformation) of DPM and showed its connection to nonlinear ICA models. \citet{ta2022conditional} improved the Langevin MCMC method to handle the partition function in DPM for learning implicit representations of behavior-cloned policies in robotics. Discarding the partition function, \citet{assran2023self, bardes2024revisiting} proposed the energy-based models I-JEPA and V-JEPA to learn visual representations by predicting the relevance between data representations. Although the high-level concept of JEPA is similar to our work in that both aim to predict the relevance between data representations, our approach is grounded in discriminative probabilistic modeling, whereas JEPA is an energy-based model that omits the partition function. Consequently, JEPA lacks some statistical guarantees of probabilistic models, such as the convergence of the maximum likelihood estimator, which have implications for performance on downstream tasks (See Remark~\ref{rem:downstream}). Furthermore, JEPA is designed specifically for the visual modality whereas our algorithm applies to multimodal data. 


By modeling the joint distribution $p(\x,\y)$, hybrid models~\citep{grathwohl2019your, wang2022unified,kim2022energy,bizeul2024probabilistic} simultaneously perform discriminative and generative probabilistic modeling. 
Although the generative component in hybrid models might offer some benefits for representation learning, such as achieving reasonably good performance with small batch size, \citet{kim2022energy} pointed out that current hybrid models significantly increase the computational burden and cannot be applied to large-scale datasets such as ImageNet1k due to the expensive inner loops of stochastic gradient Langevin dynamics or adversarial training. 
Furthermore, \citet{bizeul2024probabilistic} mentioned\footnote{\url{https://openreview.net/forum?id=QEwz7447tR}} that generative representation learning approaches face difficulties scaling to large-scale, complex datasets, as ``learning representations for complex data distributions under a generative regime remains a challenge compared to discriminative approaches.''

\textbf{InfoNCE-based Losses and Theoretical Guarantees:} The InfoNCE loss is arguably the most widely used loss function in contrastive learning~\citep{chopra2005learning,oord2018representation,chen2020simple,radford2021learning}. Given a dataset $\{(\x_i,\y_i)\}_{i=1}^n$ from two views or modalities, the minibatch-based InfoNCE loss contrasts each positive pair with $k$ negative pairs in the sampled batch. Both empirical observations~\citep{chen2020simple,radford2021learning,yuan2022provable} and theoretical analysis~\citep{yuan2022provable} demonstrate that algorithms based on InfoNCE perform well when the batch size is sufficiently large (e.g. 32,768 for CLIP training), which demands a lot of
computational resources. Besides, several works analyze the generalization error of the minibatch-based InfoNCE loss~\citep{arora2019theoretical,lei2023generalization}. However, these analyses have a critical limitation: the generalization error increases with $k$, which contradicts practical observations.

To address the dependency on large batch sizes,  \citet{yuan2022provable} studied the global contrastive loss (GCL), 
which can be viewed as a variant of the InfoNCE loss that contrasts each positive pair with \emph{all} other negative pairs in the whole dataset. Based on the exponential moving average (EMA) technique, they developed the SogCLR algorithm that converges to a neighborhood of a stationary point of GCL even with small batch sizes (e.g., 256). Recently, \citet{yau2024emc} introduced the MCMC-based EMC$^2$ algorithm that converges to the stationary point with a small batch size. However, EMC$^2$ appears to empirically perform worse than SogCLR on large datasets such as ImageNet1k. Besides, \citet{waida2023towards} established the generalization bound of the kernel contrastive loss (KCL), which is a lower bound of GCL when the kernel is bilinear.


\section{Preliminaries}

\textbf{Notations:} We use $[n]$ to denote the set $\{1,\dotsc,n\}$. For a vector $\v \in \R^n$, $v^{(i)}$ represents its $i$-th coordinate and we define $\exp(\v) \coloneqq  (\exp(v^{(1)}),\dotsc, \exp(v^{(n)}))^\top$. Let $\X$ and $\Y$ be Lebesgue-measurable data spaces of different views or modalities. When $\X$ or $\Y$ is finite, the Lebesgue measure $\mu$ is replaced by the counting measure. 
For a sequence of random variables $\{\x_n\}_{n\in\mathbb{N}}$, we write $\x_n \stackrel{p}{\rightarrow} \x$ to denote that $\{\x_n\}_{n\in\mathbb{N}}$ converges in probability to $\x$.

\textbf{Multiple Importance Sampling (MIS):} Next, we briefly review the multiple importance sampling (MIS) approach~\citep{veach1995optimally, veach1998robust} for robust Monte Carlo integration. MIS was originally introduced to address the glossy highlights problem for image rendering in computer graphics, which involves computing several integrals of the form $g(\a,r,s) = \int_\X f(\x;r,s) d\mu(\x)$, which represents the light exiting a point $\a$ on a glossy surface under variations in light size $s$ and surface glossiness $r$. For Monte Carlo integration of $g(\a,r,s)$, importance sampling based on a sample from a single distribution may lead to a large variance under some light size/surface glossiness. To address this issue, the MIS approach constructs an unbiased estimator $\hat{g}(\a,r,s)=\sum_{j=1}^n \frac{1}{m}\sum_{l=1}^m \omega^{(j)}(\x_{j,l}) \frac{f(\x_{j,l};\a,r,s)}{p_j(\x_{j,l})}$ by combining samples $\hat{\mathbf{X}}_1\,\dotsc,\hat{\mathbf{X}}_n$ from distributions $p_1,\dotsc,p_n$, where $\hat{\mathbf{X}}_j \coloneqq \{\x_{j,1},\dotsc, \x_{j,m}\}$ is sampled from the $j$-th distribution, and $\boldsymbol{\omega}=(\omega^{(1)},\dotsc,\omega^{(n)})$ is a weighting function satisfies that $\sum_{j=1}^n \omega^{(j)}(\x) = 1$ when $f(\x;\a,r,s) \neq 0$ and $\omega^{(j)}(\x)=0$ when $p_j(\x)=0$. Moreover, \cite{veach1995optimally} proposed the ``balance heuristic'' $\omega^{(j)}(\x)=\frac{p_j(\x)}{\sum_{j'=1}^n p_{j'}(\x)}$ and proved that this choice of $\boldsymbol{\omega}$ is near-optimal in terms of variance among all possible weighting functions and the resulting MIS estimator is $\hat{g}(\a,r,s) = \sum_{j=1}^n \frac{1}{m}\sum_{l=1}^m  \frac{f(\x_{j,l};\a,r,s)}{\sum_{j'=1}^n p_{j'}(\x_{j,l})}$. Empirically, they showed that MIS with the balance heuristic leads to better rendering performance than importance sampling with a single distribution.

\section{DPM over a Continuous Domain}

When choosing $\X$ as the anchor space, we model the probability density $p(\y\mid\x)$ of an object $\y\in\Y$ given an anchor object $\x\in\X$ by the following DPM parameterized by $\w$:
\begin{align}\label{eq:prob_model}
p_\w(\y\mid \x) = \frac{\exp(E_\w(\x,\y)/\tau)}{\int_\Y \exp(E_\w(\x,\y')/\tau) d\mu(\y') },
\end{align}
where  $\tau > 0$ is a temperature parameter for flexibility,  $E_\w:\X\times \Y\rightarrow \R$ is a parameterized prediction function, e.g., $E_\w(\x, \y)=E_1(\w_1, \x)^{\top}E_2(\w_2, \y)$, where $E_1$ and $E_2$ are encoder networks. 
We assume that $\exp(E_\w(\x,\y)/\tau)$ is Lebesgue-integrable for $\w\in\W$, $\W\subset \R^d$. Here $p_\w(\y\mid \x)$ is a valid probability density function because $\int_\Y p_\w(\y\mid \x) d\mu(\y)= 1$.
Given $\{(\x_1,\y_1),\dotsc,(\x_n,\y_n)\}$ sampled from the joint distribution $p_{\x,\y}$, the maximum likelihood estimation (MLE) aims to solve 
\begin{align}\label{eq:mle}
\min_\w \left\{- \frac{1}{ n}\sum_{i=1}^n \tau \log\frac{\exp(E_\w(\x_i,\y_i)/\tau)}{\int_\Y \exp(E_\w(\x_i,\y')/\tau) d\mu(\y') }\right\}.
\end{align}
Corresponding to the empirical objective above, the true (expected) risk can be defined as
\begin{align}\label{eq:true_risk}
\L(\w) \coloneqq \E_{\x,\y}\left[- \tau \log \frac{\exp(E_\w(\x,\y)/\tau)}{\int_\Y \exp(E_\w(\x,\y')/\tau) d\mu(\y')}\right].
\end{align}
The above discriminative learning framework recovers the traditional supervised learning of label prediction when $\Y$ is a finite set of class labels. Nevertheless, we focus on addressing the challenge of solving~(\ref{eq:true_risk}) for self-supervised learning of data prediction when $\Y$ is a continuous space of data.

\begin{remark}\label{rem:downstream}
Learning the DPM $p_{\hat{\w}_*}$ via MLE for self-supervised pretraining naturally provides some performance guarantees for downstream discriminative tasks. Suppose that the true conditional density function is parameterized by some $\w_*\in\W$, i.e., $p(\y\mid\x)=p_{\w_*}(\y\mid\x)=\frac{\exp(E_{\w_*}(\x,\y)/\tau)}{\int_\Y \exp(E_{\w_*}(\x,\y')/\tau)d\mu(\y')}$ for any $\x\in\X,\y\in\Y$. Then, the maximum likelihood estimator $\hat{\w}_*=\argmax_{\w\in\W} \frac{1}{n}\sum_{i=1}^n \log p_\w(\y_i\mid \x_i)$ with the sample $\{(\x_i,\y_i)\}_{i=1}^n$ of size $n$ converges in probability to $\w_*$ under some mild assumptions (see Theorem 2.1 in  \citealt{newey1994large}). Due to the continuous mapping theorem, the learned model satisfies $E_{\hat{\w}_*}(\x,\y)\stackrel{p}{\rightarrow} E_{\w_*}(\x,\y)$ and $p_{\hat{\w}_*}(\y|\x)\stackrel{p}{\rightarrow} p_{\w_*}(\y|\x)$ if the parameterized models $E_\w$ and $p_\w$ have measure-zero discontinuity points on $\W$, which naturally provides a statistical guarantee for cross-modality retrieval. In App.~\ref{sec:downstream_class}, we also discuss the performance of DPM on downstream classification tasks.
\end{remark}
When choosing $\Y$ as the anchor space, we can also model the probability density of an object $\x\in\X$ given an anchor $\y\in\Y$ by the parameterized model $p_\w(\x\mid \y) = \frac{\exp(E_\w(\x,\y)/\tau)}{\int_\X \exp(E_\w(\x',\y)/\tau) d\mu(\x') }$ similar to \eqref{eq:prob_model}. Based on a sample $\hat{\mathbf{S}} = \hat{\mathbf{X}}\times \hat{\mathbf{Y}} = \{(\x_1,\y_1),\dotsc,(\x_n,\y_n)\}$ of size $n$ from the joint distribution $p_{\x,\y}$, we can simultaneously model $p_\w(\y\mid\x)$ and $p_\w(\x\mid \y)$ via the objective below, which resembles the symmetric InfoNCE loss in \citet{radford2021learning}:
\begin{align*}
\min_\w - \frac{1}{n}\sum_{i=1}^n\left(\tau \log\frac{\exp(E_\w(\x_i,\y_i)/\tau)}{\int_\Y \exp(E_\w(\x_i,\y')/\tau) d\mu(\y') } + \tau \log\frac{\exp(E_\w(\x_i,\y_i)/\tau)}{\int_\X\exp(E_\w(\x',\y_i)/\tau) d\mu(\x') }\right).
\end{align*}

\subsection{An MIS-based Empirical Risk for Maximum Likelihood Estimation}\label{sec:mle}

For simplicity, we focus on the case where $\X$ is the anchor space and aim to model $p(\y\mid\x)$. The main challenge of MLE in~\eqref{eq:mle}   lies in computing the integral $g(\w;\x_i,\Y) \coloneqq \int_\Y \exp\left(E_\w(\x_i, \y')/\tau \right) d\mu(\y')$ for each $i \in [n]$, which is infeasible unless $\Y$ is finite. For Monte Carlo integration based on the importance sampling, it is difficult, if not impossible, to select a single distribution that works well for all integrals $g(\w;\x_i,\Y)$, $i \in [n]$. 
Assume that each data $\y_j$ is sampled from $p_j=p (\cdot\mid \x_j)$ given data $\x_j$ sampled from the marginal, $j=1,2,\dotsc,n$. 
Thus, we employ the MIS method
with balance heuristic~\citep{veach1995optimally} to construct the following estimator of $g(\w;\x_i,\Y)$ by combining sampled data $\y_1,\dotsc, \y_n$ from $n$ distributions $p_1,\dotsc, p_n$:
\begin{align}\label{eq:vg_opt}
\hat{g}(\w;\x_i,\hat{\mathbf{Y}}) &=\sum_{j=1}^n  \frac{1}{\sum_{j'=1}^n p(\y_j\mid \x_{j'})} \exp\left(E_\w(\x_i,\y_j)/\tau \right),\quad \hat{\mathbf{Y}} = \{\y_1,\dotsc, \y_n\}.
\end{align}
In App.~\ref{sec:mis_general}, we show that the estimator $\hat{g}(\w;\x_i,\hat{\mathbf{Y}}) $ in \eqref{eq:vg_opt} is an unbiased estimator of $g(\w;\x_i,\Y)$ and explain why we choose the balance heuristic over other possible weighting functions for MIS. 
\begin{remark}
The variance of MIS-based estimator $\hat{g}(\w;\x_i,\hat{\mathbf{Y}})$ can be further reduced if we sample multiple data $\{\y_{j,l}\}_{l=1}^m$ from each $p_j$, $j\in[n]$. The MIS-based estimator is then constructed as $$\hat{g}(\w;\x_i,\hat{\mathbf{Y}}) =\sum_{j=1}^n \frac{1}{m}\sum_{l=1}^m \frac{1}{\sum_{j'=1}^n p(\y_{j,l}\mid \x_{j'})} \exp\left(E_\w(\x_i,\y_{j,l})/\tau \right),\quad \hat{\mathbf{Y}}=\bigcup_{j=1}^n\{\y_{j,1},\dotsc, \y_{j,m}\}.$$
In practice, $\y_{j,1}, \dotsc, \y_{j,m}$ can be approximated by random augmentations of $\y_j$, assuming that the neighborhood of a high-density point also has relatively high density. Using a sample of $\{\y_{j,1}, \dotsc, \y_{j,m}\}$ of size $m>1$, as opposed to a single $\y_j$, has been empirically shown to improve performance in bimodal contrastive learning~\citep{fan2024improving,li2024scaling}.
\end{remark}
However, a remaining issue prevents us from using the MIS-based estimator in \eqref{eq:vg_opt}. Unlike the rendering problem considered in \citet{veach1995optimally}, we do not have access to the conditional probability density $p(\y_j\mid \x_{j'})$, $j,j'\in[n]$. Thus, there is a need for a cheap approximation $\tilde{q}^{(j)}$ of the sum of conditional probability densities $q^{(j)}\coloneqq \sum_{j'=1}^n p(\y_j\mid \x_{j'})$, $\forall j\in[n]$, where $q^{(j)}$ can be viewed as a measure of \textbf{popularity} of $\y_j$ if we consider that all data in $\hat{\mathbf{S}}=\{(\x_i,\y_i)\}_{i=1}^n$ form a graph and the weight on edge from node $\x$ to node $\y$ is $p(\y|\x)$. With a general approximation $\tilde{\q}=(\tilde{q}^{(1)},\dotsc, \tilde{q}^{(n)})^\top$ of $\q = (q^{(1)},\dotsc, q^{(n)})^\top$, the MLE objective in \eqref{eq:mle} with MIS can be written as
\begin{align}\label{eq:mle_mis_approx}
& \hat{\mathcal{L}}(\w; \tilde{\q},\hat{\mathbf{S}}) =  -\frac{1}{n}\sum_{i=1}^n \tau \log \frac{\exp(E_\w(\x_i,\y_i)/\tau)}{\tilde{g}(\w;\x_i,\hat{\mathbf{Y}})},\quad \tilde{g}(\w;\x_i,\hat{\mathbf{Y}}) = \sum_{j=1}^n  \frac{\exp\left(E_\w(\x_i,\y_j)/\tau \right)}{\tilde{q}^{(j)}} .
\end{align}
\begin{remark}\label{remark:unif_q_tilde}
If we simply choose the uniform approximation $\tilde{\q}= n c \mathbf{1}_n$ with some $c>0$, minimizing $\hat{\mathcal{L}}(\w; \tilde{\q},\hat{\mathbf{S}})$ in \eqref{eq:mle_mis_approx} is equivalent to minimizing the global contrastive loss (GCL)~\citep{yuan2022provable}. Besides, the objective in~\eqref{eq:mle_mis_approx} is a special case of empirical X-risks discussed in \citep{yang2022algorithmic,DBLP:conf/kdd/YuanZQLWY23}. Since it stems from a discriminative learning framework, we refer to it as a discriminative X-risk. Given $\tilde\q$, minimizing $\hat{\mathcal{L}}(\w; \tilde{\q},\hat{\mathbf{S}})$ can be achieved using existing finite-sum coupled compositional optimization (FCCO) algorithms~\citep{wangsox}. The key challenge, however, lies in estimating $\tilde\q$ to enhance generalization beyond simply minimizing the GCL. 
\end{remark}
 
\subsection{Finite-Sample Generalization Analysis}\label{sec:gen}
Next, we analyze the error between the empirical risk $\hat {\mathcal L}(\w, \tilde{\q}; \hat{\mathbf{S}})$ in \eqref{eq:mle_mis_approx} with a \emph{general} approximation $\tilde{\q}$ and the true risk $\mathcal L(\w)$ in \eqref{eq:true_risk} for DPM. This analysis provides (i) insights into the statistical error of GCL~\citep{yuan2022provable}, and (ii) guidance on finding an approximation $\tilde{\q}$ better than the uniform one used by GCL as discussed in Remark~\ref{remark:unif_q_tilde}. First, we state the necessary assumptions.
\begin{asm}\label{asm:bounded}
There exist $c_1,c_2>0$ such that $\Norm{\x}_2\leq c_1$, $\Norm{\y}_2\leq c_2$ for any $\x\in\X,\y\in\Y$.  
\end{asm}
We focus on representation learning such that  $E_\w(\x,\y)=E_1(\w_1;\x)^{\top}E_2(\w_2;\y)$, 
where $\w_1$ and $\w_2$ are the encoders+projection heads of the first and second views/modalities, respectively. In our theory, we consider the case that both $E_1$ and $E_2$ are $L$-layer neural networks\footnote{Our results could potentially be extended to other neural networks, such as ConvNets, using the corresponding Rademacher complexity bounds (See e.g.,~\citealp{truong2022rademacher}).} with positive-homogeneous and 1-Lipschitz continuous activation function $\sigma(\cdot)$ (e.g. ReLU).

\begin{asm}\label{asm:nn}
Suppose that $E_1(\w_1;\x)\in\R^{d_L}$, $E_2(\w_2;\y)\in\R^{d_L}$ for some $d_L\geq 1$. Moreover, we have $\Norm{E_1(\w_1;\x)}_2 \leq 1$, $\Norm{E_2(\w_2;\y)}_2 \leq 1$ such that $E_\w(\x,\y)\in[-1,1]$. 
\end{asm}
Based on the assumptions above, we provide a finite-sample generalization error bound between the empirical risk $\hat{\mathcal{L}}(\w;\tilde{\q}, \hat{\mathbf{S}}) $ in \eqref{eq:mle_mis_approx} and the true risk $\L(\w)$ in \eqref{eq:true_risk}. 
\begin{theorem}\label{thm:gen}
Suppose that Assumptions~\eqref{asm:bounded} and \eqref{asm:nn} hold. Consider the prediction function $E_\w$ parameterized by $L$-layer deep neural networks $\w_1,\w_2$ and an approximation $\tilde{\q}$ of $\q$, where $q^{(j)} = \sum_{j'=1}^n p(\y_j\mid \x_{j'}) \geq \Omega(n)$ almost surely, $\forall j\in[n]$. With probability at least $1-\delta$, $\delta\in(0,1)$,  
\begin{align}\label{eq:gen}
\left|\hat{\mathcal{L}}(\w; \tilde{\q}, \hat{\mathbf{S}}) -\L(\w)\right| \leq O\left(\frac{1}{n}  + \sqrt{\frac{d_L}{n}}  + \sqrt{\frac{\log(1/\delta)}{n}} + \mathcal{E}_\w(\tilde{\q},\q;\hat{\mathbf{S}})\right),
\end{align}
where $\mathcal{E}_\w(\tilde{\q},\q;\hat{\mathbf{S}})\coloneqq \frac{1}{n}\sum_{i=1}^n\sum_{j=1}^n\left|\frac{1}{\tilde{q}^{(j)}} - \frac{1}{q^{(j)}}\right|\exp((E_\w(\x_i,\y_j)-1)/\tau)$ is an error term.
\end{theorem}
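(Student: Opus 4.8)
The bound has two conceptually distinct pieces: (i) the gap between $\hat{\mathcal L}(\w;\q,\hat{\mathbf S})$ — the empirical risk using the \emph{true} popularity vector $\q$ — and the true risk $\L(\w)$, which is a genuine statistical generalization term; and (ii) the gap between $\hat{\mathcal L}(\w;\tilde\q,\hat{\mathbf S})$ and $\hat{\mathcal L}(\w;\q,\hat{\mathbf S})$, which is the deterministic approximation error caused by using $\tilde\q$ instead of $\q$. I would split $|\hat{\mathcal L}(\w;\tilde\q,\hat{\mathbf S}) - \L(\w)|$ by the triangle inequality into these two terms and handle them separately; the term $\mathcal E_\w(\tilde\q,\q;\hat{\mathbf S})$ in the statement is exactly what controls (ii), so most of the work is in (i).

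\textbf{Controlling the approximation term (ii).} Write $\hat{\mathcal L}(\w;\tilde\q,\hat{\mathbf S}) - \hat{\mathcal L}(\w;\q,\hat{\mathbf S}) = \frac{\tau}{n}\sum_{i=1}^n \log\frac{\tilde g(\w;\x_i,\hat{\mathbf Y})}{\hat g(\w;\x_i,\hat{\mathbf Y})}$, where $\hat g$ uses $\q$ and $\tilde g$ uses $\tilde\q$. Since $\tilde g = \sum_j \exp(E_\w(\x_i,\y_j)/\tau)/\tilde q^{(j)}$, I would lower-bound $\hat g$ by its diagonal term (or just by a positive constant using $q^{(j)}=\Omega(n)$ and $E_\w\in[-1,1]$) so that $\hat g \ge \Omega(\exp(E_\w(\x_i,\y_i)/\tau))$ up to constants, and then use $|\log(1+t)|\le |t|$ together with $\tilde g - \hat g = \sum_j (1/\tilde q^{(j)} - 1/q^{(j)})\exp(E_\w(\x_i,\y_j)/\tau)$ and $E_\w(\x_i,\y_j)\le 1$ to get $|\hat{\mathcal L}(\w;\tilde\q,\hat{\mathbf S}) - \hat{\mathcal L}(\w;\q,\hat{\mathbf S})| \le O(\mathcal E_\w(\tilde\q,\q;\hat{\mathbf S}))$. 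This step is routine given the $\Omega(n)$ lower bound on $q^{(j)}$.

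\textbf{Controlling the statistical term (i).} The quantity $\hat{\mathcal L}(\w;\q,\hat{\mathbf S})$ is not a sum of i.i.d.\ terms because the denominator $\hat g(\w;\x_i,\hat{\mathbf Y})$ couples all samples, so I would proceed in two substeps. First, replace $\hat g(\w;\x_i,\hat{\mathbf Y})$ by its conditional expectation $g(\w;\x_i,\Y) = \int_\Y \exp(E_\w(\x_i,\y')/\tau)d\mu(\y')$: by unbiasedness of the MIS estimator (stated in App.~\ref{sec:mis_general}), $\E[\hat g \mid \x_i] = g$, and a concentration / bounded-difference argument using $q^{(j)}=\Omega(n)$ (which forces each summand $\exp(E_\w(\x_i,\y_j)/\tau)/q^{(j)} = O(1/n)$, so changing one $\y_j$ perturbs $\hat g$ by $O(1/n)$) gives $|\hat g - g| = O(1/n + \sqrt{\log(1/\delta)/n})$ uniformly, and after passing through the $\log$ (again bounded below) this contributes the $O(1/n + \sqrt{\log(1/\delta)/n})$ terms. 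Second, what remains is $|\frac{1}{n}\sum_i \tau\log\frac{\exp(E_\w(\x_i,\y_i)/\tau)}{g(\w;\x_i,\Y)} - \L(\w)|$, now a genuine i.i.d.\ average versus its expectation: bound it by a uniform-convergence argument over $\w\in\W$ via Rademacher complexity, using Assumptions~\ref{asm:bounded}–\ref{asm:nn} to bound the loss range and Lipschitz constant, the contraction lemma, and the standard Rademacher complexity bound for $L$-layer homogeneous networks (which yields the $\sqrt{d_L/n}$ scaling). Combining the three pieces via the triangle inequality gives \eqref{eq:gen}.

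\textbf{Main obstacle.} The delicate part is the first substep of (i): making the replacement of $\hat g$ by $g$ \emph{uniform in $\w$} while keeping the right sample-size dependence. One must either (a) establish a uniform (over $\w\in\W$) deviation bound for $\hat g(\w;\x_i,\hat{\mathbf Y})$ around $g(\w;\x_i,\Y)$ — which itself needs a covering/Rademacher argument on the class $\{\y\mapsto \exp(E_\w(\x,\y)/\tau)\}$ — or (b) restructure the whole generalization bound as a single Rademacher complexity computation for the composite loss class $\{(\x,\y,\hat{\mathbf S})\mapsto \hat{\mathcal L}(\w;\q,\hat{\mathbf S})\text{-type terms}\}$ and handle the coupling through the bounded-difference property directly. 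Either way, verifying that the $q^{(j)} = \Omega(n)$ assumption propagates correctly so that all the ``$1/n$'' and ``$1/\sqrt n$'' rates come out as claimed — rather than, say, $O(1)$ — is where the care is needed; the neural-network Rademacher bound itself is off-the-shelf.
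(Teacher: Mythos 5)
Your decomposition (triangle inequality into an approximation term $|\hat{\L}(\w;\tilde\q,\hat{\mathbf S})-\hat{\L}(\w;\q,\hat{\mathbf S})|$ and a statistical term $|\hat{\L}(\w;\q,\hat{\mathbf S})-\L(\w)|$) is natural but is not what the paper does, and the obstacle you flag at the end is a genuine gap rather than a detail. The paper's key move, missing from your plan, is to apply the elementary inequality $\log x\le x-1$ to the ratio of partition estimates \emph{before} any splitting. Up to a constant coming from a lower bound on the true partition integral, this gives
$\hat{\L}(\w;\tilde\q,\hat{\mathbf S})-\L(\w)\le \big(\E_{\x,\y}[E_\w]-\tfrac1n\sum_iE_\w(\x_i,\y_i)\big)+\underline C\big(\tfrac1n\sum_{i,j}\exp(\bar{E}_\w(\x_i,\y_j))/\tilde{q}^{(j)}-\E_\x\int_\Y\exp(\bar{E}_\w)\,d\mu\big)$
with a matching reverse bound using a second constant, and the error term $\mathcal{E}_\w(\tilde\q,\q;\hat{\mathbf S})$ then falls out by adding and subtracting $1/q^{(j)}$ inside the double sum. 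The payoff of the linearization is that the logarithm no longer wraps a random partition estimate: the double sum $\tfrac1n\sum_{i,j}\exp(\bar{E}_\w(\x_i,\y_j))/q^{(j)}$ is concentrated directly as a degree-2 U-statistic, so one never needs the pointwise bound $|\hat{g}(\w;\x_i,\hat{\mathbf Y})-g(\w;\x_i,\Y)|$ that your substep~(i.a) calls for. As written, that substep would either force a union bound over the $n$ anchors, inflating the rate to $\sqrt{\log(n/\delta)/n}$, or, if you average over $i$ to avoid the union bound, you are back to a U-statistic anyway and the detour through pointwise control of $\hat{g}$ has bought nothing. Also note that $|\log(1+t)|\le|t|$ fails for $t<0$, which is part of why the paper works one direction at a time with $\log x\le x-1$ and two different constants.

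Two further ingredients are absent from your plan and are needed to close the argument. First, $\x_i$ and $\y_i$ are dependent and $q^{(j)}=\sum_{j'}p(\y_j\mid\x_{j'})$ depends on all of $\hat{\mathbf X}$, so ordinary McDiarmid does not apply to the double sum; the paper invokes a McDiarmid-type inequality for graph-dependent random variables (Theorem~3.6 of Zhang et al., 2019). Second, to reduce the U-statistic to an ordinary Rademacher complexity the paper uses its representation as an average over permutations $s\in S_n$ of a sum of $n/2$ independent terms (following Cl\'emen\c{c}on et al.\ and Waida et al.), and it is precisely here that $q^{(j)}=\Omega(n)$ is used: it keeps each summand $O(1/n)$ so both the bounded-difference constants and the contraction step yield the stated $1/\sqrt{n}$ rate. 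Your Term~I (the alignment piece $\E[E_\w]-\tfrac1n\sum_iE_\w(\x_i,\y_i)$) and the $L$-layer homogeneous-network Rademacher bound giving $\sqrt{d_L/n}$ are handled essentially as you describe, so those parts of the plan are sound; the gap is specifically in how the partition term is concentrated.
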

\begin{remark}
(i) The uniform approximation $\tilde{q}^{(j)} = nc$ for all $j\in[n]$ used by the GCL leads to a {\bf non-diminishing} error term $\mathcal{E}(\tilde{\q},\q;\hat{\mathbf{S}})$  for solving DPM over a continuous domain;  
(ii) Moreover, the error term $\mathcal{E}(\tilde{\q},\q;\hat{\mathbf{S}})$ vanishes when $\Y$ is a finite set. Then, the result reproduces the classical result in the literature for supervised learning~\citep{boucheron2005theory}. 
\end{remark}

\subsection{Non-parametric Method for Approximating the Measure of Popularity}\label{sec:noparam}

In Section~\ref{sec:gen}, we show that simply choosing a uniform $\tilde{\q}$ as in GCL to approximate the measure of popularities $\q$ (i.e., the sum of conditional probability densities) leads to a non-diminishing term in generalization error.  Next, we present a new way to approximate the measure of popularities 
$\q$. For brevity, we denote by $E(\cdot,\cdot)=E_{\w_*}(\cdot,\cdot)$ that corresponds to the real conditional density $p(\y\mid\x)=p_{\w_*}(\y\mid\x) = \frac{\exp(E_{\w_*}(\x,\y)/\tau)}{\int_\Y \exp(E_{\w_*}(\x,\y')/\tau)d\mu(\y')}$. Thus, for any $j\in[n]$ we have
\begin{align}\label{eq:sum_prob}
q^{(j)} \!=\! \sum_{j'=1}^n p(\y_j\mid \x_{j'}) \!=\! \sum_{j'=1}^n \frac{\exp(E(\x_{j'},\y_j)/\tau)}{\int_\Y \exp(E(\x_{j'},\y)/\tau) d\mu(\y)} \!\stackrel{\diamondsuit}{\approx} \!\sum_{j'=1}^n \frac{\exp(E(\x_{j'},\y_j)/\tau)}{\sum_{i'=1}^n \frac{1}{q^{(i')}} \exp(E(\x_{j'},\y_{i'})/\tau)},
\end{align}
where the last step $\diamondsuit$ is due to the MIS-based Monte Carlo integration and its error decreases when $n$ increases (See Prop.~\ref{prop:vg} in App.~\ref{sec:mis_general}). 
Note that~\eqref{eq:sum_prob} can be expressed as a root-finding problem $\q = \mathbf{F}(\q)$ for some mapping $\mathbf{F}:\R^n \to \R^n$. Since directly solving~\eqref{eq:sum_prob} is expensive, we propose a non-parametric method to approximate $\q$ by solving the following convex optimization problem:
\begin{align}\label{eq:prob_zeta}
\min_{\boldsymbol{\zeta}\in\R^n} \left\{-\frac{1}{n}\sum_{i=1}^n \tau \log \left(\frac{\exp(E(\x_i,\y_i)/\tau)}{\sum_{j=1}^n \exp((E(\x_i,\y_j)-\zeta^{(j)})/\tau)}\right) + \frac{1}{n}\sum_{j=1}^n \zeta^{(j)}\right\}.
\end{align}
 The following theorem characterizes the set of optimal solutions to~\eqref{eq:prob_zeta} and its relationship to $\q$.
\begin{theorem}\label{thm:nonparam_sol}
The optimal solution to \eqref{eq:prob_zeta} is unique up to an additive scalar. In other words,  if $\boldsymbol{\zeta}_*$ belongs to the optimal solutions set $\Z_*$ of \eqref{eq:prob_zeta}, then $\boldsymbol{\zeta}_* + c\mathbf{1}_n$ is also in $\Z_*$ for any $c\in\R$. Moreover, if we define the set $\Q_* \coloneqq \{\exp(\boldsymbol{\zeta}_*/\tau)
\mid\boldsymbol{\zeta}_* \in \Z_*\}$, then any $\bar{\q} \in \Q_*$ satisfies the following equation
\begin{align}\label{eq:opt_zeta} 
\bar{q}^{(j)}= \sum_{j'=1}^n \frac{\exp(E(\x_{j'},\y_j)/\tau)}{\sum_{i'=1}^n\frac{1}{\bar{q}^{(i')}} \exp(E(\x_{j'},\y_{i'})/\tau)},\quad \forall j\in[n]. 
\end{align}
and the set $\Q_*$ is closed under scalar multiplication, i.e., any $\bar{\q}\in\Q_*$ and $C>0$ implies $C\bar{\q} \in \Q_*$. 
\end{theorem}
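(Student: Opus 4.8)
\quad The plan is to study the objective of \eqref{eq:prob_zeta}, which, after dropping the $\boldsymbol{\zeta}$-independent constant $-\frac{1}{n}\sum_{i=1}^n E(\x_i,\y_i)$, equals
\[
G(\boldsymbol{\zeta}) \;=\; \frac{\tau}{n}\sum_{i=1}^n \log\!\Big(\sum_{j=1}^n \exp\!\big((E(\x_i,\y_j)-\zeta^{(j)})/\tau\big)\Big) \;+\; \frac{1}{n}\sum_{j=1}^n \zeta^{(j)}.
\]
The first ingredient is \emph{shift invariance}: replacing $\boldsymbol{\zeta}$ by $\boldsymbol{\zeta}+c\mathbf{1}_n$ scales every inner sum by $e^{-c/\tau}$, so the log-sum-exp term decreases by $c$ while the linear term increases by $c$; hence $G(\boldsymbol{\zeta}+c\mathbf{1}_n)=G(\boldsymbol{\zeta})$ for all $c\in\R$. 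This immediately gives that $\Z_*$ is invariant under adding $c\mathbf{1}_n$ (the claim $\boldsymbol{\zeta}_*+c\mathbf{1}_n\in\Z_*$), and --- once $\Q_*$ is shown to be nonempty below --- that $\Q_*$ is closed under multiplication by any $C>0$, since $C\exp(\boldsymbol{\zeta}_*/\tau)=\exp\!\big((\boldsymbol{\zeta}_*+\tau(\log C)\mathbf{1}_n)/\tau\big)$ with $\boldsymbol{\zeta}_*+\tau(\log C)\mathbf{1}_n\in\Z_*$.

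For existence and uniqueness up to a shift, observe that $G$ is smooth and convex (each inner term is a log-sum-exp of affine functions of $\boldsymbol{\zeta}$). On the hyperplane $H=\{\boldsymbol{\zeta}\in\R^n:\sum_{j}\zeta^{(j)}=0\}$ the linear term vanishes, and the bound $\log\sum_{j}\exp((E(\x_i,\y_j)-\zeta^{(j)})/\tau)\ge\tfrac1\tau\max_{j}(E(\x_i,\y_j)-\zeta^{(j)})$ yields $G(\boldsymbol{\zeta})\ge A-\min_{j}\zeta^{(j)}$ on $H$, where $A\coloneqq\min_{i,j}E(\x_i,\y_j)$ is finite (there are finitely many pairs); since $-\min_{j}\zeta^{(j)}\to\infty$ as $\|\boldsymbol{\zeta}\|\to\infty$ within $H$, $G|_H$ is coercive and attains a minimizer, and shift invariance gives $\min_{\R^n}G=\min_H G$, so $\Z_*\neq\emptyset$. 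To pin down $\Z_*$ I would use the Hessian: with softmax weights $p^{(i)}_j\coloneqq\exp((E(\x_i,\y_j)-\zeta^{(j)})/\tau)\big/\sum_{k}\exp((E(\x_i,\y_k)-\zeta^{(k)})/\tau)>0$,
\[
\nabla^2 G(\boldsymbol{\zeta})\;=\;\frac{1}{n\tau}\sum_{i=1}^n\Big(\diag(\p^{(i)})-\p^{(i)}(\p^{(i)})^\top\Big)\;\succeq\;0,
\]
and each summand is the covariance matrix of a full-support distribution on $[n]$, so it has kernel exactly $\spn{\mathbf{1}_n}$; hence $\nabla^2 G\succ 0$ on $\mathbf{1}_n^\perp$, which is the tangent space of $H$, so $G|_H$ is strictly convex and has a \emph{unique} minimizer $\boldsymbol{\zeta}_0$. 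For any $\boldsymbol{\zeta}_*\in\Z_*$, the projection $\mathrm{proj}_H(\boldsymbol{\zeta}_*)$ differs from $\boldsymbol{\zeta}_*$ by a multiple of $\mathbf{1}_n$, so it lies in $\Z_*\cap H=\{\boldsymbol{\zeta}_0\}$; therefore $\boldsymbol{\zeta}_*\in\boldsymbol{\zeta}_0+\spn{\mathbf{1}_n}$, which is exactly uniqueness up to an additive scalar, and in particular $\Q_*\neq\emptyset$.

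The fixed-point identity \eqref{eq:opt_zeta} then follows from Fermat's rule: since $G$ is convex, differentiable, and minimized over all of $\R^n$, a point $\boldsymbol{\zeta}_*$ lies in $\Z_*$ iff $\nabla G(\boldsymbol{\zeta}_*)=0$, i.e.\ $\partial G/\partial\zeta^{(j)}=\tfrac1n-\tfrac1n\sum_{i=1}^n p^{(i)}_j=0$, equivalently $\sum_{i=1}^n p^{(i)}_j=1$ for all $j\in[n]$. Writing $\bar{q}^{(j)}=\exp(\zeta_*^{(j)}/\tau)$ turns each $p^{(i)}_j$ into the softmax over $j$ of $\exp(E(\x_i,\y_j)/\tau)/\bar{q}^{(j)}$; multiplying the stationarity condition $\sum_{i=1}^n p^{(i)}_j=1$ through by $\bar{q}^{(j)}$ then produces exactly \eqref{eq:opt_zeta} after relabeling the summation indices $i\mapsto j'$ and $k\mapsto i'$.

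I expect the crux to be the uniqueness claim: one must both exhibit a minimizer --- via coercivity of $G$ restricted to a hyperplane transversal to $\mathbf{1}_n$, which is where finiteness of the values $E(\x_i,\y_j)$ enters --- and verify that the kernel of $\nabla^2 G$ is \emph{precisely} the shift direction $\spn{\mathbf{1}_n}$ and nothing larger, which is what forces $\Z_*$ to be a single line rather than a higher-dimensional affine set. The remaining pieces (shift invariance, closure of $\Q_*$ under positive scaling, and reading off \eqref{eq:opt_zeta} from $\nabla G=0$) are routine algebra.
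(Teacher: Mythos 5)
Your proof is correct and follows essentially the same strategy as the paper's: exploit shift invariance in the $\mathbf{1}_n$ direction, strict convexity orthogonal to it, and read off the fixed-point equation \eqref{eq:opt_zeta} from the first-order optimality condition after the change of variables $\bar q^{(j)}=\exp(\zeta_*^{(j)}/\tau)$. Your version is somewhat more explicit where the paper is terse --- you compute the Hessian $\frac{1}{n\tau}\sum_i(\diag(\p^{(i)})-\p^{(i)}(\p^{(i)})^\top)$ and identify its kernel as exactly $\spn{\mathbf 1_n}$ (the paper relegates this to a footnote about log-sum-exp being strictly convex off the diagonal direction), and you add a coercivity argument on the hyperplane $\sum_j\zeta^{(j)}=0$ to establish that a minimizer actually exists, a step the paper omits since its statement is conditional on $\boldsymbol{\zeta}_*\in\Z_*$.
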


\begin{remark}
Due to Theorem~\ref{thm:nonparam_sol} and the resemblance between \eqref{eq:sum_prob} and \eqref{eq:opt_zeta}, there exists a specific $\tilde{\q}\in\Q_*$ that approximates the real $\q$ in \eqref{eq:sum_prob}. Thus, we can approximate the real $\q$ (up to a scaling factor) by solving the optimization problem in~\eqref{eq:prob_zeta}. Specifically, for  $\tilde{\q}'=\exp(\boldsymbol{\zeta}_*/\tau)$ computed from any solution $\boldsymbol{\zeta}_*\in\Z_*$ to~\eqref{eq:prob_zeta} with finite $\ell_\infty$ norm, we can conclude that $\tilde{\q}'$ and $\tilde{\q}$ differ only by a scaling factor, i.e., $\tilde{\q}'=Z\tilde{\q}$ for some constant $0<Z<\infty$. There is no need to worry about the scaling factor $Z$ since minimizing the empirical risk $\hat{\L}(\w;\tilde{\q},\hat{\mathbf{S}})$ and the corresponding true risk $\L(\w)$ is equivalent to minimizing $\hat{\L}(\w;\tilde{\q}',\hat{\mathbf{S}}) = \hat{\L}(\w;\tilde{\q},\hat{\mathbf{S}}) + \tau \log(1/Z)$ and $\L(\w) + \tau \log(1/Z)$. 
\end{remark}
We design a synthetic experiment to verify the effectiveness of our non-parametric method. Consider anchor data space and $\X = \left\{(x_1,x_2)\mid x_1^2 + y_1^2\leq 1,x_1\in\left[-1,1\right],x_2\in\left[0,1\right]\right\}$ and contrast data space $\Y = \left\{(y_1,y_2)\mid y_1,y_2\in\left[0, 1\right]\right\}$.
Let $\x$ be uniformly distributed on $\X$ and the conditional density of an $\y\in\Y$ given $\x$ be $p(\y\mid \x) = \frac{\exp(E(\x,\y)/\tau)}{\int_\Y\exp(E(\x,\y')/\tau)d\mu(\y')}$, where $\tau = 0.2$ and $E(\x,\y) \coloneqq \x^\top \y$ and $\int_\Y\exp(E(\x,\y')/\tau)d\mu(\y')$ can be exactly computed. More details can be found in Appendix~\ref{sec:details_toy}.

\begin{figure}[htp]
\begin{subfigure}{0.26\textwidth}
\centering	\includegraphics[width=\linewidth]{./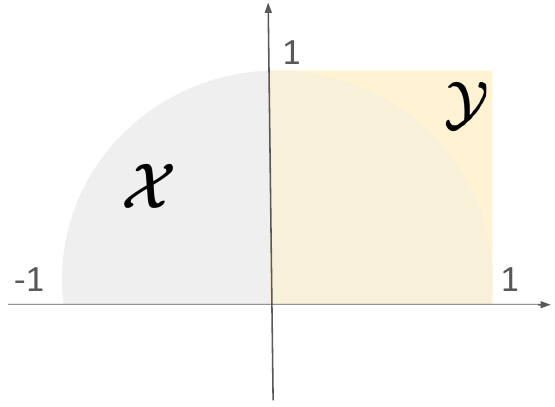}
\end{subfigure}
\begin{subfigure}{0.45\textwidth}
\centering	\includegraphics[width=\linewidth]{./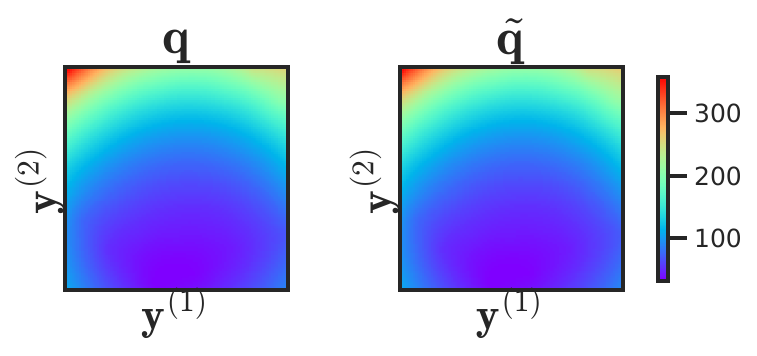}
\end{subfigure}
\hfill
  \begin{subfigure}[b]{0.28\textwidth}
    \centering    \includegraphics[width=0.99\linewidth]{./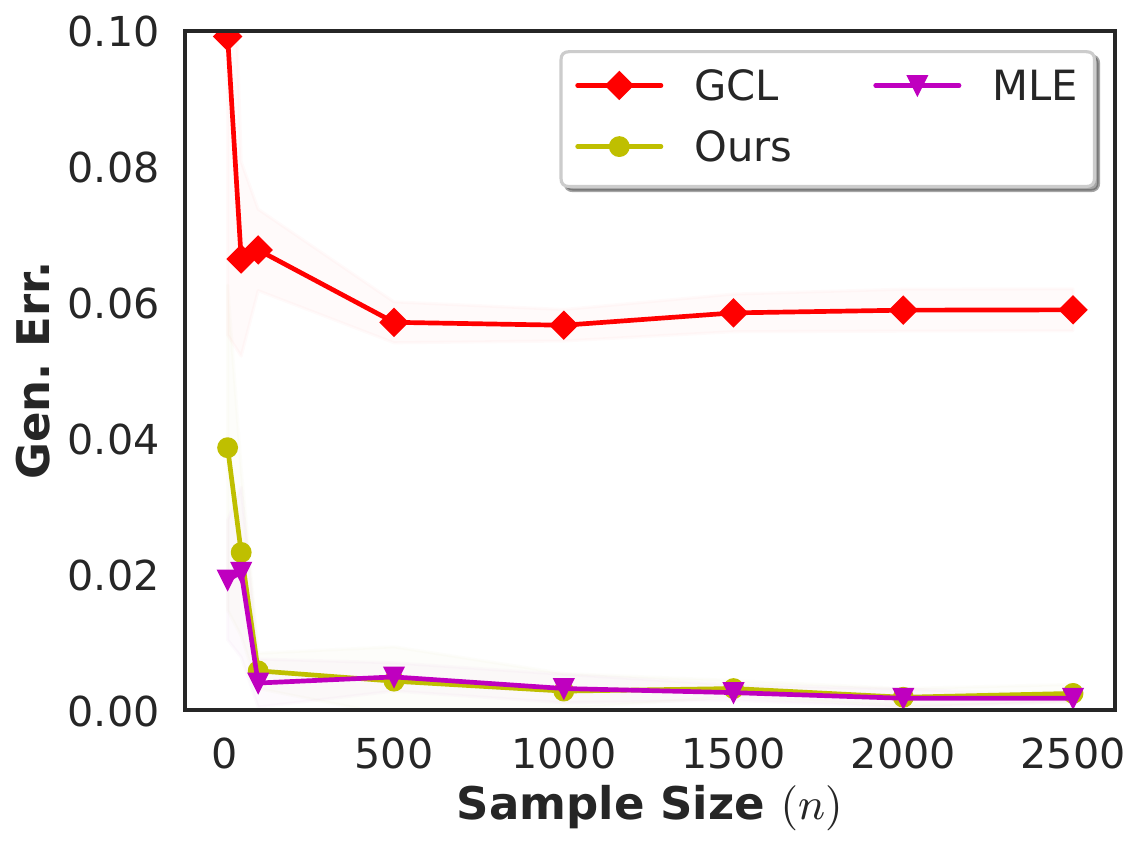}
  \end{subfigure}
\caption{{\bf Left:} Illustration of spaces $\X$ and $\Y$; {\bf Middle:} RBF interpolated heatmaps of the true $\q$ and approximation $\tilde{\q}$ when $n=100$; {\bf Right:} Comparing the generalization error $|\hat{\L}(\tilde{\q},\hat{\mathbf{S}}) - \L|$ of our method and GCL across various $n$. ``MLE'' refers to the MLE objective in \eqref{eq:mle} with the exact partition function.}
\label{fig:toy}
\end{figure}

As shown in the middle of Figure~\ref{fig:toy}, our method effectively approximates the true $\q$ up to a constant $Z$. Moreover, we plot the generalization error of different methods in the right column of Figure~\ref{fig:toy}, which confirms the result in Theorem~\ref{thm:gen} and  Remark~\ref{remark:unif_q_tilde} that the uniform approximation of $\q$ in GCL results in a non-diminishing term in generalization error as $n$ increases. In contrast, our method achieves a significantly smaller generalization error, almost matching the MLE objective in \eqref{eq:mle} with the exact partition function when $n$ increases. 

\subsection{The NUCLR Algorithm for Self-Supervised Representation Learning}\label{sec:alg}

By substituting the $\tilde{\q}$ from the non-parametric method described in Section~\ref{sec:noparam}, the problem of minimizing the empirical risk of DPM in \eqref{eq:mle_mis_approx} becomes
\begin{align*}
\min_{\w\in\W} \hat{\L}(\w;\hat{\mathbf{S}}),\quad \hat{\L}(\w;\hat{\mathbf{S}})\coloneqq -\frac{1}{n}\sum_{i=1}^n \tau \log \left( \frac{\exp(E_\w(\x_i,\y_i)/\tau)}{\sum_{j=1}^n\exp((E_\w(\x_i,\y_j)-\zeta_*^{(j)})/\tau)}\right),
\end{align*}
where $\boldsymbol{\zeta}_*$ is solved from \eqref{eq:prob_zeta}. Unfortunately, the true similarity function $E:\X\times \Y\rightarrow[-1,1]$ in \eqref{eq:prob_zeta} is unknown. To address this, we can adopt a Jacobi-type alternating minimization approach. In the $t$-th iteration, we replace the $E(\cdot,\cdot)$ in \eqref{eq:prob_zeta} by a fixed model $E_{\w_t}(\cdot,\cdot)$ and solve $\boldsymbol{\zeta}_{t+1}$ from
\begin{align}\label{eq:zeta_step}
\min_{\boldsymbol{\zeta}\in\R^n} \Phi_t(\boldsymbol{\zeta}),\quad \Phi_t(\boldsymbol{\zeta})\coloneqq \left\{-\frac{1}{n}\sum_{i=1}^n \tau \log \left(\frac{\exp(E_{\w_t}(\x_i,\y_i)/\tau)}{\sum_{j=1}^n \exp((E_{\w_t}(\x_i,\y_j)-\zeta^{(j)})/\tau)}\right) + \frac{1}{n}\sum_{j=1}^n \zeta^{(j)}\right\}.
\end{align}
Then, we fix the auxiliary variable $\boldsymbol{\zeta}$ to $\boldsymbol{\zeta}_t$ and solve $\w_{t+1}$ from the following problem:
\begin{align}\label{eq:w_step}
\min_{\w\in\W} \Psi_t(\w),\quad \Psi_t(\w)\coloneqq \left\{-\frac{1}{n}\sum_{i=1}^n \tau \log \left(\frac{\exp(E_\w(\x_i,\y_i)/\tau)}{\sum_{j=1}^n \exp((E_\w(\x_i,\y_j)-\zeta_t^{(j)})/\tau)}\right)\right\}.
\end{align}
However, solving the subproblems~\eqref{eq:zeta_step} and  \eqref{eq:w_step} exactly is computationally infeasible. Instead, we obtain the new iterate $(\w_{t+1},\boldsymbol{\zeta}_{t+1})$ by one step of gradient-based update\footnote{E.g., the gradient descent or momentum-based update. It could be extended to multiple steps.} from the previous iterate $(\w_t, \boldsymbol{\zeta}_t)$. Defining $\phi_i(\w,\boldsymbol{\zeta}) \coloneqq \sum_{j\neq i}  \exp((E_\w(\x_i,\y_j) - E_\w(\x_i,\y_i)-\zeta^{(j)})/\tau)$ and  $\varepsilon(\zeta^{(i)})\coloneqq \exp(-\zeta^{(i)}/\tau)$, the objectives $\Phi_t(\boldsymbol{\zeta})$ and $\Psi_t(\w)$ in \eqref{eq:zeta_step}  and \eqref{eq:w_step} can be rewritten as follows:
\begin{align*}
& \Phi_t(\boldsymbol{\zeta}) =\frac{1}{n}\sum_{i=1}^n \tau \log(\varepsilon(\zeta^{(i)})+\phi_i(\w_t,\boldsymbol{\zeta})) + \frac{1}{n}\sum_{j=1}^n \zeta^{(j)},~ \Psi_t(\w)= \frac{1}{n}\sum_{i=1}^n\tau \log (\varepsilon(\zeta_t^{(i)}) + \phi_i(\w,\boldsymbol{\zeta}_t)).
\end{align*}
Given a size-$B$ minibatch $\B_t \subset [n]$, the stochastic gradients of $\Phi_t(\boldsymbol{\zeta}_t)$, $\Psi_t(\w_t)$ can be computed as
\begin{align}\label{eq:stoc_grad_zeta}
 & \frac{\partial}{\partial \zeta^{(j)}}\Phi_t(\boldsymbol{\zeta}_t;\B_t) = \frac{1}{B}\sum_{i\in \B_t} \frac{\tau}{\varepsilon_t^{(i)} + \colorbox{gray!30}{$\phi_i(\w_t,\boldsymbol{\zeta}_t;\B_t)$}} \frac{\partial}{\partial \zeta^{(j)}}(\varepsilon_t^{(i)} + \phi_i(\w_t;\boldsymbol{\zeta}_t;\B_t))  + \frac{1}{n},\quad j\in\B_t,\\\label{eq:stoc_grad_w}
 & \nabla_\w\Psi_t(\w_t;\B_t) = \frac{1}{B}\sum_{i\in\B_t}\frac{\tau}{\varepsilon_t^{(i)} + \colorbox{gray!30}{$\phi_i(\w_t,\boldsymbol{\zeta}_t;\B_t)$}} \nabla_\w \phi_i(\w_t, \boldsymbol{\zeta}_t;\B_t),
\end{align}
where $\varepsilon(\zeta_t^{(i)})$ is shortened to $\varepsilon_t^{(i)}$, $\phi_i(\w_t; \boldsymbol{\zeta}_t;\B_t)\! = \!\frac{n-1}{B-1}\sum_{j\in\B_t\backslash\{i\}} \exp(\frac{E_{\w_t}(\x_i,\y_j) - E_{\w_t}(\x_i,\y_i)-\zeta_t^{(j)})}{\tau})$ is a stochastic estimator of $\phi_i(\w_t,\boldsymbol{\zeta}_t)$. However, $\frac{\partial}{\partial \zeta^{(j)}}\Phi_t(\boldsymbol{\zeta}_t;\B_t)$ and $\nabla_\w\Psi_t(\w_t;\B_t) $ are still \emph{biased} estimators of $\frac{\partial}{\partial \zeta^{(j)}}\Phi_t(\boldsymbol{\zeta}_t)$, $\nabla_\w \Psi_t(\w_t)$ because the gray parts are in the denominators, where the bias is small only when the batch size $B$ is large. To resolve this issue, we adopt the moving average technique from the SogCLR algorithm~\citep{yuan2022provable} to keep track of $\phi_i(\w,\boldsymbol{\zeta})$ for each $i\in[n]$. To be specific, we maintain a scalar $u^{(i)}$ for each $i$ and update $u_{t+1}^{(i)} =(1-\gamma)u_t^{(i)} + \gamma \phi_i(\w_t,\boldsymbol{\zeta}_t;\B_t)$ for those $i\in\B_t$
while fixing $ u_{t+1}^{(i)} = u_t^{(i)}$ for $i\notin \B_t$. 
Then, we obtain the following stochastic gradient estimators $G(\zeta_t^{(j)})$ and  $G(\w_t)$ by replacing the gray parts in \eqref{eq:stoc_grad_zeta} and \eqref{eq:stoc_grad_w} with $u_{t+1}^{(i)}$:
\begin{align}\label{eq:mavg_stoc_grad_zeta}
 &G(\zeta_t^{(j)})\coloneqq  \frac{1}{B}\sum_{i\in \B_t} \frac{\tau}{\varepsilon_t^{(i)} + u_{t+1}^{(i)}} \frac{\partial}{\partial \zeta^{(j)}}(\varepsilon_t^{(i)} + \phi_i(\w_t;\boldsymbol{\zeta}_t;\B_t))  + \frac{1}{n},\quad j\in\B_t,\\\label{eq:mavg_stoc_grad_w}
 & G(\w_t)\coloneqq \frac{1}{B}\sum_{i\in\B_t}\frac{\tau}{\varepsilon_t^{(i)} + u_{t+1}^{(i)}} \nabla_\w \phi_i(\w_t, \boldsymbol{\zeta}_t;\B_t),
\end{align}
We also adopt an additional trick to improve the quality of the model $\w$, especially during the early training epochs.
The denominator $\sum_{j=1}^n \exp((E_\w(\x_i,\y_j)-\zeta_t^{(j)})/\tau)$ in the objective $\Psi_t(\w)$ of ~\eqref{eq:w_step} can be viewed as the weighted version $\sum_{j=1}^n \varepsilon_t^{(j)}\exp((E_\w(\x_i,\y_j))/\tau)$ of the corresponding term in GCL, where $\varepsilon_t^{(j)}=\exp(-\zeta_t^{(j)}/\tau)$ can be viewed as the ``strength'' of pushing $\y_j$ away from $\x_i$. For less parameter tuning, we prefer initializing $\boldsymbol{\zeta}$ from the same value $\zeta_0\in\R$. Consequently, nearly equal weights are assigned to both the positive pair $(\x_i,\y_i)$ and negative pairs $\{(\x_i,\y_j)\}_{j\neq i}$ 
during early training epochs, which may slow down the learning process of model $\w$. To address this issue, we introduce a scalar $\xi_t=\max\{\xi_{t-1},\Norm{\boldsymbol{\zeta}_t}_\infty\}$ and reduce $\varepsilon_t^{(i)}$ in \eqref{eq:mavg_stoc_grad_w} of positive pair $(\x_i,\y_i))$ from $\exp(-\zeta_t^{(j)}/\tau)$ to $\exp(-\xi_t/\tau)$ to avoid aggresively pushing the positive data $\y_i$ away from $\x_i$.

\begin{algorithm}[t]
\caption{\alg~Algorithm for Self-Supervised Representation Learning}
\label{alg:nuclr}
\begin{algorithmic}[1] 
\State Initialize $\w_0$, $\u_0$, $\boldsymbol{\zeta}\geq \zeta_0 \mathbf{1}_n$ and set up $\xi_0\geq \zeta_0$,  $\eta$, $\gamma$
\For{$t=0,1\dotsc,T-1$}
\State Sample $\B_t\subset\{1,\dotsc,n\}$
\State Compute $G(\zeta_t^{(j)})$ according to \eqref{eq:mavg_stoc_grad_zeta} for $j\in\B_t$
\State Update $\zeta_{t+1}^{(j)}=\begin{cases} \zeta_t^{(j)} - \eta G(\zeta_t^{(j)})~ \text{(or a momentum/adaptive method)}, &j\in\B_t\\
\zeta_t^{(j)}, & j\notin\B_t \end{cases}$\label{eq:update_zeta}\label{alg:step_zeta}
\State Compute $G(\w_t)$ in \eqref{eq:mavg_stoc_grad_w} and update $\w_{t+1} = \w_t  -\eta G(\w_t)$ (or a momentum/adaptive method)  
\State Update $\xi_{t+1} = \max\{\xi_t,\|\boldsymbol{\zeta}_{t+1}\|_\infty\}$\label{alg:step_xi}
\EndFor
\end{algorithmic}
\end{algorithm}

Then, we can update $\boldsymbol{\zeta}$ and $\w$ based on $G(\zeta_t^{(j)})$ and  $G(\w_t)$. The full update procedure is in Algorithm~\ref{alg:nuclr}, referred to as \alg. The novelty of NUCLR lies in lines~\ref{alg:step_zeta} and~\ref{eq:zeta_step} of Algorithm~\ref{alg:nuclr}. If we fix $\boldsymbol{\zeta}_t=\mathbf{0}_n$ and $\xi_t = 0$, NUCLR becomes the SogCLR algorithm~\citep{yuan2022provable}. As discussed in App.~\ref{sec:discuss_alg_memory}, NUCLR incurs only minor computational and memory overheads compared to SogCLR. Moreover, we offer an intuitive margin-based interpretation of NUCLR in App.~\ref{sec:discuss_margin}.

\section{Experiments on Bimodal Representation Learning}\label{sec:exp}

\textbf{Settings:} In our experiments, we apply our algorithm to bimodal self-supervised representation learning on the Conceptual Captions (CC3M)~\citep{sharma2018conceptual} and Conceptual 12M (CC12M)~\citep{changpinyo2021conceptual} datasets. Because some data links have expired, our downloaded training set of CC3M contains $n=2,723,200$ image-text pairs, while that of CC12M contains $n=9,184,256$ image-text pairs. We evaluate the performance of trained models on downstream zero-shot image-text retrieval and image classification tasks. Retrieval performance is evaluated on the test splits of the Flickr30k~\citep{plummer2015flickr30k} and MSCOCO~\citep{lin2014microsoft} datasets, in terms of the average Recall@1 score of image-to-text and text-to-image retrievals. The top-1 classification accuracy is evaluated on the CIFAR100~\citep{krizhevsky2009cifar}, ImageNet1k~\citep{imagenet15russakovsky}, and ImageNet-R~\citep{hendrycks2021many} datasets. We compare our proposed \alg~algorithm with representative baselines  CLIP~\citep{radford2021learning}, SigLIP~\citep{zhai2023sigmoid}, DCL~\citep{chuang2020debiased}, CyCLIP~\citep{goel2022cyclip}, and SogCLR~\citep{yuan2022provable}\footnote{For CLIP and SigLIP, we adapt implementations from the OpenCLIP repository. For DCL and CyCLIP, we use the authors' official implementations. For SogCLR, we use the implementation in the updated codebase of \citet{qiu2023not}, which yields better results of SogCLR than that reported in~\citet{qiu2023not}.}
. All experiments utilize distributed data-parallel (DDP) training on two NVIDIA A100 GPUs with 40GB memory and the total batch size $B$ in each iteration is 512. Besides, we use ResNet-50 as the vision encoder and DistilBert as the text encoder. The output embedding of each encoder is projected by a linear layer into a 256-dimensional feature representation for computing the losses. We run each algorithm 3 times with different random seeds and each run contains 30 epochs. Hyperparameters of all algorithms are tuned based on the validation performance. The optimizer for the model parameter $\w$ is AdamW~\citep{loshchilov2017decoupled} with a weight decay of 0.02 and a cosine learning rate schedule~\citep{loshchilov2016sgdr}. For all algorithms, we choose a fixed temperature parameter $\tau$ tuned within $\{0.01, 0.03, 0.05, 0.07\}$. 
For SogCLR and~\alg, we set $\gamma = 0.8$ as in the SogCLR paper~ \citep{yuan2022provable}. For our \alg, we select $\zeta_0=-0.05$ on the CC3M dataset and $\zeta_0=0$ on the CC12M dataset. Besides, we freeze $\boldsymbol{\zeta}$ in the first 5 epochs and update $\boldsymbol{\zeta}$ by the SGDm optimizer with a cosine learning rate schedule. 

\begin{table}[htp]
	\centering 	\caption{A comparison of test performance. The best result in each column is highlighted in \textbf{black}.}
	\label{tab:test}	\scalebox{0.84}{		\begin{tabular}{cccccccc}
\toprule[.1em]
& & \multicolumn{2}{c}{Retrieval} & \multicolumn{3}{c}{Classification} & \\\cmidrule{3-4}\cmidrule{5-7}
 Dataset  & Algorithm & MSCOCO  & Flickr30k  & CIFAR100 & ImageNet1k &ImageNet-R & Mean  \\	\midrule
\multirow{6}{*}{CC3M}  & CLIP & 24.23 $\pm$ 0.14 & 46.33 $\pm$ 0.76 & 33.94 $\pm$ 0.87 & 35.91 $\pm$ 0.33 & 36.47 $\pm$ 0.40 & 35.38 \\
& DCL  & 24.44 $\pm$ 0.20 & 46.03 $\pm$ 0.75 & 32.78 $\pm$ 0.46 & 35.90 $\pm$ 0.20 & 36.11 $\pm$ 0.29 &  35.05 \\
 & SigLIP  & 23.21 $\pm$ 0.14 & 44.95 $\pm$ 0.45 & 35.70 $\pm$ 0.84 & 37.53 $\pm$ 0.09 & 39.64 $\pm$ 0.19 &  36.21 \\
& CyCLIP  & 24.47 $\pm$ 0.25 & 47.10 $\pm$ 0.83 & 37.27 $\pm$ 0.61 & 36.63 $\pm$ 0.04 & 37.83 $\pm$ 0.34 &   36.66 \\
 & SogCLR  & 28.54 $\pm$ 0.25 & 52.20 $\pm$ 0.64 & 35.50 $\pm$ 1.71 & 40.40 $\pm$ 0.12 & 42.65 $\pm$ 0.50 &  39.86 \\
 & \alg~(Ours)  &  \bf 29.55 $\pm$ 0.26 & \bf 53.55 $\pm$ 0.22 & \bf 37.45 $\pm$ 0.45& \bf 40.49 $\pm$ 0.30  & \bf 43.82 $\pm$ 0.25 & \bf 40.97 \\\midrule[.1em]
\multirow{6}{*}{CC12M}  & CLIP  & 30.30 $\pm$ 0.15 & 55.21 $\pm$ 0.45 & 25.35 $\pm$ 0.64 & 44.28 $\pm$ 0.22 & 46.84 $\pm$ 0.41 & 40.40 \\
& DCL  &  30.23 $\pm$ 0.21 & 54.63 $\pm$ 0.50 & 25.55 $\pm$ 0.61 & 44.32 $\pm$ 0.07 & 46.92 $\pm$ 0.41 &  40.33 \\
 & SigLIP  & 30.13 $\pm$ 0.45 & 55.40 $\pm$ 0.32 & 26.60 $\pm$ 1.89 & 46.12 $\pm$ 0.12 & 48.87 $\pm$ 0.46 & 41.42 \\
 & CyCLIP  & 30.35 $\pm$ 0.24 & 54.63 $\pm$ 0.20 & 26.71 $\pm$ 2.09 & 44.94 $\pm$ 0.02 &  48.66 $\pm$ 0.09 &  41.06 \\
 & SogCLR  &  33.91 $\pm$ 0.26 & 59.28 $\pm$ 0.07 & 26.10 $\pm$ 0.88 & \bf 49.82 $\pm$ 0.14 & 54.54 $\pm$ 0.24 &  44.73\\
& \alg~(Ours)   &  {\bf 34.36 $\pm$ 0.13} &  {\bf 60.45 $\pm$ 0.03} & {\bf 28.16 $\pm$ 1.35} & {\bf49.82 $\pm$ 0.23 }  & \bf 55.24 $\pm$ 0.51 &  \bf 45.61 \\
\bottomrule
\end{tabular}}
\end{table}

\begin{figure}[htp]
\begin{subfigure}[b]{0.24\textwidth}
\centering
		\includegraphics[width=0.99\linewidth]{./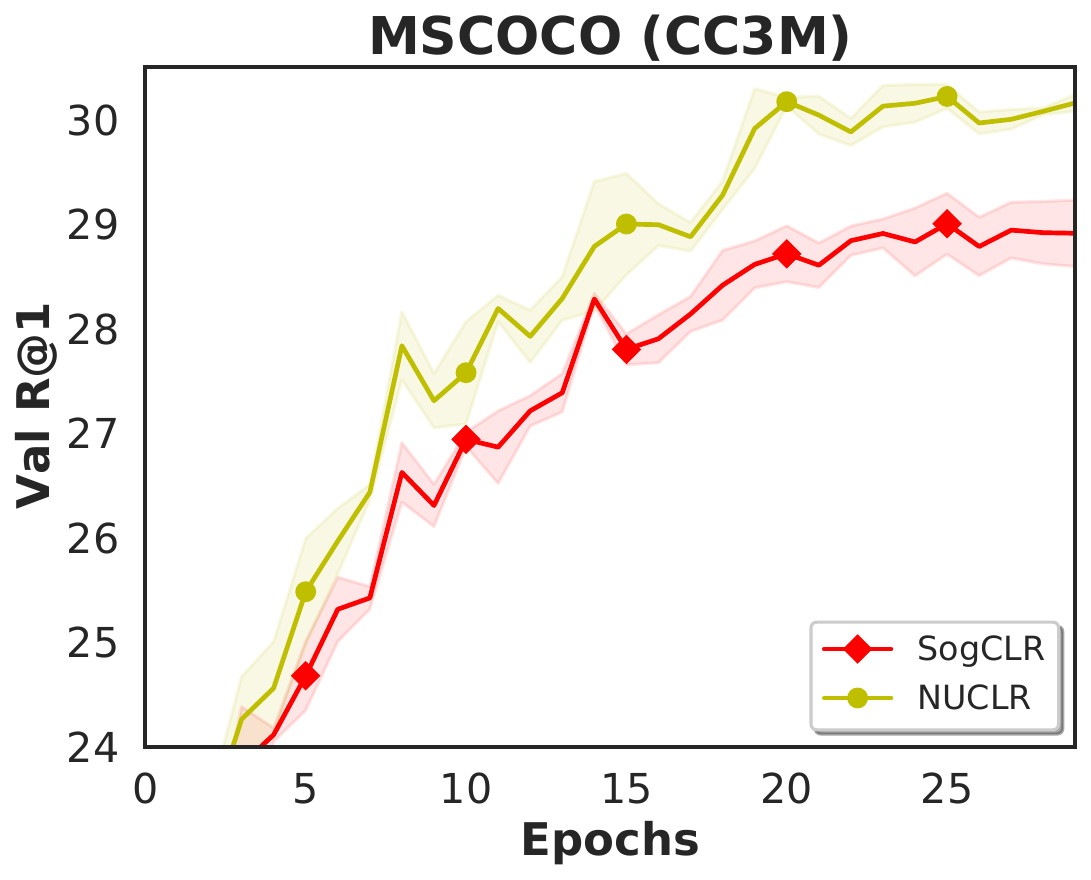}
\end{subfigure}
\begin{subfigure}[b]{0.24\textwidth}
		\centering
		\includegraphics[width=0.99\linewidth]{./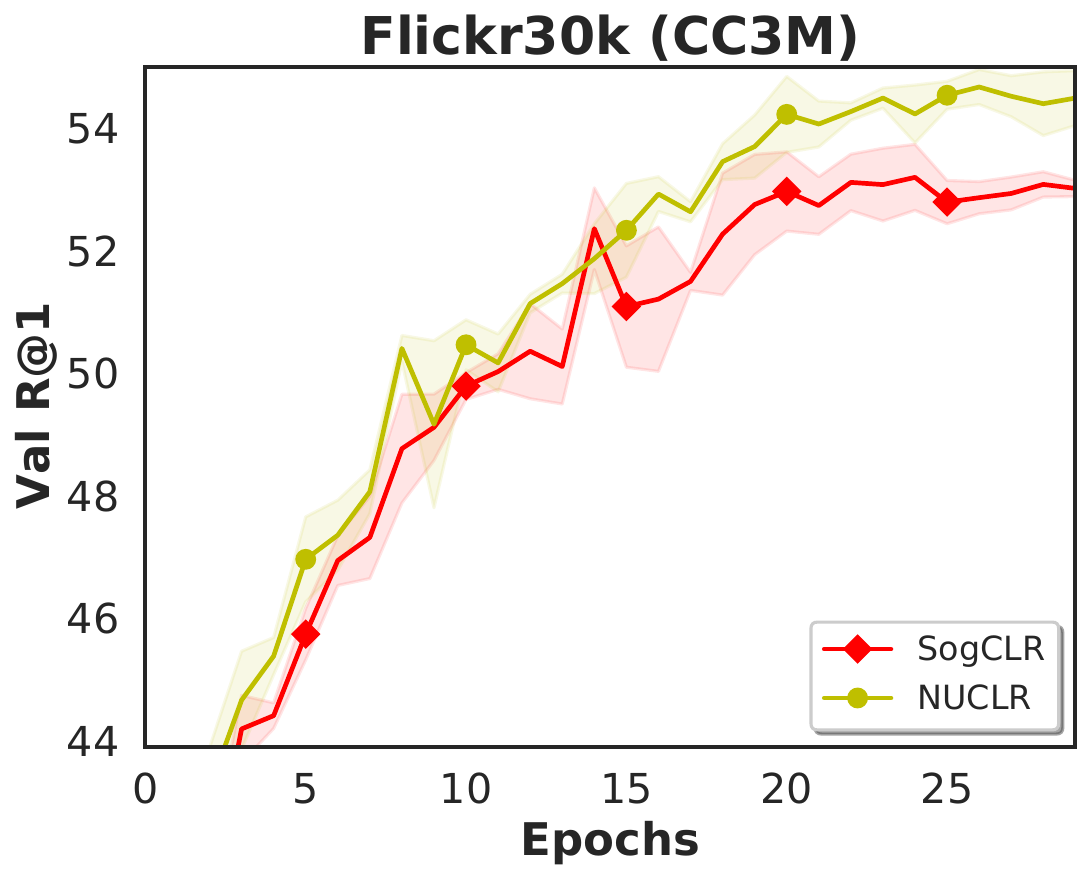}
\end{subfigure}
\begin{subfigure}[b]{0.24\textwidth}
		\centering
		\includegraphics[width=0.99\linewidth]{./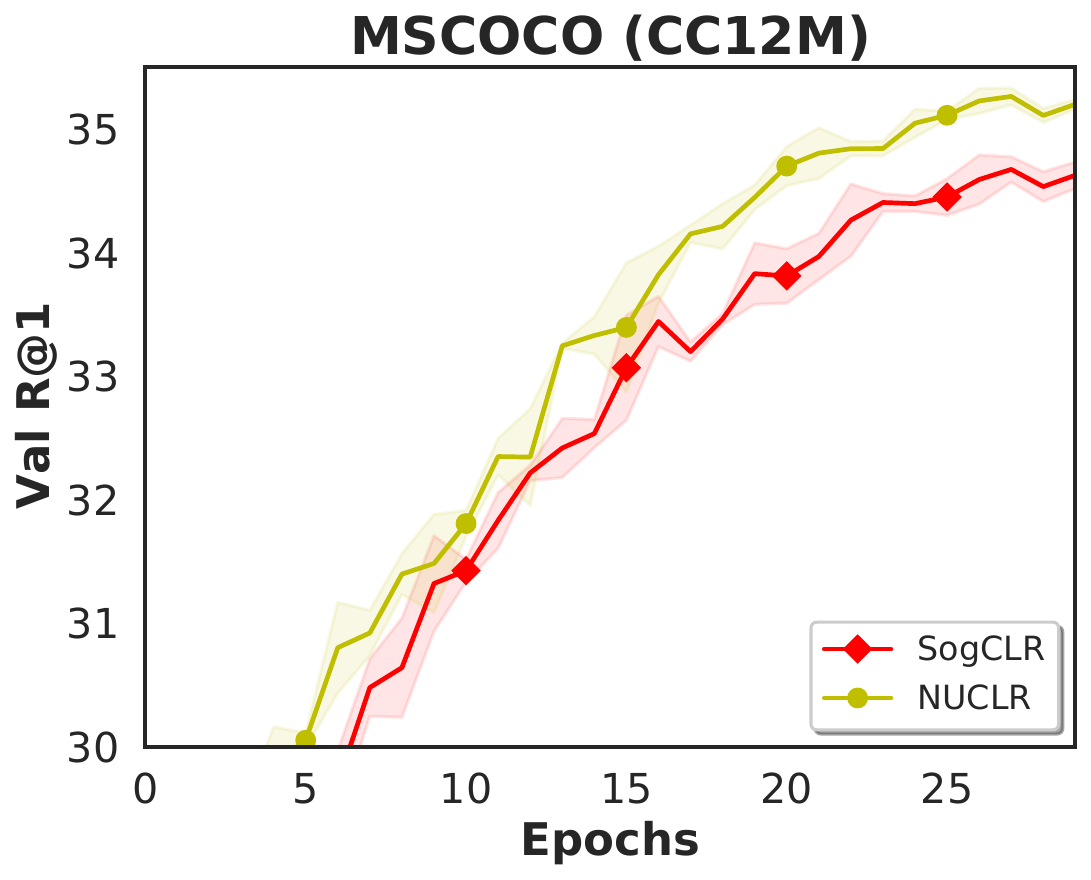}
\end{subfigure}
\begin{subfigure}[b]{0.24\textwidth}
		\centering
		\includegraphics[width=0.99\linewidth]{./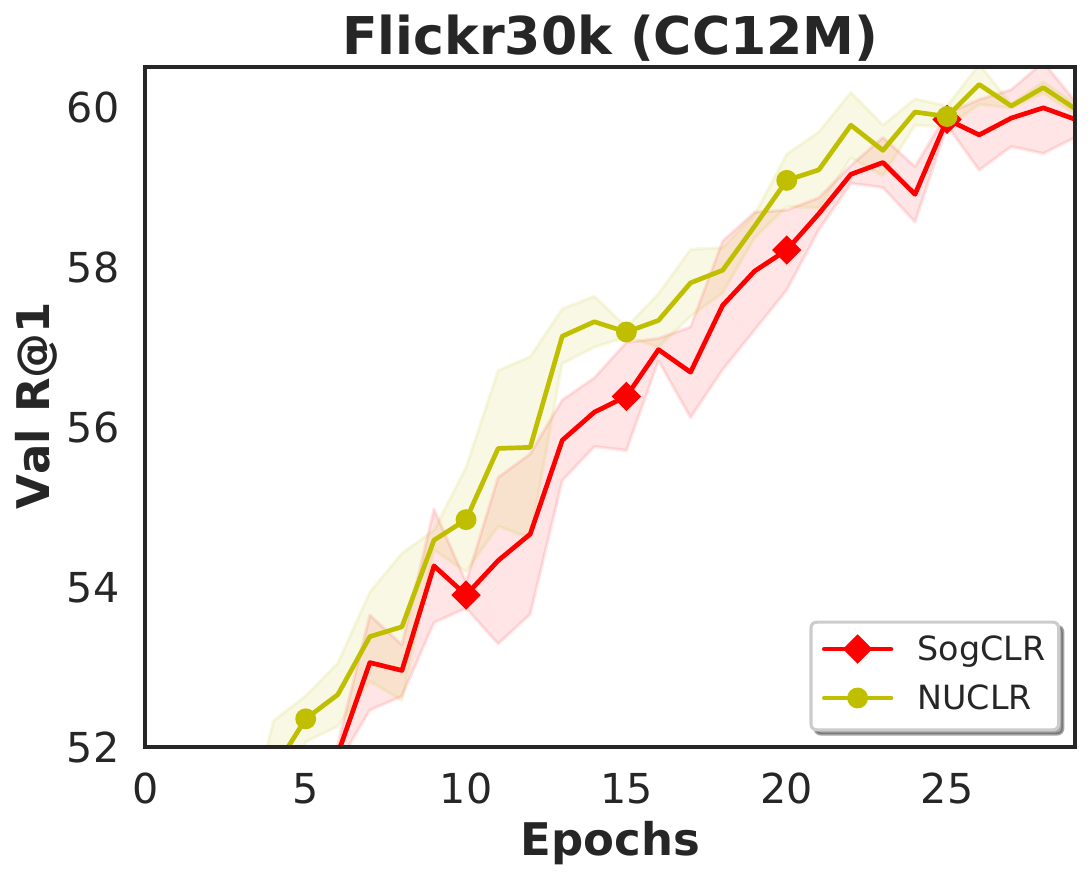}
\end{subfigure}
	\caption{Validation Recall@1 performance of our algorithm and baseline SogCLR during training on the CC3M (left two columns) and CC12M datasets (right two columns).}
	\label{fig:val_curves}
\end{figure}
\textbf{Comparison with Baselines:} We compare the test performance of our algorithm on downstream retrieval and classification tasks with various baselines in Table~\ref{tab:test}. Moreover, we compare the Recall@1 of our \alg~with that of SogCLR across the training epochs in Figure~\ref{fig:val_curves} on the validation sets of MSCOCO and Flickr30k. Compared to baselines, our \alg~achieves superior overall performance on downstream tasks. 
Notably, the performance gain of our NUCLR over SogCLR is substantially larger on the more challenging ImageNet-R dataset than on ImageNet1k.

\textbf{Ablation Study:} For further understanding the key components of Algorithm~\ref{alg:nuclr}, we compare the performance of {\bf NUCLR} against three variants: {\bf (i) NUCLR-$\dagger$}, which fixes $\boldsymbol{\zeta}$ to $\zeta_0\mathbf{1}_n$, i.e., removing step~\ref{alg:step_zeta} in Algorithm~\ref{alg:nuclr}; {\bf (ii) NUCLR-$\diamondsuit$}, which does not reduce the strength $\varepsilon_t^{(i)}$ of pushing away the $i$-th positive pair to $\exp(-\xi_t/\tau)$ as described in the penultimate paragraph of Section~\ref{sec:alg}; and {\bf (iii) NUCLR-$\clubsuit$}, which does not freeze $\boldsymbol{\zeta}$ in the first 5 epochs; to answer the following questions.

\textbf{(i)} What is the advantage of updating $\boldsymbol{\zeta}$ using gradient-based updates compared to fixing it at $\zeta_0$?

\textbf{(ii)} Is it beneficial to reduce the strength of pushing away each positive pair by introducing $\xi$? 

\textbf{(iii)} What is the benefit of freezing $\boldsymbol{\zeta}$ during the initial training epochs?

The results are shown in Figure~\ref{fig:ablation}. First, we can observe that \alg~with the learned $\boldsymbol{\zeta}$ as in Algorithm~\ref{alg:nuclr} outperforms NUCLR-$\dagger$ with a fixed $\boldsymbol{\zeta}=\zeta_0\mathbf{1}_n$. Moreover, we present some examples of images from CC3M with large or small $\tilde{q}'=\exp(\zeta/\tau)$ in Figure~\ref{fig:examples_1}, showing that the learned $\tilde{q}'$ effectively captures data popularity: Images depicting human life, such as portraits, pets, and landscapes, tend to be more popular than abstract symbols and shapes (see  App.~\ref{sec:more_examples} for more examples). Second, introducing $\xi$ to avoid pushing away positive pairs improves performance in both tasks compared to NUCLR-$\diamondsuit$. Lastly,  freezing $\boldsymbol{\zeta}$ during the first 5 epochs yields better performance, which can be considered as a warm-up stage for learning $\w$ before updating $\boldsymbol{\zeta}$. This improvement may arise because model $\w_t$ is far from $\w_*$ in the early stage of training, making the $\boldsymbol{\zeta}$ solved from \eqref{eq:zeta_step} less accurate. We did not tune the number of warm-up epochs as the chosen value consistently performs well across our experiments.
We provide more experimental results in App.~\ref{sec:add_exp_results}.

\begin{figure}[htp]
\begin{subfigure}{0.245\textwidth}
\centering	\includegraphics[width=\linewidth]{./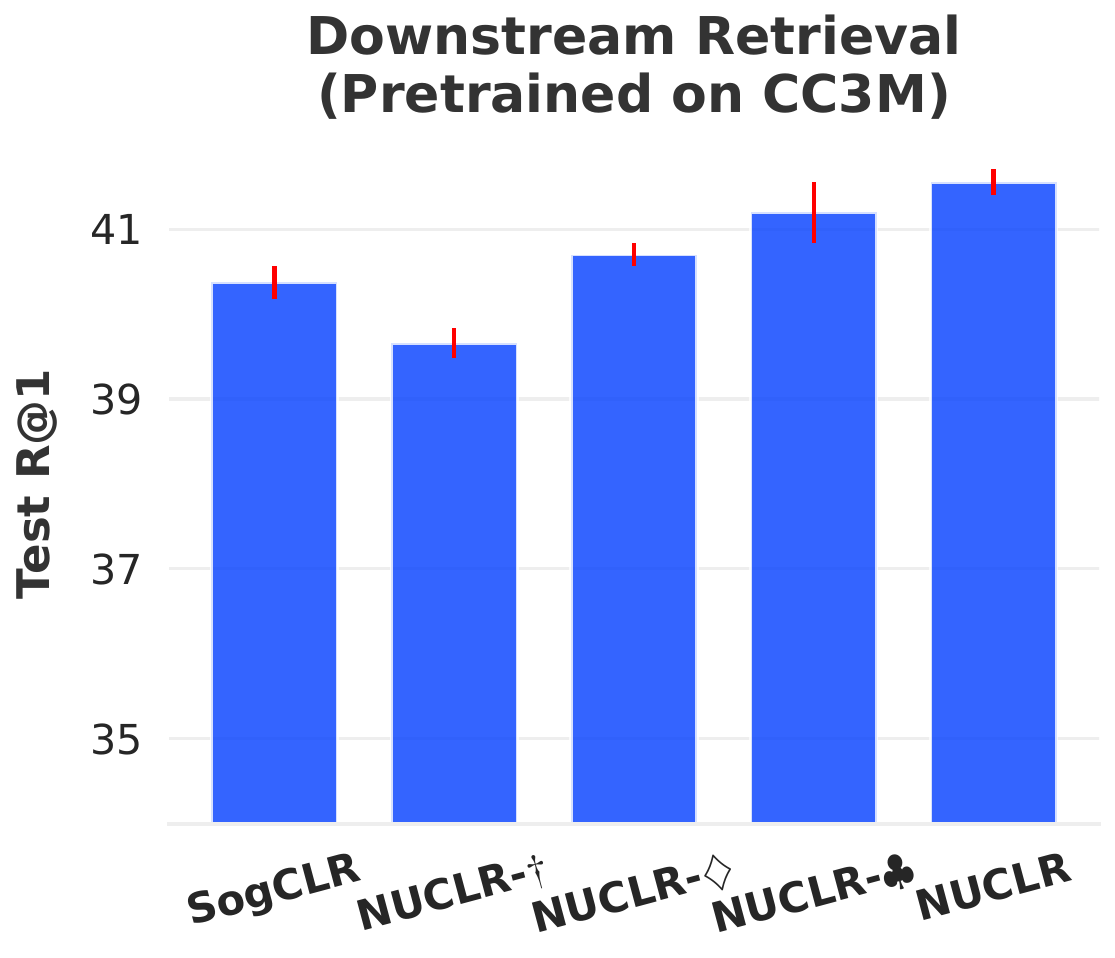}
\end{subfigure}
\begin{subfigure}{0.245\textwidth}
\centering	\includegraphics[width=\linewidth]{./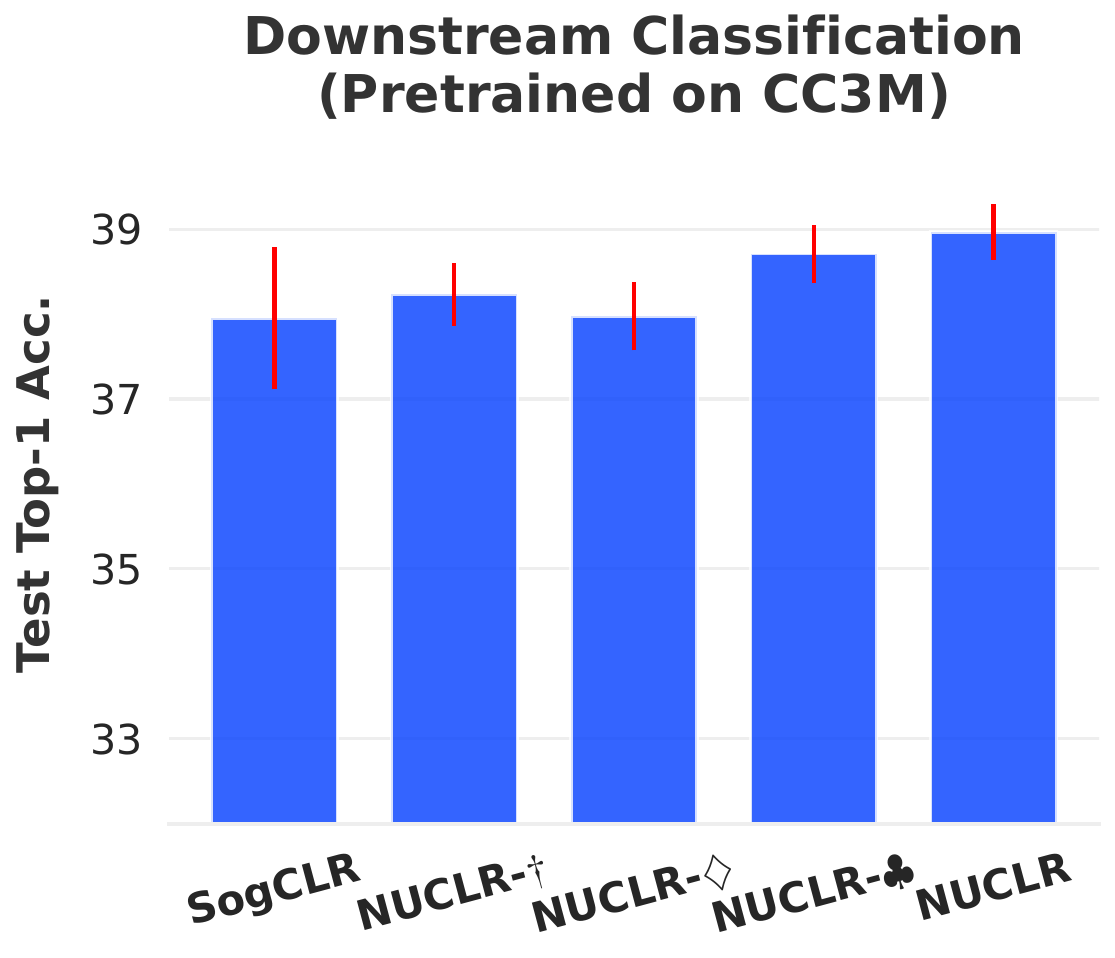}
\end{subfigure}
\begin{subfigure}{0.245\textwidth}
\centering	\includegraphics[width=\linewidth]{./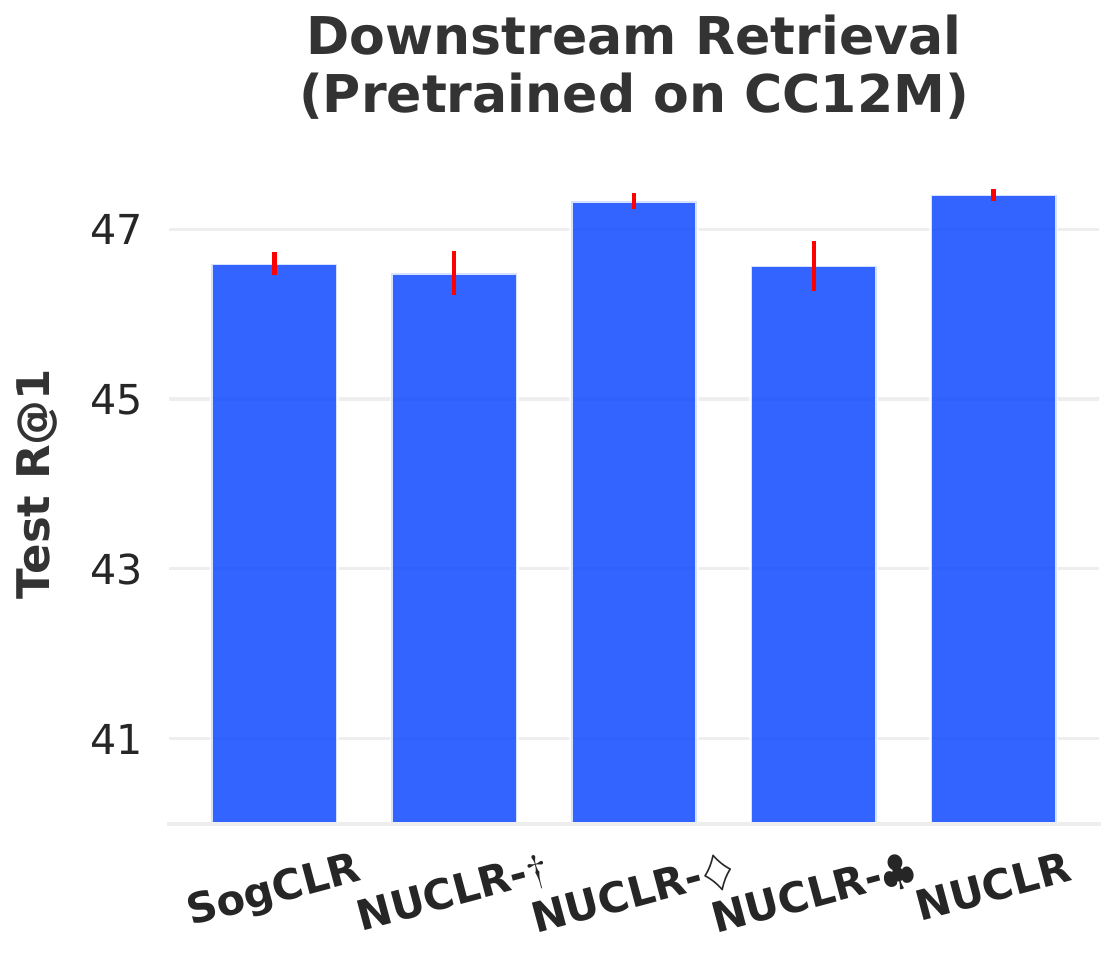}
\end{subfigure}
\hfill
  \begin{subfigure}[b]{0.245\textwidth}
    \centering    \includegraphics[width=0.99\linewidth]{./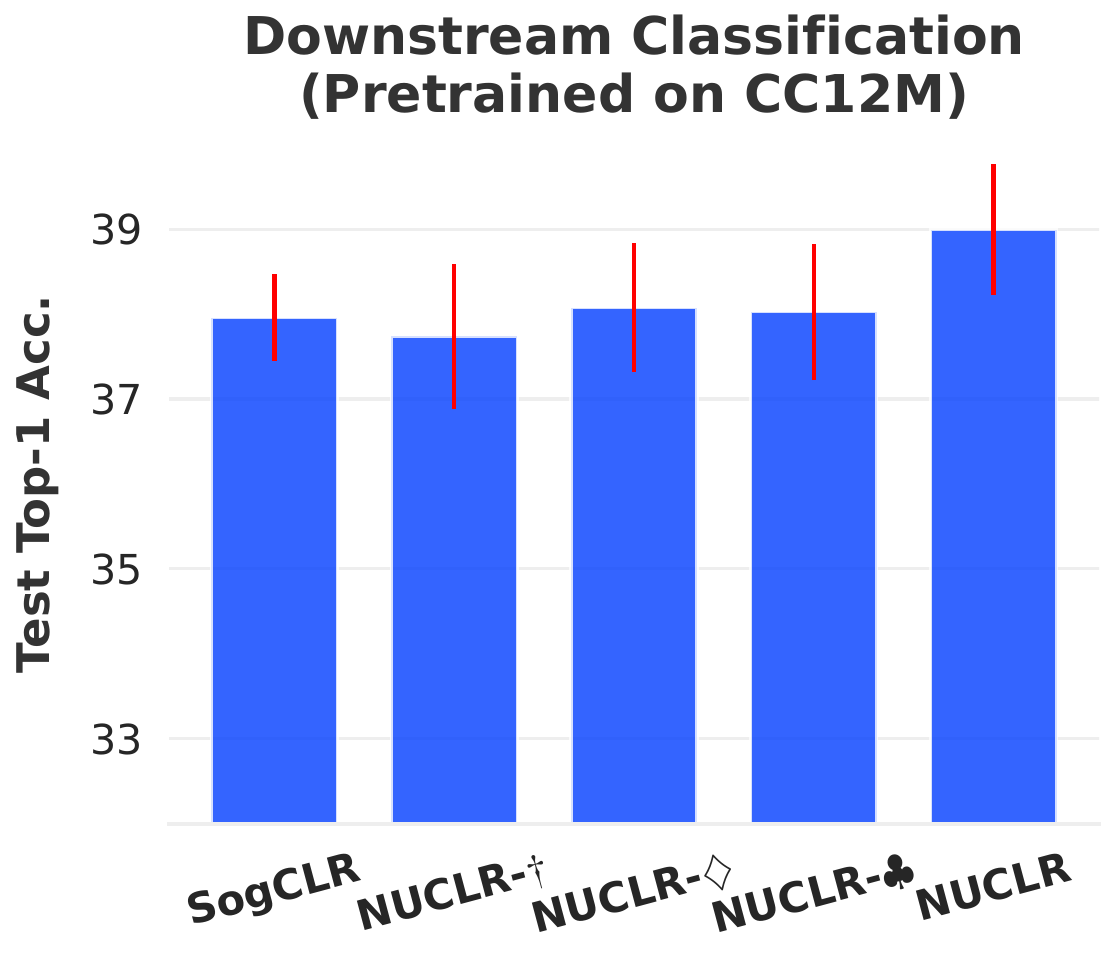}
  \end{subfigure}
\caption{Compare the NUCLR algorithm with variants NUCLR-$\dagger$, NUCLR-$\diamondsuit$, and NUCLR-$\clubsuit$. ``Downstream Retrieval'' refers to the average test recall@1 on MSCOCO and Flickr30k datasets; ``Downstream Classification'' refers to the average test top-1 accuracy on CIFAR100 and ImageNet1k datasets.}
\label{fig:ablation}
\end{figure}

\begin{figure}[htp]
\centering
\includegraphics[width=0.99\linewidth]{./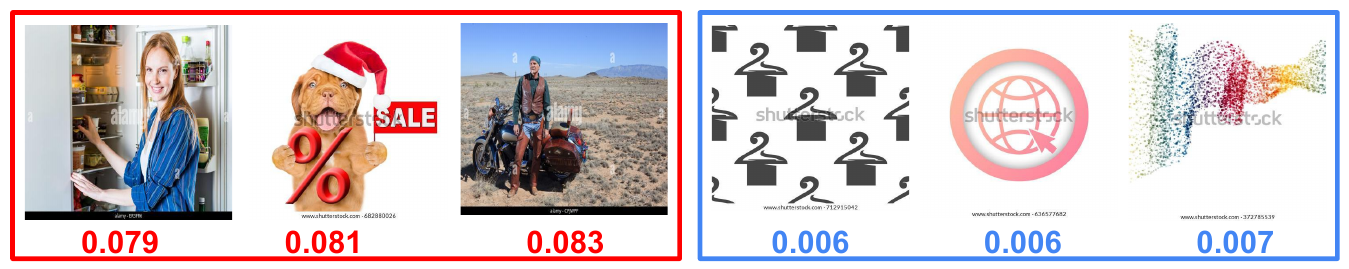}

 \caption{Examples of CC3M images with large (in red) and small learned popularities $\tilde{q}'$ (in blue).}
 \label{fig:examples_1}
\end{figure}

\section{Conclusion}
In this paper, we tackle the discriminative probabilistic modeling problem on a continuous space by leveraging the multiple importance sampling (MIS) method and proposing a novel nonparametric method for approximating the sum of conditional probability densities required by MIS, which yields a new contrastive loss
for self-supervised representation learning. Then, we design an efficient algorithm called \alg~to optimize our new loss. Experimental results on bimodal pretraining demonstrate the improved overall performance of our method compared to baseline approaches on downstream tasks. 
An open question remains whether we can learn a generative model in this discriminative framework  
that can conditionally generate high-quality new data. 
Additionally, our \alg~algorithm requires storing additional $2n$ floating-point numbers for language-image pretraining. Another open question is how to learn a neural network to predict the popularities.

\subsubsection*{Acknowledgments}
 B. Wang and T. Yang were partially supported by the NSF SCH grant 2306572.


\clearpage
\bibliographystyle{plainnat}
\bibliography{./bib/wang_draft,./bib/all, ./bib/learning}

\appendix

\section{Other Related Work}\label{sec:add_related}

In this section, we discuss previous research on discriminative and generative probabilistic modeling and theoretical analyses of self-supervised learning that are relevant to our work.

\subsection{Discriminative and Generative Probabilistic Models}

Over the past decades, numerous probabilistic models have been developed for machine learning and pattern recognition problems~\citep{grenander1970unified, wu2019tale}. Our work is closely related to previous works on discriminative probabilistic models for supervised and unsupervised learning, generative probabilistic models, and the recently proposed hybrid probabilistic models.

\subsubsection{Discriminative Probabilistic Models for Supervised Learning}

For supervised learning, discriminative probabilistic models learn the \emph{conditional} probability mass function $p(\y\mid\x)$ of $\y$ given data $\x$.  Since the 2010s, discriminative probabilistic models based on deep neural networks, such as ConvNets, have achieved tremendous success in supervised learning tasks~\citep{lecun2015deep}, where $\y$ is a class label from a finite set of categories. Despite the availability of million-scale datasets with label annotations~\citep{deng2009imagenet, megaface}, scaling of these models is hindered by the enormous cost of collecting even larger curated datasets with label annotations. 

\subsubsection{Generative Probabilistic Models}

Generative probabilistic models typically focus on modeling the marginal distribution $p(\x)$ of real data. In particular, an energy-based generative probabilistic model can be defined as $p_{\theta}(\x) = \frac{\exp(-E_\theta(\x))}{Z_\theta}$, where $E_\theta(\x)$ is called the energy parameterized by $\theta$ and $Z_\theta$ is the partition function~\citep{lecun2006tutorial}. The intractable partition function is the key challenge in training an energy-based generative probabilistic model. Markov Chain Monte Carlo (MCMC) sampling is a popular approach for calculating the gradient of the model parameter of the log-likelihood~\citep{younes1999convergence,hinton2002training,gao2018learning,du2019implicit,du2020improved}. However, MCMC sampling is computationally expensive.   MCMC-free approaches such as score-matching~\citep{hyvarinen2005estimation,song2020sliced,song2020improved,song2021train,song2020score} have been proposed to learn a model that can be used to generate the data through MCMC sampling in the inference phase. On one hand, generative modeling tasks are generally more challenging than discriminative ones. For discriminative tasks, such as classification and retrieval, it suffices to have the predicted conditional density $p(\cdot\mid\x)$ of a relevant $\y$ be higher than that of an irrelevant 
$\y$. In contrast, generative tasks require a much more accurate high-dimensional density estimation to produce high-quality new data. On the other hand, training and inference on generative models typically demand more time and computational resources compared to discriminative models with a similar number of parameters.

\subsubsection{Hybrid Probabilistic Models}

\citet{grathwohl2019your} propose the Joint Energy-based Model (JEM) to simultaneously perform discriminative and generative learning by maximizing the likelihood of the joint distribution $p(\x,\y)$ for each image-label pair $(\x,\y)$, which can be factorized into an MCC problem solved by minimizing the CE loss and a generative modeling problem solved by the stochastic gradient Langevin dynamic (SGLD). When label annotations are absent, \citet{wang2022unified} and \citet{kim2022energy} model the joint distribution $p(\x,\x')$ of a data point $\x$ and its augmented copy $\x'$. Specifically, \citet{wang2022unified} directly maximize the log-likelihood of the joint distribution $p(\x,\x')$ via adversarial learning, while \citet{kim2022energy} follow the idea of JEM to decompose the log-likelihood $\log p(\x,\x')$ into the sum of a generative term $\log p(\x)$ and a discriminative term $\log p (\x'\mid \x)$. In \citet{kim2022energy}, the gradient of the generative term is computed by multi-stage SGLD, and the discriminative term is approximated by the InfoNCE loss with a finite number of negative data. More recently, \citet{bizeul2024probabilistic} proposed the SimVAE method to model the \emph{joint} distribution of semantically related data (e.g., different augmented copies of an image). The SimVAE maximizes the evidence lower bound (ELBO) of the log-likelihood by introducing the implicit latent structure of SimCLR into the variational autoencoder.

\subsection{Theory of Self-Supervised Representation Learning}

In this section, we review additional papers on the theoretical properties of self-supervised representation learning from broader perspectives. The pioneering work by~\citet{arora2019theoretical} established a generalization bound for the minibatch-based InfoNCE loss that contrasts each positive pair with $k$ negative pairs in the sampled batch. They formally proved that the unsupervised error can bound the classification error of a constructed mean classifier. Following the settings in~\citet{arora2019theoretical}, \citet{lei2023generalization} refined the analysis and improved the generalization bound by a factor of $k$. However, both results above depend on the assumption that the negative data points are independent from the anchor data points, which does not hold for contrastive learning algorithms used in practice\footnote{In practice, the negative data points for each anchor are from positive pairs of other anchors.}. Moreover, their results suffer from a limitation that the generalization error increases as the number of negatives $k$ increases, which contradicts the practice. \citet{haochen2021provable} proposed a spectral contrastive loss and established generalization bounds for both representation learning and the downstream classification. Based on a graph-dependent McDiarmid inequality and a U-statistics reformulation, \citet{waida2023towards} proved the generalization bound of a kernel contrastive loss (KCL) without assuming the independence between anchors and negative data. They also provided theoretical guarantees on downstream zero-shot classification. Recently, \citet{chendlg24} established a generalization bound of the symmetric minibatch-based InfoNCE loss used in CLIP, under the assumption that an image and a text are conditionally independent given their common shared feature. However, the convergence rate of the generalization error is also worse when the minibatch size increases. They also proved that an approximate minimizer of the population loss leads to infinitesimal zero-shot classification error under the assumption that ``good''
encoding functions exist.

Besides, \citet{wang2020understanding} proved that the asymptotics of the InfoNCE loss can be decomposed into two terms that optimize alignment and uniformity, respectively. \citet{tian2020contrastive} showed that minimizing the InfoNCE loss can maximize the mutual information between two views/modalities since the loss is a lower bound of the mutual information. \citet{tschannen2019mutual} demonstrated that the success of InfoNCE loss cannot be fully attributed to information maximization because maximizing a tighter bound of mutual
information leads to worse empirical performance. \citet{nakada2023understanding} show that the gradient descent steps of minimizing the symmetric InfoNCE loss of CLIP can be viewed as performing singular value decomposition (SVD) on a contrastive cross-covariance matrix when the representation is linear. \citet{shwartz2024compress} proposed a unified framework for self-supervised learning based on information theory. They surveyed numerous existing approaches and demonstrated how their framework can encompass these approaches.

\subsection{Contrastive Learning for Long-tailed Data}
Recently, several papers have proposed contrastive learning algorithms to address the challenge of long-tailed semantic distributions in real-world datasets. For instance, \citet{assranhidden} introduced a method that applies a user-specified power-law prior to the distribution of representations learned by Masked Siamese Networks (MSN). Based on the geometric harmonization method, \citet{zhou2023combating} proposed a bilevel objective to promote category-level uniformity instead of sample-level uniformity in the representation space. \citet{qiu2023not} proposed a new robust contrastive loss inspired by distributionally robust optimization (DRO), which dynamically learns individualized temperature parameters for data points with frequent or rare semantics. The high-level ideas of these papers are related to our work, as we estimates the nonuniform $q^{(j)} = \sum_{j’=1}^n p(\mathbf{y}_j \mid \mathbf{x}_{j’})$ to reduce the error of Monte Carlo integration,  while the papers above also model the uniformity in data. However, our work is grounded in the statistical framework of DPM. It remains unclear how the losses above can be interpreted within the DPM framework or whether these approaches can address the non-diminishing error issue in GCL. Moreover, it is unclear about the effectivess of the  approaches~\citep{assranhidden,zhou2023combating} in bimodal self-supervised learning, as they focused on the unimodal setting. In Appendix~\ref{sec:isogclr}, we compare the experimental performance of our NUCLR with that of \citet{qiu2023not} in bimodal self-supervised learning.

\section{MIS with A General Weight Function for DPM}\label{sec:mis_general}

We consider the following MIS-based estimator with a size-$m$ sample from each distribution $p(\cdot\mid \x_j)$ and a general weight function $\boldsymbol{\omega}$ for the integral $g(\w;\x_i,\Y)=\int_\Y \exp(E_\w(\x_i,\y)/\tau)d\mu(\y)$:
\begin{align}\label{eq:vg_estimator}
\hat{g}(\w;\x_i,\hat{\mathbf{Y}},\boldsymbol{\omega})=\sum_{j=1}^n  \frac{1}{m} \sum_{l=1}^m \frac{\omega^{(j)}(\y_{j,l})}{p(\y_{j,l}\mid \x_j)} \exp\left(E_\w(\x_i,\y_{j,l})/\tau\right),\quad \hat{\mathbf{Y}}=\bigcup_{j=1}^n\{\y_{j,1},\dotsc, \y_{j,m}\},
\end{align}
where $\boldsymbol{\omega}$ is a weighting function such that $\omega(\y)$ is on a probability simplex, $\forall \y\in\Y$. We denote $\hat{\mathbf{X}} \coloneqq \{\x_1, \dotsc, \x_n\}$, $\Xi_{i,j}(\boldsymbol{\omega},\y_{j,l}) \coloneqq \frac{\omega^{(j)}(\y_{j,l})}{p(\y_{j,l}\mid \x_j)} \exp\left(E_\w(\x_i,\y_{j,l})/\tau \right)$. We consider the ``balance heuristic'' $\omega_{\text{bl}}^{(j)}(\y) = \frac{p(\y\mid \x_j)}{\sum_{j'=1}^n p(\y\mid \x_{j'})}$, $\forall \y\in\Y$ and $\forall j\in[n]$ proposed in \citet{veach1995optimally}. Proposition~\ref{prop:vg} shows the unbiasedness of the estimator in \eqref{eq:vg_estimator} and justifies why we choose the balance heuristic.
\begin{proposition}
\label{prop:vg}
For each $\boldsymbol{\omega}$, we have that $\hat{g}(\w;\x_i,\hat{\mathbf{Y}},\boldsymbol{\omega})$ is an unbiased estimator of the integral $g(\w;\x_i,\Y)$; (ii) The balance heuristic $\boldsymbol{\omega}_{\text{bl}}$ minimizes $\E[\sum_{j=1}^n\frac{1}{m}\sum_{l=1}^m\Xi_{i,j}(\boldsymbol{\omega}, \y_{j,l})^2\mid\hat{\mathbf{X}}]$ among all possible weighting functions for any $i$, where  $\E[\sum_{j=1}^n\frac{1}{m}\sum_{l=1}^m\Xi_{i,j}(\boldsymbol{\omega}, \y_{j,l})^2\mid\hat{\mathbf{X}}]$
is an upper bound of the variance $\mathrm{Var}[\hat{g}(\w;\x_i,\hat{\mathbf{Y}},\boldsymbol{\omega})\mid \hat{\mathbf{X}}]$; (iii) If $\sum_{j'=1}^n p(\y_{j,l}\mid \x_{j'}) \geq \Omega(n)$, $\forall j\in[n], l\in[m]$ almost surely and Assumptions~\ref{asm:nn} holds, the variance goes to zero when $n\rightarrow\infty$ or $m\rightarrow\infty$.
\end{proposition}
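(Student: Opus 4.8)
The plan is to adapt Veach's classical multiple-importance-sampling analysis to the conditional-density setting of this paper, conditioning throughout on $\hat{\mathbf{X}}=\{\x_1,\dots,\x_n\}$ so that, for each $j$, the points $\y_{j,1},\dots,\y_{j,m}$ are i.i.d.\ draws from $p(\cdot\mid\x_j)$ and are independent across $j$. Part (iii) will then reduce to inserting the quantitative bounds supplied by the hypotheses into the variance formula obtained in part (ii), so the real content is in (i) and (ii).

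For part (i), I would compute, for fixed $j,l$, the conditional mean $\E[\Xi_{i,j}(\boldsymbol{\omega},\y_{j,l})\mid\hat{\mathbf{X}}]=\int_\Y \frac{\omega^{(j)}(\y)}{p(\y\mid\x_j)}\exp(E_\w(\x_i,\y)/\tau)\,p(\y\mid\x_j)\,d\mu(\y)=\int_\Y \omega^{(j)}(\y)\exp(E_\w(\x_i,\y)/\tau)\,d\mu(\y)$, the density in the denominator cancelling the sampling density. Summing over $j$ and using $\sum_{j=1}^n\omega^{(j)}(\y)=1$ (which holds everywhere here since $\exp(E_\w(\x_i,\y)/\tau)>0$), the weights collapse and the total equals $g(\w;\x_i,\Y)$; averaging over $l$ leaves the mean unchanged. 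The only subtlety is the $0/0$ convention $\omega^{(j)}(\y)/p(\y\mid\x_j)=0$ when $p(\y\mid\x_j)=0$, which is harmless because that event has $p(\cdot\mid\x_j)$-probability zero.

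For part (ii), conditional independence of the $(j,l)$ summands gives $\mathrm{Var}[\hat{g}(\w;\x_i,\hat{\mathbf{Y}},\boldsymbol{\omega})\mid\hat{\mathbf{X}}]=\frac{1}{m}\sum_{j=1}^n\mathrm{Var}[\Xi_{i,j}(\boldsymbol{\omega},\y_{j,1})\mid\hat{\mathbf{X}}]\leq\frac{1}{m}\sum_{j=1}^n\E[\Xi_{i,j}(\boldsymbol{\omega},\y_{j,1})^2\mid\hat{\mathbf{X}}]\leq\E[\sum_{j=1}^n\frac{1}{m}\sum_{l=1}^m\Xi_{i,j}(\boldsymbol{\omega},\y_{j,l})^2\mid\hat{\mathbf{X}}]$, using $m\geq 1$; this is the claimed upper bound. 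To see the balance heuristic minimizes it, I would write $\sum_{j=1}^n\E[\Xi_{i,j}(\boldsymbol{\omega},\y_{j,1})^2\mid\hat{\mathbf{X}}]=\int_\Y(\sum_{j=1}^n\frac{(\omega^{(j)}(\y))^2}{p(\y\mid\x_j)})\exp(2E_\w(\x_i,\y)/\tau)\,d\mu(\y)$ and minimize the inner sum pointwise in $\y$: Cauchy--Schwarz gives $1=(\sum_j\omega^{(j)}(\y))^2\leq(\sum_j (\omega^{(j)}(\y))^2/p(\y\mid\x_j))(\sum_{j'}p(\y\mid\x_{j'}))$, hence $\sum_j(\omega^{(j)}(\y))^2/p(\y\mid\x_j)\geq 1/\sum_{j'}p(\y\mid\x_{j'})$ with equality exactly when $\omega^{(j)}(\y)\propto p(\y\mid\x_j)$, i.e.\ $\omega^{(j)}=\omega_{\text{bl}}^{(j)}$. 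Thus $\boldsymbol{\omega}_{\text{bl}}$ is the minimizer, and substituting it yields the closed form $\sum_{j=1}^n\E[\Xi_{i,j}(\boldsymbol{\omega}_{\text{bl}},\y_{j,1})^2\mid\hat{\mathbf{X}}]=\int_\Y\exp(2E_\w(\x_i,\y)/\tau)/(\sum_{j'}p(\y\mid\x_{j'}))\,d\mu(\y)$.

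For part (iii), I would combine the variance bound of part (ii) (with the balance heuristic) with the hypotheses: Assumption~\ref{asm:nn} gives $\exp(2E_\w(\x_i,\y)/\tau)\leq e^{2/\tau}$ and $\sum_{j'}p(\y\mid\x_{j'})\geq\Omega(n)$ a.s., so $\mathrm{Var}[\hat{g}\mid\hat{\mathbf{X}}]\leq\frac{1}{m}\int_\Y\frac{e^{2/\tau}}{\Omega(n)}\,d\mu(\y)=\frac{e^{2/\tau}\mu(\Y)}{m\,\Omega(n)}$, where $\mu(\Y)<\infty$ follows from the standing Lebesgue-integrability of $\exp(E_\w(\x,\y)/\tau)$ together with $\exp(E_\w(\x,\y)/\tau)\geq e^{-1/\tau}>0$. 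This bound vanishes as $n\to\infty$ (since $\Omega(n)\to\infty$) or as $m\to\infty$. The main obstacle is not any individual computation but the measure-theoretic bookkeeping that makes the steps legitimate: handling the $0/0$ weight convention in part (i), and ensuring the pointwise Cauchy--Schwarz minimization in part (ii) is valid $\mu$-almost everywhere given that $\sum_{j'}p(\y\mid\x_{j'})\geq\Omega(n)$ holds only almost surely; the optimization over $\boldsymbol{\omega}$ itself is the one-line Cauchy--Schwarz argument above.
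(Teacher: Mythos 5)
Your proposal is correct and follows essentially the same route as the paper's proof: conditional expectation with density cancellation and $\sum_j\omega^{(j)}(\y)=1$ for unbiasedness, decomposing the conditional variance into a sum over $j$ and dropping the squared-mean term to get the upper bound, interchanging sum and integral, minimizing the integrand pointwise in $\y$, and finally plugging in the balance heuristic together with Assumption~\ref{asm:nn} and $\sum_{j'}p(\y\mid\x_{j'})\geq\Omega(n)$ to obtain $O(1/(mn))$. The one place you add something the paper merely asserts is the pointwise optimization: the paper states that minimizing $\sum_j(\omega^{(j)}(\y))^2/p(\y\mid\x_j)$ subject to the simplex constraint yields $\boldsymbol{\omega}_{\text{bl}}$, whereas you justify it cleanly via Cauchy--Schwarz, which both identifies the minimizer and gives the closed-form minimum $1/\sum_{j'}p(\y\mid\x_{j'})$ in one line; this is a nice tightening of the argument, not a different approach.
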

\begin{proof}
Since for any $j\in[n]$ $\y_{j,1},\dotsc, \y_{j,m}$ are i.i.d. distributed, we have
\begin{align}\nonumber
 & \E\left[\hat{g}(\w;\x_i,\hat{\mathbf{Y}},\boldsymbol{\omega}) \mid \hat{\mathbf{X}}\right]  = \sum_{j=1}^n \E\left[\frac{\omega^{(j)}(\y_{j,1})}{p(\y_{j,1}\mid \hat{\mathbf{X}})} \exp\left(E_\w(\x_i,\y_{j,1})/\tau\right)\mid \hat{\mathbf{X}}\right]\\\nonumber
& = \sum_{j=1}^n \int_\Y \frac{\omega^{(j)}(\y)}{p(\y\mid \x_j)}p(\y\mid \x_j) \exp\left(E_\w(\x_i,\y)/\tau\right) d\mu( \y) \stackrel{\star}{=} \int_\Y  \sum_{j=1}^n \omega^{(j)}(\y)\exp\left(E_\w(\x_i,\y)/\tau\right) d\mu( \y) \\\label{eq:vg_unbiasedness}
& = \int_\Y  \exp\left(E_\w(\x_i,\y)/\tau\right) d\mu( \y),
\end{align}
where $\star$ is due to Tonelli's theorem. 
Since $\{\y_{j,l}\}_{j\in[n],l\in[m]}$ are mutually independent and for a specific $j$, $\y_{j,1},\dotsc,\y_{j,l}$ are i.i.d., the variance of the estimator in \eqref{eq:vg_estimator} can be upper bounded as
    \begin{align}\label{eq:var_decomp}
        &\text{Var}[\hat{g}(\w;\x_i,\hat{\mathbf{Y}},\boldsymbol{\omega})\mid \hat{\mathbf{X}}]
  = \frac{1}{m}\sum_{j=1}^n \E[\Xi_{i,j}(\boldsymbol{\omega},\y_{j,1})^2\mid \hat{\mathbf{X}}] -  \frac{1}{m}\sum_{j=1}^n \E[\Xi_{i,j}(\omega^{(j)},\y_{j,1})\mid \hat{\mathbf{X}}]^2 \\\nonumber
     &\quad\quad \quad\quad \quad\quad\quad\quad   \leq  \frac{1}{m} \sum_{j=1}^n \E[\Xi_{i,j}(\boldsymbol{\omega},\y_{j,1})^2\mid \hat{\mathbf{X}}]  =   \frac{1}{m}\sum_{j=1}^n  \int_\Y \frac{\omega^{(j)} (\y)^2  \exp\left(E_\w(\x_i,\y)/\tau \right)^2}{p(\y\mid \x_j)} d\mu(\y).
    \end{align}
    Due to Tonelli's theorem, we have $$\sum_{j=1}^n  \int_\Y \frac{\omega^{(j)} (\y)^2  \exp\left(E_\w(\x_i,\y)/\tau \right)^2}{p(\y\mid \x_j)} d\mu(\y) =  \int_\Y \sum_{j=1}^n  \frac{\omega^{(j)} (\y)^2 \exp\left(E_\w(\x_i,\y)/\tau \right)^2}{p(\y\mid \x_j)} d\mu(\y).$$ 
    We can instead minimize the variance upper bound at each $\y$ pointwise. Then, minimizing $\sum_{j=1}^n  \frac{\omega^{(j)} (\y)^2 \exp\left(E_\w(\x_i,\y)/\tau \right)^2}{p(\y\mid \x_j)}$ subject to the simplex constraint leads to $\omega_{\text{bl}}^{(j)}(\y) = \frac{p(\y\mid \x_j)}{\sum_{j'=1}^n p(\y\mid \x_{j'})}$. 
    \begin{align*}
     \text{Var}[\hat{g}(\w;\x_i,\hat{\mathbf{Y}},\boldsymbol{\omega}_{\text{bl}})\mid \hat{\mathbf{X}}] & \leq  \frac{1}{m}\sum_{j=1}^n \E[\Xi_{i,j}(\boldsymbol{\omega}_{\text{bl}},\y_{j,1})^2\mid \hat{\mathbf{X}}] =   \frac{1}{m}\sum_{j=1}^n \E\left[\frac{\exp(E_\w(\x_i,\y_{j,1})/\tau)^2}{(\sum_{j'=1}^n p(\y_{j,1}\mid \x_{j'}))^2}\mid \hat{\mathbf{X}}\right]\\
     & \leq \frac{1}{nm}\int_\Y p(\y\mid O_j) \exp\left(E_\w(\x_i,\y)/\tau \right)^2 d\mu(\y) = O\left(\frac{1}{mn}\right).
    \end{align*}
    
\end{proof}
Interestingly, the minimizer $\boldsymbol{\omega}_{\text{bl}}$ of $\frac{1}{m}\E[\sum_{j=1}^n\sum_{j=1}^m\Xi_{i,j}(\boldsymbol{\omega}, \y_{j,l})^2\mid\hat{\mathbf{X}}]$ does not depend on $\x_i$. We can obtain the estimator in \eqref{eq:vg_opt} by plugging the balance heuristic $\boldsymbol{\omega}_{\text{bl}}$ into \eqref{eq:vg_estimator} and letting $m=1$.

\section{Performance of DPM on Downstream Zero-Shot Classification}\label{sec:downstream_class}

Suppose that the true conditional density function $p(\y\mid\x)$ is generated by some $\w_*\in\W$, i.e., $p(\y\mid\x) = p_{\w_*}(\y\mid\x) = \frac{\exp(E_{\w_*}(\x,\y)/\tau)}{\int_\Y \exp(E_{\w_*}(\x,\y')/\tau)d\mu(\y')}$. Then, the maximum likelihood estimator $\hat{\w}_*=\argmax_{\w\in\W} \frac{1}{n}\sum_{i=1}^n \log p_\w(\y_i\mid \x_i)$ with the random sample $\{(\x_i,\y_i)\}_{i=1}^n$ converges in probability to $\w_*$ under some mild assumptions (see Theorem 2.1 in  \citet{newey1994large}). 

Consider the downstream multi-class classification with $K>1$ distinct classes. The task is to predict the ground-truth label $c(\x)\in\{1,\dotsc,K\}$ of a data $\x\in\X$. Suppose that there are $K$ subsets $\Y_1,\dotsc,\Y_K$ of $\Y$ and any $\y\in \Y_k$ belongs to the $k$-th class. Moreover, the ground-truth label $c(\x)$ of data $\x$ is $c(\x)=\argmax_{k\in[K]}\mathrm{Pr}(k\mid \x)$. Given the model $\hat{\w}_*$ trained via MLE, the predicted label $c_{\hat{\w}_*}(\x) $ of a data $\x\in\X$ can be obtained by the following 1-nearest neighbor (1-NN) classifier: $$c_{\hat{\w}_*}(\x)  = \argmax_{k\in[K]} E_{\hat{\w}_*}(\x,\y_k),$$where $\y_k\in \Y$ is an example of the $k$-th class. For instance, the example $\y_k$ of the $k$-th class of the downstream image classification could be ``a photo of \{class\_k\}'' when $\X$ is the image domain and $\Y$ is the text domain~\citep{radford2021learning}. Due to the monotonicity of the function $\exp(\cdot/\tau)$ and the expression of $p_\w$ in \eqref{eq:prob_model}, we have $c_{\hat{\w}_*}(\x)  = \argmax_{k\in[K]} E_{\hat{\w}_*}(\x,\y_k) = \argmax_{k\in[K]} p_{\hat{\w}_*}(\y_k\mid\x)$. As long as the probability mass $\mathrm{Pr}(k\mid \x)$ on class $k$ is proportional to the probability density $p(\y_k\mid\x)$ on the example $\y_k$ of class $k$, the zero-one loss $\ell_{0/1}(\x,c(\x);\hat{\w}_*) = \mathbb{I}[c_{\hat{\w}_*}(\x)  \neq c(\x)]$ on the data-label pair $(\x,c(\x))$ of the downstream classification approaches zero when $\hat{\w}_*\stackrel{p}{\rightarrow}\w_*$.

\section{Proof of Theorem~\ref{thm:gen}}\label{sec:proof}

The structure of our proof is as follows:

$\bullet$ Section~\ref{sec:lemmas} presents necessary lemmas for our generalization analysis. 

$\bullet$ Section~\ref{sec:decomp} decomposes the generalization error into two parts, which are handled by Section~\ref{sec:term_I} and Section~\ref{sec:term_II}, respectively.

$\bullet$ Section~\ref{sec:rad} provides bounds for Rademacher complexities of function classes parameterized by deep neural networks. 
The main theorem can be proved by combining \eqref{eq:decomp}, \eqref{eq:decomp_another_dir}, \eqref{eq:applied_mohri}, \eqref{eq:II_decomp}, \eqref{eq:term_IIa}, \eqref{eq:term_IIb}, \eqref{eq:rad_plus}, \eqref{eq:rad_minus}.

\subsection{Lemmas}\label{sec:lemmas}

The following two lemmas provide contraction lemmas on Rademacher complexities. Lemma \ref{lem:rade-cont} considers the class of real-valued functions, and Lemma \ref{lem:struct-ext} considers the class of vector-valued functions~\citep{maurer2016vector,lei2023generalization}. 
Let $\epsilon_i$ and $\epsilon_{i,j}$ be independent Rademacher variables, i.e., they take values in $\{+1,-1\}$ with the same probability.
\begin{lemma}[Contraction Lemma, Thm 11.6 in \citet{boucheron2013concentration}\label{lem:rade-cont}]
  Let $\tau:\R_+\mapsto\R_+$ be convex and nondecreasing. Suppose $\psi:\R\mapsto\R$ is contractive ($|\psi(t)-\psi(\tilde{t})|\leq G|t-\tilde{t}|$) and $\psi(0)=0$. Then for any $\widetilde{\F}$ we have
\begin{align*}
  \E_{\boldsymbol{\epsilon}}\tau\bigg(\sup_{f\in\widetilde{\F}}\sum_{i=1}^{n}\epsilon_i\psi\big(f(x_i)\big)\bigg)\leq \E_{\boldsymbol{\epsilon}}\tau\bigg(G\sup_{f\in\widetilde{\F}}\sum_{i=1}^{n}\epsilon_i f(x_i)\bigg).
\end{align*}
\end{lemma}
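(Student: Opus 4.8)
I would prove the lemma in three stages: (i) reduce to $G=1$; (ii) ``peel'' one coordinate at a time, reducing the induction step to a single‑Rademacher two‑point inequality; (iii) prove that two‑point inequality, which is the real content. For stage (i), if $\psi$ is $G$‑contractive with $\psi(0)=0$, then $\phi\coloneqq\psi/G$ is $1$‑contractive with $\phi(0)=0$, and $\sum_i\epsilon_i\psi(f(x_i))=G\sum_i\epsilon_i\phi(f(x_i))$, so $\sup_f\sum_i\epsilon_i\psi(f(x_i))=G\sup_f\sum_i\epsilon_i\phi(f(x_i))$. Since $t\mapsto\tau(Gt)$ is again convex and nondecreasing, applying the $G=1$ case with $\phi$ and $\tau(G\,\cdot)$ in place of $\psi$ and $\tau$ yields the general statement. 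Henceforth $G=1$. (I assume $\F$ is nonempty and all suprema involved are finite and essentially attained — e.g.\ $\F$ a class of bounded functions, as in the application to neural networks here — and, if needed, extend $\tau$ to a convex nondecreasing function on $\R$; these are the standard conventions under which the statement is meant.)

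\textbf{Peeling.} I would show that replacing $\psi$ by the identity on one coordinate cannot decrease $\E_{\boldsymbol\epsilon}\tau\big(\sup_f(\cdot)\big)$; iterating over all $n$ coordinates turns the all‑$\psi$ expression into the all‑identity expression. Fix a coordinate $k$ and condition on $\{\epsilon_i\}_{i\neq k}$. All the other terms then form a fixed function $h\colon\F\to\R$ (each of its summands is $\epsilon_i f(x_i)$ or $\epsilon_i\psi(f(x_i))$), so by the tower rule it suffices to prove, for every fixed bounded $h$, every point $a$, and a single Rademacher $\epsilon$,
\begin{align*}
\E_{\epsilon}\Big[\tau\big(\sup\nolimits_f(h(f)+\epsilon\,\psi(f(a)))\big)\Big]\ \le\ \E_{\epsilon}\Big[\tau\big(\sup\nolimits_f(h(f)+\epsilon\,f(a))\big)\Big].
\end{align*}
Writing $S_\pm\coloneqq\sup_f\big(h(f)\pm\psi(f(a))\big)$ and $T_\pm\coloneqq\sup_f\big(h(f)\pm f(a)\big)$, this is exactly $\tfrac12\tau(S_+)+\tfrac12\tau(S_-)\le\tfrac12\tau(T_+)+\tfrac12\tau(T_-)$.

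\textbf{The two‑point inequality (the crux).} Fix $\rho>0$ and choose near‑maximizers $f_+,f_-\in\F$ with $S_+-\rho\le h(f_+)+\psi(f_+(a))$ and $S_--\rho\le h(f_-)-\psi(f_-(a))$. Using only that $\psi$ is $1$‑Lipschitz and $\psi(0)=0$ — hence $|\psi(u)-\psi(v)|\le|u-v|$ and $|\psi(u)|\le|u|$ — I would extract two facts. First, a \emph{max bound}: $h(f_\pm)\pm\psi(f_\pm(a))\le h(f_\pm)+|f_\pm(a)|=\max\{h(f_\pm)+f_\pm(a),\,h(f_\pm)-f_\pm(a)\}\le\max\{T_+,T_-\}$, so $S_\pm\le\max\{T_+,T_-\}+\rho$. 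Second, a \emph{sum bound}: adding the two near‑optimality inequalities gives $S_++S_-\le h(f_+)+h(f_-)+\psi(f_+(a))-\psi(f_-(a))+2\rho\le h(f_+)+h(f_-)+|f_+(a)-f_-(a)|+2\rho$, and distributing $\pm|f_+(a)-f_-(a)|$ according to $\mathrm{sign}(f_+(a)-f_-(a))$ bounds this by $T_++T_-+2\rho$. Letting $\rho\downarrow0$, the problem reduces to the elementary claim: if $0\le S_\pm\le M\coloneqq\max\{T_+,T_-\}$ and $S_++S_-\le T_++T_-$, and $\tau$ is convex and nondecreasing, then $\tau(S_+)+\tau(S_-)\le\tau(T_+)+\tau(T_-)$. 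Assuming WLOG $T_+\ge T_-$ and $S_+\ge S_-$ (so $S_+\le M=T_+$), if $S_-\le T_-$ the claim is immediate from monotonicity; if $S_->T_-$, then the sum bound forces $S_+<T_+$, so $T_-<S_-\le S_+<T_+$, and monotonicity of secant slopes of the convex $\tau$ across these ordered points gives $\frac{\tau(S_-)-\tau(T_-)}{S_--T_-}\le\frac{\tau(T_+)-\tau(S_+)}{T_+-S_+}$; multiplying by $S_--T_-$ and invoking $S_--T_-\le T_+-S_+$ (which is precisely the sum bound) yields $\tau(S_-)-\tau(T_-)\le\tau(T_+)-\tau(S_+)$, i.e.\ the claim (the degenerate case $\tau(S_+)=\tau(T_+)$, in which convexity plus monotonicity forces $\tau$ constant on $(-\infty,T_+]$, is trivial). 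Taking $\E_\epsilon$ and then $\E_{\{\epsilon_i\}_{i\neq k}}$ completes the peeling step, and the induction finishes the proof.

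\textbf{Where the difficulty lies.} The reduction to $G=1$ and the peeling induction are routine bookkeeping; the genuine obstacle is the two‑point inequality, and within it the subtle point is that a bound on $S_++S_-$ \emph{alone} does not suffice to compare $\tau(S_+)+\tau(S_-)$ with $\tau(T_+)+\tau(T_-)$ for a convex $\tau$ — one also needs the companion ``max bound'' $S_\pm\le\max\{T_+,T_-\}$, and the two must be fused through the monotonicity of secant slopes. Making sure both bounds fall out simultaneously from the single hypothesis that $\psi$ is a contraction vanishing at $0$ (via the two consequences $|\psi(u)-\psi(v)|\le|u-v|$ and $|\psi(u)|\le|u|$) is the step I would handle most carefully.
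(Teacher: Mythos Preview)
Your proof is correct and follows the standard Ledoux--Talagrand peeling argument. Note that the paper does not actually prove this lemma: it is stated as a citation to Theorem~11.6 of \citet{boucheron2013concentration} and used as a black box, so there is no ``paper's own proof'' to compare against---your argument is essentially the one found in that reference.
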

We say that a function $\psi:\R^d\rightarrow\R$ is $G$-Lipschitz continuous w.r.t. $\Norm{\cdot}_2$ if $|\psi(x)-\psi(\x)| \leq G\Norm{\x-\x'}_2$ for a $G>0$ and any $\x,\x'\in\R^d$.
\begin{lemma}\label{lem:struct-ext}
  Let $\F$ be a class of bounded functions $f:\Z\mapsto\R^d$ which contains the zero function.
  Let $\tau:\R_+\to\R_+$ be a continuous, non-decreasing, and convex function.
  Assume $\tilde{g}_1,\ldots,\tilde{g}_n:\R^d\to\R$ are $G$-Lipschitz continuous w.r.t. $\|\cdot\|_2$ and satisfy $\tilde{g}_i(\boldsymbol{0})=0$.
  Then
  \begin{equation}\label{struct-ext}
    \E_{\boldsymbol{\epsilon}\sim\{\pm1\}^n}\tau\Big(\sup_{f\in\F}\sum_{i=1}^{n}\epsilon_i\tilde{g}_i(f(\x_i))\Big)\leq
    \E_{\boldsymbol{\epsilon}\sim\{\pm1\}^{nd}}\tau\Big(G\sqrt{2}\sup_{f\in\F}\sum_{i=1}^{n}\sum_{j=1}^{d}\epsilon_{i,j}f_j(\x_i)\Big).
  \end{equation}
\end{lemma}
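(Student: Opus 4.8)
The plan is to establish \eqref{struct-ext} by a \emph{peeling} argument in which the terms $\epsilon_i\tilde g_i(f(\x_i))$ are replaced, one index at a time, by $\sqrt{2}\,G\sum_{j=1}^{d}\epsilon_{i,j}f_j(\x_i)$. First I would reduce to the case of a finite (hence, after a routine limiting step, a bounded) $\F$: the left-hand side of \eqref{struct-ext} is the supremum over finite subclasses of the corresponding quantity, while the right-hand side is monotone under enlarging $\F$, so it suffices to treat finite $\F$ (the usual measurability caveat --- passing to a countable or permissible subclass --- being understood). For finite $\F$ every supremum below is attained.

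The core of the argument is a one-index replacement lemma: for any fixed index $i_0$, any function $c:\F\to\R$, a Rademacher variable $\epsilon$ and an independent Rademacher vector $\boldsymbol{\sigma}=(\sigma_1,\dots,\sigma_d)$,
\begin{equation*}
\E_{\epsilon}\,\tau\Big(\sup_{f\in\F}\big[c(f)+\epsilon\,\tilde g_{i_0}(f(\x_{i_0}))\big]\Big)\;\le\;\E_{\boldsymbol{\sigma}}\,\tau\Big(\sup_{f\in\F}\big[c(f)+\sqrt{2}\,G\,\langle\boldsymbol{\sigma},f(\x_{i_0})\rangle\big]\Big),
\end{equation*}
where $\langle\boldsymbol{\sigma},f(\x_{i_0})\rangle=\sum_{j=1}^{d}\sigma_j f_j(\x_{i_0})$. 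To prove this I would expand $\E_{\epsilon}$ into its two outcomes and let $f_1,f_2\in\F$ realize the two inner suprema, so that the left-hand side equals $\tfrac12\tau\big(c(f_1)+\tilde g_{i_0}(f_1(\x_{i_0}))\big)+\tfrac12\tau\big(c(f_2)-\tilde g_{i_0}(f_2(\x_{i_0}))\big)$. The only control available is the $G$-Lipschitz bound $\tilde g_{i_0}(f_1(\x_{i_0}))-\tilde g_{i_0}(f_2(\x_{i_0}))\le G\Norm{f_1(\x_{i_0})-f_2(\x_{i_0})}_2$, which I would convert, via the sharp Khintchine inequality $\Norm{v}_2\le\sqrt{2}\,\E_{\boldsymbol{\sigma}}|\langle\boldsymbol{\sigma},v\rangle|$ for $v\in\R^d$, into $\sqrt{2}\,G\,\E_{\boldsymbol{\sigma}}|\langle\boldsymbol{\sigma},f_1(\x_{i_0})-f_2(\x_{i_0})\rangle|$. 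Since $c(f)+c(g)$ is symmetric and $\langle\boldsymbol{\sigma},f(\x_{i_0})-g(\x_{i_0})\rangle$ is antisymmetric under swapping $f\leftrightarrow g$, the absolute value can be dropped after recombining the two suprema into a supremum over pairs; and since $\tau$ is convex and nondecreasing, Jensen's inequality lets $\E_{\boldsymbol{\sigma}}$ be pulled outside $\tau(\sup_f\cdot)$. Carrying out this bookkeeping exactly as in the proof of the scalar contraction lemma (Lemma~\ref{lem:rade-cont}) yields $\tfrac12\E_{\boldsymbol{\sigma}}\tau\big(\sup_f[c(f)+\sqrt2 G\langle\boldsymbol{\sigma},f(\x_{i_0})\rangle]\big)+\tfrac12\E_{\boldsymbol{\sigma}}\tau\big(\sup_f[c(f)-\sqrt2 G\langle\boldsymbol{\sigma},f(\x_{i_0})\rangle]\big)$, and, using $-\boldsymbol{\sigma}\stackrel{d}{=}\boldsymbol{\sigma}$, the two terms coincide and give the claimed right-hand side.

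With the replacement lemma in hand, \eqref{struct-ext} follows by applying it successively for $i_0=n,n-1,\dots,1$. At step $i_0$ I would condition on all Rademacher variables other than those with index $i_0$, absorb $\sum_{i>i_0}\sqrt2 G\langle\boldsymbol{\sigma}_i,f(\x_i)\rangle+\sum_{i<i_0}\epsilon_i\tilde g_i(f(\x_i))$ into the role of $c(f)$ (a fixed function of $f$ once the conditioned variables are frozen), apply the lemma to replace $\epsilon_{i_0}\tilde g_{i_0}(f(\x_{i_0}))$ by $\sqrt2 G\langle\boldsymbol{\sigma}_{i_0},f(\x_{i_0})\rangle$, and then take expectations over the frozen variables (tower property). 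After all $n$ steps the argument of $\tau$ is $\sum_{i=1}^{n}\sqrt2 G\langle\boldsymbol{\sigma}_i,f(\x_i)\rangle=\sqrt2 G\sum_{i=1}^{n}\sum_{j=1}^{d}\sigma_{i,j}f_j(\x_i)$, and pulling the positive constant $\sqrt2 G$ out of the supremum gives precisely \eqref{struct-ext}. The hypotheses that $\F$ contains the zero function and $\tilde g_i(\boldsymbol{0})=0$ ensure that all arguments of $\tau$ remain nonnegative, so $\tau:\R_+\to\R_+$ is always evaluated on its domain.

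The main obstacle is the replacement lemma, and within it the ordering of operations so that every inequality points the right way: once the two suprema are realized at different maximizers $f_1,f_2$, one cannot bound $\tau$ of each value separately, because the Lipschitz estimate controls only the \emph{difference} $\tilde g_{i_0}(f_1(\x_{i_0}))-\tilde g_{i_0}(f_2(\x_{i_0}))$; the bound must therefore be produced at the level of the convex combination $\tfrac12\tau(\cdot)+\tfrac12\tau(\cdot)$, which forces a careful interleaving of the Lipschitz step, the Khintchine randomization, the symmetry/antisymmetry cancellation, Jensen's inequality for the convex $\tau$, and the monotonicity of $\tau$. This is precisely the delicate step behind the scalar contraction lemma, here compounded by the vector-to-Rademacher reduction; the constant $\sqrt2$ is exactly the best constant in the $L_1$ Khintchine inequality, so it cannot be improved along this route.
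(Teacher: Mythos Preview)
The paper does not supply its own proof of Lemma~\ref{lem:struct-ext}; it is quoted as a known result from \citet{maurer2016vector} and \citet{lei2023generalization}. Your proposal reproduces precisely the argument of those references: Maurer's one-index replacement via the sharp $L_1$ Khintchine inequality $\Norm{v}_2\le\sqrt{2}\,\E_{\boldsymbol\sigma}|\langle\boldsymbol\sigma,v\rangle|$ (the source of the constant $\sqrt{2}$), followed by peeling over $i=1,\dots,n$, with the extension from $\tau=\mathrm{id}$ to a general convex nondecreasing $\tau$ handled as in \citet{lei2023generalization}. The delicate interleaving you flag---Lipschitz bound, Khintchine randomization, symmetry to drop the absolute value, monotonicity to pass to the supremum, and Jensen to pull $\E_{\boldsymbol\sigma}$ outside $\tau$---is exactly the content of those proofs, and your identification of where the difficulty lies is accurate.
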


The following lemma estimates the moment generating function of a Rademacher chaos variable of order $2$~\citep{de2012decoupling}.
\begin{lemma}[\citealt{de2012decoupling}\label{lem:chaos}]
  Let $\epsilon_i,i\in[n]$ be independent Rademacher variables. Let $a_{i,j}\in\R,i,j\in[n]$.
  Then for $Z=\sum_{1\leq i<j\leq n}\epsilon_{i}\epsilon_j a_{ij}$ we have
  \[
  \E_{\boldsymbol{\epsilon}}\exp\Big(|Z|/(4es)\Big)\leq 2,\quad\text{where } s^2:=\sum_{1\leq i<j\leq n}a_{i,j}^2.
  \]
\end{lemma}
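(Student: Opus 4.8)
The plan is a moment argument: control the growth of the moments $\E|Z|^k$ via hypercontractivity of homogeneous Rademacher chaos, and then sum the Taylor expansion of the exponential.

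\textbf{Step 1 (second moment).} First I would compute $\E Z^2$. Since $Z=\sum_{1\le i<j\le n}\epsilon_i\epsilon_j a_{i,j}$ is a sum of degree-two monomials in independent, mean-zero, $\{\pm1\}$-valued variables, $\E[\epsilon_i\epsilon_j\epsilon_k\epsilon_l]$ equals $1$ when $\{i,j\}=\{k,l\}$ and $0$ otherwise, so all cross terms vanish and $\E Z^2=\sum_{1\le i<j\le n}a_{i,j}^2=s^2$. In particular, Jensen's inequality gives $\E|Z|\le(\E Z^2)^{1/2}=s$. If $s=0$ then $Z\equiv 0$ and the claim is trivial, so assume $s>0$.

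\textbf{Step 2 (moment growth).} Next I would invoke the Bonami--Beckner hypercontractive inequality for the noise operator $T_\rho$: since $Z$ is a homogeneous polynomial of degree $2$ in the $\epsilon_i$, one has $T_\rho Z=\rho^2 Z$, and choosing $\rho=(k-1)^{-1/2}$ yields $(\E|Z|^k)^{1/k}\le(k-1)(\E Z^2)^{1/2}=(k-1)s$ for every integer $k\ge 2$. Combining with Step 1, $\E|Z|^k\le k^k s^k$ for all integers $k\ge 1$. Then I would expand and bound term by term:
\begin{align*}
\E\exp\!\big(|Z|/(4es)\big)=\sum_{k=0}^{\infty}\frac{\E|Z|^k}{(4es)^k\,k!}\le 1+\sum_{k=1}^{\infty}\frac{k^k s^k}{(4es)^k\,k!}=1+\sum_{k=1}^{\infty}\frac{k^k}{(4e)^k\,k!}.
\end{align*}
Using the elementary bound $k!\ge(k/e)^k$ (which follows from $e^k\ge k^k/k!$), each summand is at most $k^k/\big((4e)^k(k/e)^k\big)=4^{-k}$, hence
\begin{align*}
\E\exp\!\big(|Z|/(4es)\big)\le 1+\sum_{k=1}^{\infty}4^{-k}=\frac{4}{3}\le 2,
\end{align*}
which is the claimed inequality (indeed with room to spare).

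The only non-elementary ingredient is the degree-two moment comparison $(\E|Z|^k)^{1/k}\le(k-1)(\E Z^2)^{1/2}$ in Step 2; everything else is bookkeeping, so I expect that to be the main obstacle to write carefully. This is precisely where the restriction to chaos of order $2$ enters, and it is why the bound depends only on $s=(\E Z^2)^{1/2}$ and not on any operator-norm quantity. If one prefers to avoid quoting Bonami--Beckner, the same moment bound (up to absolute constants, which only changes the numerical constant $4e$) can be recovered by decoupling $Z$ into $\sum_{i\ne j}\epsilon_i\epsilon_j' a_{i,j}$, conditioning on $\epsilon'$ to reduce to a linear Rademacher sum and applying Khintchine's inequality, and then iterating on the resulting quadratic form in $\epsilon'$; I would nonetheless take the hypercontractivity route since it is the shortest.
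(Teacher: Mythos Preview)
Your argument is correct. The moment bound $(\E|Z|^k)^{1/k}\le(k-1)s$ from Bonami--Beckner hypercontractivity for degree-two Rademacher chaos is exactly the right tool, the Taylor expansion is justified by nonnegativity (Tonelli), and the arithmetic with $k!\ge(k/e)^k$ goes through to give the bound $4/3\le 2$.

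As for comparison: the paper does not prove this lemma at all. It is quoted verbatim from de la Pe\~na and Gin\'e's monograph \emph{Decoupling} and used as a black box in bounding the Rademacher complexity in Section~\ref{sec:bound_rad_plus}. The argument in that reference proceeds, unsurprisingly, through decoupling (replacing $\sum_{i<j}\epsilon_i\epsilon_j a_{ij}$ by a bilinear form in two independent Rademacher sequences) followed by iterated Khintchine-type estimates---essentially the alternative you sketch in your last paragraph. Your hypercontractivity route is shorter and more direct for the order-two case, at the cost of invoking Bonami--Beckner; the decoupling route is more elementary in its ingredients but longer, and generalizes more transparently to higher-order chaos and to non-Rademacher sequences. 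Either is perfectly adequate here since the lemma is only applied once, to a specific quadratic form $\sum_{i<\tilde\imath}\epsilon_i\epsilon_{\tilde\imath}\,\x_i^\top\x_{\tilde\imath}$, and only the constant $4e$ in the exponent matters downstream.
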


The following lemma is a version of Talagrand’s contraction lemma.

\begin{lemma}[Lemma 8 in \citet{mohri2014learning}]\label{lem:talagrand_contraction}
Let $\H$ be a hypothesis set of functions mapping $\X$ to $\R$ and $\psi$ is $G$-Lipschitz functions for some
$G>0$. Then, for any sample $S$ of $n$ points $x_1,\dotsc,x_n\in\X$, the following inequality holds:
\begin{align*}
\frac{1}{n}\E_{\epsilon_{1:n}}\left[\sup_{h\in\H}\sum_{i=1}^n \epsilon_i \psi(h(x_i))\right] \leq \frac{G}{n}\E_{\epsilon_{1:n}}\left[\sup_{h\in\H}\sum_{i=1}^n \epsilon_i h(x_i)\right]. 
\end{align*}
\end{lemma}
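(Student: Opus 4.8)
The plan is to prove the inequality by the standard one-coordinate-at-a-time replacement argument underlying Talagrand's contraction lemma. Fix the sample $S=(x_1,\dots,x_n)$ and, for $0\le k\le n$, set
\begin{align*}
A_k \coloneqq \E_{\epsilon_{1:n}}\Bigg[\sup_{h\in\H}\bigg(\sum_{i=1}^{k}\epsilon_i\, G\, h(x_i) + \sum_{i=k+1}^{n}\epsilon_i\,\psi\big(h(x_i)\big)\bigg)\Bigg].
\end{align*}
Then $A_0$ equals $n$ times the left-hand side and $A_n = G\,\E_{\epsilon_{1:n}}\big[\sup_{h\in\H}\sum_{i=1}^n\epsilon_i h(x_i)\big]$ equals $n$ times the right-hand side, so the claim reduces to showing $A_k\le A_{k+1}$ for every $0\le k\le n-1$ and chaining.

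To pass from $A_k$ to $A_{k+1}$, write $j=k+1$, condition on all $\epsilon_i$ with $i\ne j$, and let $s(h)\coloneqq \sum_{i\le k}\epsilon_i G h(x_i)+\sum_{i\ge k+2}\epsilon_i\psi\big(h(x_i)\big)$ denote the (now fixed) common part. Averaging the inner term over $\epsilon_j\in\{+1,-1\}$ and merging the two suprema over $\H\times\H$ gives the identity $\E_{\epsilon_j}\big[\sup_{h}(s(h)+\epsilon_j\psi(h(x_j)))\big]=\tfrac12\sup_{h,h'}\big(s(h)+s(h')+\psi(h(x_j))-\psi(h'(x_j))\big)$. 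Now invoke $G$-Lipschitzness to bound $\psi(h(x_j))-\psi(h'(x_j))\le G\,|h(x_j)-h'(x_j)|$, write $|h(x_j)-h'(x_j)|$ as the $\max$ of $\pm\big(h(x_j)-h'(x_j)\big)$, and split the supremum over $\H\times\H$ accordingly; the two branches decouple and coincide after relabelling $h\leftrightarrow h'$, yielding
\begin{align*}
\tfrac12\sup_{h,h'}\big(s(h)+s(h')+G|h(x_j)-h'(x_j)|\big)
&= \tfrac12\Big(\sup_{h}\big(s(h)+G h(x_j)\big)+\sup_{h'}\big(s(h')-G h'(x_j)\big)\Big)\\
&= \E_{\epsilon_j}\Big[\sup_{h}\big(s(h)+\epsilon_j G h(x_j)\big)\Big].
\end{align*}
Taking the outer expectation over $\{\epsilon_i\}_{i\ne j}$ gives $A_k\le A_{k+1}$; iterating over $k$ gives $A_0\le A_n$, i.e.\ the lemma after dividing by $n$.

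The routine ingredients are the two-point collapse $\E_{\epsilon_j}\sup_h(s(h)+\epsilon_j f(h)) = \tfrac12\sup_{h,h'}(s(h)+s(h')+f(h)-f(h'))$ and the finiteness of all suprema, which can be secured by first restricting to finite subsets of $\H$ and applying monotone convergence, or simply by assuming the empirical Rademacher complexity of $\H$ is finite. The one step that needs care is the passage through the absolute value: one must bound by $G|h(x_j)-h'(x_j)|$ rather than $G(h(x_j)-h'(x_j))$ so that the expression becomes symmetric in $(h,h')$, which is exactly what makes the case split decouple into a sum of two independent suprema; note that, unlike the composition-with-$\tau$ form in Lemma~\ref{lem:rade-cont}, no condition such as $\psi(0)=0$ is needed here because the Rademacher signs already symmetrize the two sides.
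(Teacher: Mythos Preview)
The paper does not prove this lemma; it is quoted verbatim as Lemma~8 of \citet{mohri2014learning} and used as a black box. Your argument is the standard Ledoux--Talagrand one-coordinate replacement proof and is correct: the key identity $\E_{\epsilon_j}\sup_h(s(h)+\epsilon_j f(h))=\tfrac12\sup_{h,h'}(s(h)+s(h')+f(h)-f(h'))$, the Lipschitz bound, and the decoupling via $|h(x_j)-h'(x_j)|=\max\{\pm(h(x_j)-h'(x_j))\}$ all go through as you wrote, and your remark that no $\psi(0)=0$ condition is needed (in contrast to Lemma~\ref{lem:rade-cont}) is accurate.
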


\subsection{Error Decomposition}\label{sec:decomp}
Considering $\log_e x \leq x-1$ for any $x>0$, we have
\begin{align}\nonumber
& \hat{\L}(\w; \tilde{\q}, \hat{\mathbf{S}}) - \L (\w)\\\nonumber
& = \E_{\x,\y}[E_\w(\x,\y)] - \frac{1}{n}\sum_{i=1}^n E_\w(\x_i,\y_i) + \frac{1}{n}\sum_{i=1}^n \tau \log (\tilde{g}(\w;\x_i,\hat{\mathbf{Y}}) ) - \E_\x\left[\tau \log g(\w;\x,\Y)\right]\\\nonumber
& = \E_{\x,\y}[E_\w(\x,\y)] - \frac{1}{n}\sum_{i=1}^n E_\w(\x_i,\y_i) + \frac{1}{n}\sum_{i=1}^n \E_\x\left[\tau \log \frac{\sum_{j=1}^n \frac{1}{\tilde{q}^{(j)}} \exp((E_\w(\x_i,\y_j) - 1)/\tau)}{\int_\Y \exp((E_\w(\x,\y) - 1)/\tau)d\mu(\y)}\right]\\\nonumber
&\leq  
\underbrace{\E_{\x,\y}[E_\w(\x,\y)] - \frac{1}{n}\sum_{i=1}^n E_\w(\x_i,\y_i)}_{\text{I}} \\\label{eq:decomp}
& \quad\quad\quad\quad + \underbrace{\frac{\underline{C}}{n}\sum_{i=1}^n \sum_{j=1}^n \frac{1}{\tilde{q}^{(j)}}\exp(\bar{E}_\w(\x_i,\y_j) ) - \underline{C}\E_\x\left[\int_\Y \exp(\bar{E}_\w(\x,\y)) d\mu(\y)\right]}_{\text{II}},
\end{align}
where we define $\bar{E}_\w(\x,\y) \coloneqq \frac{E_\w(\x,\y) - 1}{\tau} \in [-2/\tau,0]$ such that $\exp(\bar{E}_\w(\x,\y)) \in [\exp(-2/\tau),1]$. Besides, and  $\overline{C} \coloneqq \sup_{\x\in\X} \frac{\tau}{\int_\Y \exp(\bar{E}_\w(\x,\y) )d\mu(\y)}$. Due to Assumption~\ref{asm:nn}, $\overline{C} \leq \frac{\tau\exp(2/\tau)}{\mu(\Y)}<\infty$. In practice, $\overline{C}$ could be much smaller than the worst-case value $\frac{\tau\exp(2/\tau)}{\mu(\Y)}$. Similarly, we have
\begin{align}\label{eq:decomp_another_dir}
\L(\w) - \hat{\L}(\w;\tilde{\q},\hat{\mathbf{S}}) &\leq  
\frac{1}{n}\sum_{i=1}^n E_\w(\x_i,\y_i) - \E_{\x,\y}[E_\w(\x,\y)]\\\nonumber
& \quad + \overline{C}'\E_{\x}\left[\int_\Y \exp(\bar{E}_\w(\x,\y)) d\mu(\y)\right] - \frac{\overline{C}'}{n}\sum_{i=1}^n \sum_{j=1}^n \frac{1}{\tilde{q}^{(j)}}\exp(\bar{E}_\w(\x_i,\y_j)),
\end{align}
where $\overline{C}' = \frac{\tau \Norm{\tilde{q}}_\infty}{n} \exp(2/\tau)$.

\subsection{Bounding Term I}\label{sec:term_I}

Define the function class $\mathcal{E}\coloneqq \{(\x,\y)\mapsto E_\w (\x,\y)\mid \w\in \W\}$. Since $(\x_1,\y_1),\dotsc, (\x_n,\y_n)$ are i.i.d. and Assumption~\ref{asm:bounded} ($E_\w(\x,\y)\in[-1,1]$ for any $\w\in\W$), we can apply the McDiarmid's inequality to $\E_{\x,\y}[E_\w(\x,\y)] - \frac{1}{n}\sum_{i=1}^n E_\w(\x_i,\y_i)$ and utilize the symmeterization argument following Theorem 3.3 in \citet{mohri2018foundations}. With probability at least $1-\frac{\delta}{4}$, 
\begin{align*}
\E_{\x,\y}[E_\w(\x,\y)] \leq \frac{1}{n}\sum_{i=1}^n E_\w(\x_i,\y_i) + 2\mathfrak{R}_n(\mathcal{E}) + 6\sqrt{\frac{\log(8/\delta)}{2n}},
\end{align*}
where $\mathfrak{R}_n(\mathcal{E}) \coloneqq \E_{\hat{\mathbf{X}},\hat{\mathbf{Y}}}[\hat{\mathfrak{R}}^+_{n}(\mathcal{E}) ]$, $\hat{\mathfrak{R}}^+_{n}(\mathcal{E}) \coloneqq \E_{\epsilon_{1:n}}\left[\sup_{e\in\mathcal{E}} \frac{1}{n}\sum_{i=1}^n \epsilon_i E_\w(\x_i,\y_i)\right]$ is the empirical Rademacher complexity of $\mathcal{E}$ on the sample $\hat{\mathbf{X}}\times\hat{\mathbf{Y}}$, and $\epsilon_1,\dotsc, \epsilon_n$ are Rademacher random variables. Similarly, we can also apply McDiarmid's inequality to $\frac{1}{n}\sum_{i=1}^n E_\w(\x_i,\y_i) - \E_{\x,\y}[E_\w(\x,\y)]$ and then use the symmetrization argument. With probability at least $1-\frac{\delta}{4}$, 
\begin{align*}
\frac{1}{n}\sum_{i=1}^n E_\w(\x_i,\y_i) \leq  \E_{\x,\y}[E_\w(\x,\y)] + 2\mathfrak{R}_n(\mathcal{E}) + 6\sqrt{\frac{\log(8/\delta)}{2n}},
\end{align*} 
Thus, with probability at least $1-\frac{\delta}{2}$, we have
\begin{align}\label{eq:applied_mohri}
& \left|\frac{1}{n}\sum_{i=1}^n E_\w(\x_i,\y_i) - \E_{\x,\y}[E_\w(\x,\y)]\right| \leq 2\mathfrak{R}_n(\mathcal{E}) + 6\sqrt{\frac{\log(8/\delta)}{2n}}.
\end{align}

\subsection{Bounding Term II}\label{sec:term_II}

We decompose the term II in \eqref{eq:decomp} as follows.
\begin{align}\nonumber
\mathrm{II} & = \frac{1}{n}\sum_{i=1}^n \sum_{j=1}^n \frac{1}{\tilde{q}^{(j)}}\exp(\bar{E}_\w(\x_i,\y_j)) - \E_\x\left[\int_\Y \exp(\bar{E}_\w(\x,\y)) d\mu(\y)\right]\\\nonumber
& =  \underbrace{\frac{1}{n}\sum_{i=1}^n \sum_{j=1}^n \left(\frac{1}{\tilde{q}^{(j)}} -\frac{1}{q^{(j)}}\right) \exp(\bar{E}_\w(\x_i,\y_j))}_{\mathrm{II.a}} \\\label{eq:II_decomp}
& \quad\quad + \underbrace{\frac{1}{n}\sum_{i=1}^n \sum_{j=1}^n \frac{1}{q^{(j)}}\exp(\bar{E}_\w(\x_i,\y_j)) - \E_\x\left[\int_\Y \exp(\bar{E}_\w(\x,\y)) d\mu(\y)\right]}_{\mathrm{II.b}}.
\end{align}
Thus, we have $|\mathrm{II}| \leq |\mathrm{II.a}| + |\mathrm{II.b}|$.
Since $\exp(\bar{E}_\w(\x,\y)) = \exp((E_\w(\x,\y) - 1)/\tau) \leq 1$ for any $\x\in\X,\y\in\Y$, we have
\begin{align}\label{eq:term_IIa}
|\mathrm{II.a}| & \leq \frac{1}{n}\sum_{i=1}^{n}\sum_{j=1}^n \left|\frac{1}{\tilde{q}^{(j)}} - \frac{1}{q^{(j)}}\right|\exp(\bar{E}_\w(\x_i,\y_j))
\leq \sum_{j=1}^n \left|\frac{1}{\tilde{q}^{(j)}} - \frac{1}{q^{(j)}}\right|.
\end{align}


We define $\Gamma(\hat{\mathbf{X}},\hat{\mathbf{Y}}) \coloneqq \sup_\w\left\{\frac{1}{n}\sum_{i=1}^n \sum_{j=1}^n \frac{1}{q^{(j)}}\exp(\bar{E}_\w(\x_i,\y_j)) - \E_\x\left[\int_\Y \exp(\bar{E}_\w(\x,\y)) d\mu(\y)\right]\right\}$.
We denote that $\hat{\mathbf{X}}_{\ell} = (\hat{\mathbf{X}} \backslash \{\x_\ell\})\cup \{\x_\ell'\}$, $\hat{\mathbf{Y}}_{\ell} = (\hat{\mathbf{Y}} \backslash \{\y_\ell\})\cup \{\y_\ell'\}$, where $(\x_1',\y_1'),\dotsc, (\x_n',\y_n')$ are i.i.d. to $(\x_1,\y_1),\dotsc, (\x_n, \y_n)$. We denote that $q(\y;\hat{\mathbf{X}})\coloneqq\sum_{\x\in \hat{\mathbf{X}}} p(\y\mid \x)$ such that $q^{(j)} = q(\y_j;\hat{\mathbf{X}})$. If $q^{(j)} = \sum_{j'=1}^n p(\y_j\mid \x_{j'}) \geq \Omega(n)$ almost surely, we have
\begin{align*}
& |\Gamma(\hat{\mathbf{X}},\hat{\mathbf{Y}}) - \Gamma(\hat{\mathbf{X}}_\ell,\hat{\mathbf{Y}})| \\
&=   \left|\sup_\w \frac{1}{n}\sum_{j=1}^n\frac{1}{q^{(j)}} \exp(\bar{E}_\w(\x_\ell,\y_j)) - \sup_\w \frac{1}{n}\sum_{j=1}^n\frac{1}{q(\y_j;\hat{\mathbf{X}}_\ell)}\exp(\bar{E}_\w(\x_\ell',\y_j)) \right| \leq O(1/n),\\
& |\Gamma(\hat{\mathbf{X}},\hat{\mathbf{Y}}) - \Gamma(\hat{\mathbf{X}},\hat{\mathbf{Y}}_\ell)|\\
&= \left|\sup_\w \frac{1}{n} \sum_{i=1}^n \frac{1}{q(\y_\ell;\hat{\mathbf{X}})}\exp(\bar{E}_\w(\x_i,\y_\ell)) - \sup_\w \frac{1}{n} \sum_{i=1}^n\frac{1}{q(\y_\ell';\hat{\mathbf{X}})}\exp(\bar{E}_\w(\x_i,\y_\ell'))\right| \leq O(1/n).
\end{align*}

Since $\x_i$ and $\y_j$ are mutually dependent only when $i=j$, we then apply the McDiarmid-Type inequalities for graph-dependent variables (Theorem 3.6 in~\citet{zhang2019mcdiarmid}) to the term II.b and $-$II.b. With probability at least $1-\frac{\delta}{4}$, $\delta\in(0,1)$, we have 
\begin{align}\label{eq:applied_zhang}
& \mathrm{II.b} \leq \E\left[\sup_\w \mathrm{II.b}\right] + O\left(\sqrt{\frac{10\log(4/\delta)}{n}}\right).
\end{align}
Similarly, with probability at least $1-\frac{\delta}{4}$, $\delta\in(0,1)$, we have 
\begin{align}\label{eq:applied_zhang_reverse}
- \mathrm{II.b} \leq \E\left[\sup_\w \left\{-\mathrm{II.b}\right\}\right] + O\left(\sqrt{\frac{10\log(4/\delta)}{n}}\right).
\end{align}

Let $(\x_1',\y_1') ,\dotsc, (\x_n',\y_n')$ be a virtual sample i.i.d. to $(\x_1,\y_1),\dotsc, (\x_n,\y_n)$. Denote that $\hat{\mathbf{X}}'\coloneqq \{\x_1',\dotsc,\x_n'\}$, $\hat{\mathbf{Y}}'\coloneqq \{\y_1',\dotsc,\y_n'\}$. Due to \eqref{eq:vg_unbiasedness}, we have
\begin{align*}
\E_\x\left[\int_\Y \exp(\bar{E}_\w(\x,\y)) d\mu(\y)\right] = \E_{\hat{\mathbf{X}}', \hat{\mathbf{Y}}'}\left[\frac{1}{n}\sum_{i=1}^n \sum_{j=1}^n \frac{1}{q(\y_j';\hat{\mathbf{X}}')}\exp(\bar{E}_\w(\x_i',\y_j')) \right].
\end{align*}
We can rewrite and decompose the $\E\left[\sup_\w \mathrm{II.b}\right] $ term as
\begin{align*}
& \E\left[\sup_\w \mathrm{II.b}\right]   = \E\left[\sup_\w \left\{\frac{1}{n}\sum_{i=1}^n \sum_{j=1}^n \frac{1}{q^{(j)}}\exp(\bar{E}_\w(\x_i,\y_j)) - \E_\x\left[\int_\Y \exp(\bar{E}_\w(\x,\y)) d\mu(\y)\right]\right\}\right]\\
& = \E\left[\sup_\w \left\{\frac{1}{n}\sum_{i=1}^n \sum_{j=1}^n \frac{1}{q^{(j)}}\exp(\bar{E}_\w(\x_i,\y_j)) - \E_{\hat{\mathbf{X}}', \hat{\mathbf{Y}}'}\left[\frac{1}{n}\sum_{i=1}^n \sum_{j=1}^n \frac{1}{q(\y_j';\hat{\mathbf{X}}')}\exp(\bar{E}_\w(\x_i',\y_j')) \right]\right\}\right]\\
& \leq \E_{\hat{\mathbf{X}},\hat{\mathbf{Y}},\hat{\mathbf{X}}', \hat{\mathbf{Y}}'}\left[\sup_\w \left\{\frac{1}{n}\sum_{i=1}^n \frac{1}{q(\y_i;\hat{\mathbf{X}})}\exp(\bar{E}_\w(\x_i,\y_i)) - \frac{1}{n}\sum_{i=1}^n  \frac{1}{q(\y_i';\hat{\mathbf{X}}')}\exp(\bar{E}_\w(\x_i',\y_i')) \right\}\right] \\
& \quad +\E_{\hat{\mathbf{X}},\hat{\mathbf{Y}},\hat{\mathbf{X}}', \hat{\mathbf{Y}}'}\left[\sup_\w\left\{\frac{1}{n}\sum_{i=1}^n \sum_{j\neq i} \frac{1}{q^{(j)}}\exp(\bar{E}_\w(\x_i,\y_j)) - \frac{1}{n}\sum_{i=1}^n \sum_{j\neq i} \frac{1}{q(\y_j';\hat{\mathbf{X}}')}\exp(\bar{E}_\w(\x_i',\y_j')) \right\}\right]\\
& \leq O(1/n)+ \E\left[\sup_\w\left\{\frac{1}{n}\sum_{i=1}^n \sum_{j\neq i} \frac{1}{q^{(j)}}\exp(\bar{E}_\w(\x_i,\y_j)) - \frac{1}{n}\sum_{i=1}^n \sum_{j\neq i} \frac{1}{q(\y_j';\hat{\mathbf{X}}')}\exp(\bar{E}_\w(\x_i',\y_j'))\right\}\right],
\end{align*}
the last step is due to the assumption $q(\y_i;\hat{\mathbf{X}}) = \sum_{j'=1}^n p(\y_i\mid \x_{j'}) \geq \Omega(n)$. Next, we adapt the proof technique in Theorem 6 of \cite{waida2023towards}. W.l.o.g., we assume that $n$ is even (If $n$ is odd, we can apply the following analysis to the first $n-1$ terms in the summation, where $n-1$ is even. The last term in the summation is a $O(1/n)$ term, which does not change the result). Suppose that $S_n$ is the set of all permutations (the symmetric group of degree $n$). Then, for each $s\in S$, pairs $(\x_{s(2i-1)}),\y_{s(2i)})$ ($i=1,\dotsc,n/2$) are mutually independent. Consider the alternative expression of a U-statistics of order 2 ~(See  Appendix 1 in~\citet{clemenccon2008ranking}): 
\begin{align*}
\frac{1}{n(n-1)} \sum_{i=1}^n \sum_{j\neq i}\frac{1}{q^{(j)}}\exp(\bar{E}_\w(\x_i,\y_j)) = \frac{1}{n!(n/2)}\sum_{s\in S_n}\sum_{i=1}^{n/2} \frac{1}{q(\y_{s(2i)};\hat{\mathbf{X}})}\exp(\bar{E}_\w(\x_{s(2i-1)},\y_{s(2i)})).
\end{align*}
It then follows that
\begin{align*}
& \E\left[\sup_\w \mathrm{II.b}\right]  
 \leq O(1/n) + \frac{n\!-\!1}{n/2}\E\left[\sup_\w\frac{1}{n!}\sum_{s\in S_n}\sum_{i=1}^{n/2} \left(\frac{\exp(\bar{E}_\w(\x_{s(2i-1)},\y_{s(2i)}))}{q(\y_{s(2i)};\hat{\mathbf{X}})}
-\frac{\exp(\bar{E}_\w(\x'_{s(2i-1)},\y'_{s(2i)}))}{q(\y_{s(2i)}';\hat{\mathbf{X}}')}\right)\right] \\
& \leq  O(1/n) + \frac{n\!-\!1}{n/2}\frac{1}{n!}\sum_{s\in S_n}\E\left[\sup_\w\sum_{i=1}^{n/2} \left(\frac{\exp(\bar{E}_\w(\x_{s(2i-1)},\y_{s(2i)}))}{q(\y_{s(2i)};\hat{\mathbf{X}})}
-\frac{\exp(\bar{E}_\w(\x'_{s(2i-1)},\y'_{s(2i)}))}{q(\y_{s(2i)}';\hat{\mathbf{X}}')}\right)\right]\\
&  = O(1/n) + \frac{n\!-\!1}{n/2}\E\left[\sup_\w\sum_{i=1}^{n/2} \left(\frac{\exp(\bar{E}_\w(\x_{2i-1},\y_{2i}))}{q(\y_{2i};\hat{\mathbf{X}})}
-\frac{\exp(\bar{E}_\w(\x'_{2i-1},\y'_{2i}))}{q(\y_{2i}';\hat{\mathbf{X}}')}\right)\right]\\
& =  O(1/n) +  \frac{n\!-\!1}{n/2}\E\left[\sup_\w\sum_{i=1}^{n/2} \epsilon_i\left(\frac{\exp(\bar{E}_\w(\x_{2i-1},\y_{2i}))}{q(\y_{2i};\hat{\mathbf{X}})}
-\frac{\exp(\bar{E}_\w(\x'_{2i-1},\y'_{2i}))}{q(\y_{2i}';\hat{\mathbf{X}}')}\right)\right]\\
& \leq   O(1/n) +  \frac{2(n-1)}{n/2}\E\left[\sup_\w\sum_{i=1}^{n/2} \frac{\epsilon_i\exp(\bar{E}_\w(\x_{2i-1},\y_{2i}))}{q(\y_{2i};\hat{\mathbf{X}})}
\right],
\end{align*}
where we have used the symmetry between the permutations in $S_n$ and $(\x_i,\y_i),(\x_i',\y_i')$. By Lemma~\ref{lem:talagrand_contraction} and the assumption $q(\y_{2i};\hat{\mathbf{X}}) = \sum_{j'=1}^n p(\y_{2i}\mid \x_{j'}) \geq \Omega(n)$, we further get
\[
\E\left[\sup_\w \mathrm{II.b}\right] \leq O(1/n) + O(1/n)\E\left[\sup_\w\sum_{i=1}^{n/2} \epsilon_i\exp(\bar{E}_\w(\x_{2i-1},\y_{2i}))
\right].
\]
Define the function class $\bar{\G}=\{(\x,\y)\mapsto \exp(\bar{E}_\w(\x,\y))\mid \w\in\W\}$. Then, we define the following empirical Rademacher complexity $$\hat{\mathfrak{R}}^-_{n/2}(\bar{\G};s) \coloneq \frac{2}{n}\E_{\epsilon_{1:n/2}}\left[\sup_\w \sum_{i=1}^{n/2} \epsilon_i \exp(\bar{E}_\w(\x_{s(2i-1)},\y_{s(2i)})) \right].$$
We further define the Rademacher complexity $\mathfrak{R}^-_{n/2}(\bar{\G}) \coloneqq \max_{s\in S_n} \E_{\hat{\mathbf{X}},\hat{\mathbf{Y}}}[\hat{\mathfrak{R}}^-_{n/2}(\bar{\G};s) ]$. We can also apply the symmetrization argument above to bound $\E[\sup_\w \{-\mathrm{II.b}\}]$. Due to Assumption~\ref{asm:bounded}, we can bound the II.b term as: With probability $1-\frac{\delta}{2}$, $\delta\in(0,1)$, we have
\begin{align}\label{eq:term_IIb}
|\text{II.b}| & \leq O(1)\hat{\mathfrak{R}}^-_{n/2}(\bar{\G};s)  + O\left(\frac{1}{n} + \sqrt{\frac{10\log(4/\delta)}{ n}}\right).
\end{align}

\subsection{Bounding Rademacher Complexities}\label{sec:rad}

We consider the specific similarity  function:
\begin{align*}
E_\w(\x,\y) = E_1(\w_1;\x)^\top E_2(\w_2;\y).
\end{align*}
We consider $L$-layer neural networks 
\begin{align*}
& E_1(\w_1;\x)\in \F_{1,L}=\{\x\rightarrow \sigma(W_{1,L}\sigma(W_{1,L-1}\dotsc\sigma(W_{1,1}\x))):\Norm{W_{1,l}}_F\leq B_l\},\\
& E_2(\w_2;\y)\in \F_{2,L}=\{\y\rightarrow \sigma(W_{2,L}\sigma(W_{2,L-1}\dotsc\sigma(W_{2,1}\y))):\Norm{W_{2,l}}_F\leq B_l\}.
\end{align*}
Suppose that $W_{1,l} \in \R^{d_{1,l}\times d_{1,l-1}}$, $W_{2,l} \in \R^{d_{2,l}\times d_{2,l-1}}$ and $d_{1,0}=d_1$, $d_{2,0}=d_2$, $d_{1,L}=d_{2,L} = d_L$. Define $W_l^\top = (W_l^{(1)},\dotsc, W_1^{(d_l)})$, where $W_l^{(\iota)}$ is the $\iota$-th row of matrix $W_l$. The following results are adaptions of the results in \citet{golowich2018size}.

\subsubsection{Bounding $\mathfrak{R}_n(\mathcal{E})$}\label{sec:bound_rad_plus}

Define $h:\R^{2d}\rightarrow\R$ as $h(\v) = \v_1^\top\v_2$, where $\v = \begin{pmatrix}
\v_1\\
\v_2
\end{pmatrix}$ and $\v_1,\v_2\in\R^d$. It is clear that $E_\w(\x,\y) = h(E_1(\w_1;\x),E_\w(\w_2;\y))$. Due to Assumption~\ref{asm:nn}, we have $\Norm{E_1(\w_1;\x)}_2\leq 1$ and $\Norm{E_2(\w_2;\y)}_2\leq 1$. For any $\v=\begin{pmatrix}
\v_1\\
\v_2
\end{pmatrix}, \v'=\begin{pmatrix}
\v_1'\\
\v_2'
\end{pmatrix}$ and $\v_1,\v_2,\v_1',\v_2'\in[0,1]^d$, we have
\begin{align*}
\left(h(\v) - h(\v')\right)^2 & \leq 2(\v_1^\top(\v_2-\v_2'))^2 + 2((\v_1-\v_1')^\top \v_2')^2 \leq 2 \Norm{\v-\v'}_2^2,
\end{align*}
where we have used $(a+b)^2\leq 2a^2+2b^2$ and the decomposition $\v_1^\top\v_2-(\v_1')^\top\v_2'=\v_1^\top(\v_2-\v_2')+(\v_1-\v_1')^\top \v_2'$. Thus, we can conclude that $h$ is 1-Lipschitz continuous to $\v$ and apply Lemma~\ref{lem:struct-ext} to the function $E_\w(\x,\y) = h(E_1(\w_1;\x),E_2(\w_2;\y))$:
\begin{align*}
\hat{\mathfrak{R}}^+_{n}(\mathcal{E}) &= \E_{\epsilon_{1:n}}\left[\sup_{e\in\mathcal{E}} \frac{1}{n}\sum_{i=1}^n \epsilon_i E(\x_i,\y_i)\right] \\
& \leq \frac{1}{n} \E_{\boldsymbol{\epsilon}_1,\boldsymbol{\epsilon}_2\in\{\pm 1\}^{nd_L}} \left[\sup_\w \sum_{i=1}^n \sum_{\iota=1}^{d_L} \left(\epsilon_1^{(i,\iota)}E_1^{(\iota)}(\w_1,\x_i) + \epsilon_2^{(i,\iota)}E_2^{(\iota)}(\w_2,\y_i)\right)\right]\\
& \leq \frac{1}{n} \E_{\boldsymbol{\epsilon}_1\in\{\pm 1\}^{n d_L}} \left[\sup_\w \sum_{i=1}^n \sum_{\iota=1}^{d_L} \epsilon_1^{(i,\iota)}E_1^{(\iota)}(\w_1,\x_i) \right] + \frac{1}{n} \E_{\boldsymbol{\epsilon}_1,\boldsymbol{\epsilon}_2\in\{\pm 1\}^{n d_L}} \left[\sup_\w \sum_{i=1}^n \sum_{\iota=1}^{d_L} \epsilon_2^{(i,\iota)}E_2^{(\iota)}(\w_2,\y_i)\right]\\
& = \frac{1}{n} \E_{\boldsymbol{\epsilon}_1\in\{\pm 1\}^{n d_L}} \left[\sup_{W_{1,L},f_{1,L-1}\in \F_{1,L-1}} \sum_{i=1}^n \sum_{\iota=1}^{d_L} \epsilon_1^{(i,\iota)}\sigma(f_{1,L-1}(\x_i)^\top W_{1,L}^{(\iota)}) \right] \\
& \quad\quad + \frac{1}{n} \E_{\boldsymbol{\epsilon}_2\in\{\pm 1\}^{nd_L}} \left[\sup_{W_{2,L},f_{2,L-1}\in \F_{2,L-1}} \sum_{i=1}^n \sum_{\iota=1}^{d_L} \epsilon_2^{(i,\iota)}\sigma(f_{2,L-1}(\y_i)^\top W_{2,L}^{(\iota)})\right].
\end{align*}
For simplicity, we can only consider one of the terms above and neglect the index of embedding networks (1 or 2). Let $\x_i$ be one of $\x_i$ and $\y_i$. Cauchy-Schwarz and $(\sup x)^2 \leq \sup x^2$ imply
\begin{align}\nonumber
& \E_{\boldsymbol{\epsilon}\in\{\pm 1\}^{nd_L}} \left[\sup_{W_L,f\in \F_{L-1}} \sum_{i=1}^n \sum_{\iota=1}^{d_L} \boldsymbol{\epsilon}^{(i,\iota)}\sigma(f(\x_i)^\top W_{L}^{(\iota)})\right] \\\nonumber
& \leq \left(\E_{\boldsymbol{\epsilon}\in\{\pm 1\}^{nd_L}} \left[\left(\sup_{W_L,f\in \F_{L-1}} \sum_{i=1}^n \sum_{\iota=1}^{d_L} \boldsymbol{\epsilon}^{(i,\iota)}\sigma(f(\x_i)^\top W_{L}^{(\iota)})\right)^2\right]\right)^{\frac{1}{2}} \\\label{eq:to_l2}
& \leq \left(\E_{\boldsymbol{\epsilon}\in\{\pm 1\}^{nd_L}} \left[\sup_{W_L,f\in \F_{L-1}}  \left(\sum_{i=1}^n \sum_{\iota=1}^{d_L} \boldsymbol{\epsilon}^{(i,\iota)}\sigma(f(\x_i)^\top W_{L}^{(\iota)})\right)^2\right]\right)^{\frac{1}{2}}.
\end{align}
For a $\lambda>0$, Jensen's inequality implies that
\begin{align}\nonumber
& \E_{\boldsymbol{\epsilon}\in\{\pm 1\}^{nd_L}} \left[\sup_{W_L,f\in \F_{L-1}}  \left(\sum_{i=1}^n \sum_{\iota=1}^{d_L} \boldsymbol{\epsilon}^{(i,\iota)}\sigma(f(\x_i)^\top W_{L}^{(\iota)})\right)^2\right]\\\nonumber
& = \frac{1}{\lambda} \log\exp\left(\lambda \E_{\boldsymbol{\epsilon}}\left[\sup_{W_L,f\in \F_{L-1}}   \left(\sum_{i=1}^n \sum_{\iota=1}^{d_L} \boldsymbol{\epsilon}^{(i,\iota)}\sigma(f(\x_i)^\top W_{L}^{(\iota)})\right)^2\right]\right)\\\label{eq:to_exp}
& \leq \frac{1}{\lambda} \log\left(\E_{\boldsymbol{\epsilon}} \exp\left(\lambda\sup_{W_L,f\in \F_{L-1}}  \left(\sum_{i=1}^n \sum_{\iota=1}^{d_L} \boldsymbol{\epsilon}^{(i,\iota)}\sigma(f(\x_i)^\top W_{L}^{(\iota)})\right)^2\right)\right).
\end{align}
We utilize the following facts: (i) $\sup_x x^2 \leq \max\{(\sup_x x)^2, (\sup_x(-x))^2\}$ and for a Rademacher random variable $\epsilon$, we have $\epsilon,-\epsilon$ are i.i.d.; (ii) Lemma~\ref{lem:rade-cont} with $\tau(t)=\exp(\lambda t^2)$ and $\sigma$ is 1-Lipschitz; (iii) $(\sup x)^2 \leq \sup x^2$:
\begin{align*}
& \E_{\boldsymbol{\epsilon}} \exp\left(\lambda\sup_{W_L,f\in \F_{L-1}}  \left(\sum_{i=1}^n \sum_{\iota=1}^{d_L} \boldsymbol{\epsilon}^{(i,\iota)}\sigma(f(\x_i)^\top W_{L}^{(\iota)})\right)^2\right) \\
& \stackrel{\text{(i)}}{\leq} 2 \E_{\boldsymbol{\epsilon}} \exp\left(\lambda\left(\sup_{W_L,f\in \F_{L-1}}  \sum_{i=1}^n \sum_{\iota=1}^{d_L} \boldsymbol{\epsilon}^{(i,\iota)}\sigma(f(\x_i)^\top W_{L}^{(\iota)})\right)^2\right)\\
& \stackrel{\text{(ii)}}{\leq} 2 \E_{\boldsymbol{\epsilon}} \exp\left(\lambda\left(\sup_{W_L,f\in \F_{L-1}}  \sum_{i=1}^n \sum_{\iota=1}^{d_L} \boldsymbol{\epsilon}^{(i,\iota)}f(\x_i)^\top W_{L}^{(\iota)}\right)^2\right) \\
& \stackrel{\text{(iii)}}{\leq} 2 \E_{\boldsymbol{\epsilon}} \exp\left(\lambda\sup_{W_L,f\in \F_{L-1}}  \left(\sum_{i=1}^n \sum_{\iota=1}^{d_L} \boldsymbol{\epsilon}^{(i,\iota)}f(\x_i)^\top W_{L}^{(\iota)}\right)^2\right).
\end{align*}
Due to (iv) $\Norm{W_l}_F \leq B_l$ for each $l\in[L]$, we further have
\begin{align*}
& \E_{\boldsymbol{\epsilon}} \exp\left(\lambda\sup_{W_L,f\in \F_{L-1}}  \left(\sum_{i=1}^n \sum_{\iota=1}^{d_L} \boldsymbol{\epsilon}^{(i,\iota)}\sigma(f(\x_i)^\top W_{L}^{(\iota)})\right)^2\right)\\
& \leq 2 \E_{\boldsymbol{\epsilon}} \exp\left(\lambda\sup_{W_L,f\in \F_{L-1}}  \left(\sum_{\iota=1}^{d_L} \Norm{\sum_{i=1}^n \boldsymbol{\epsilon}^{(i,\iota)}f(\x_i)}_2 \Norm{W_{L}^{(\iota)}}_2\right)^2\right)\\
& \leq 2 \E_{\boldsymbol{\epsilon}} \exp\left(\lambda\sup_{W_L,f\in \F_{L-1}}  \Norm{W_{L}}_F^2\sum_{\iota=1}^{d_L} \Norm{\sum_{i=1}^n \boldsymbol{\epsilon}^{(i,\iota)}f(\x_i)}_2^2 \right)\\
& \stackrel{\text{(iv)}}{\leq}  2 \E_{\boldsymbol{\epsilon}} \exp\left(\lambda B_L^2 \sup_{f\in \F_{L-1}}  \sum_{\iota=1}^{d_L} \Norm{\sum_{i=1}^n \boldsymbol{\epsilon}^{(i,\iota)}f(\x_i)}_2^2 \right) \\
& = 2 \E_{\boldsymbol{\epsilon}} \exp\left(\lambda B_L^2 \sup_{W_{L-1},f\in \F_{L-2}}  \sum_{\iota=1}^{d_L} \Norm{\sum_{i=1}^n \boldsymbol{\epsilon}^{(i,\iota)}\sigma(W_{L-1}f(\x_i))}_2^2 \right).
\end{align*}
Due to the positive-homogeneous property of the activation function $\sigma(\cdot)$, we have
\begin{align*}
&  \sum_{\iota=1}^{d_L} \Norm{\sum_{i=1}^n \boldsymbol{\epsilon}^{(i,\iota)}\sigma(W_{L-1}f(\x_i))}_2^2   =  \sum_{\iota=1}^{d_L} \Norm{\begin{pmatrix}
\sum_{i=1}^n \boldsymbol{\epsilon}^{(i,\iota)}\sigma(  f(\x_i)^\top W_{L-1}^{(1)})\\
\vdots\\
\sum_{i=1}^n \boldsymbol{\epsilon}^{(i,\iota)}\sigma(  f(\x_i)^\top W_{L-1}^{(d_{L-1})})
\end{pmatrix}}_2^2\\
&  = \sum_{\iota=1}^{d_L} \sum_{r=1}^{d_{L-1}}\left(\sum_{i=1}^n \boldsymbol{\epsilon}^{(i,\iota)} \sigma(f(\x_i)^\top W_{L-1}^{(r)})\right)^2  = \sum_{r=1}^{d_{L-1}} \Norm{W_{L-1}^{(r)}}_2^2  \sum_{\iota=1}^{d_L} \left(\sum_{i=1}^n \boldsymbol{\epsilon}^{(i,\iota)} \sigma\left(f(\x_i)^\top \frac{W_{L-1}^{(r)}}{\Norm{W_{L-1}^{(r)}}_2}\right)\right)^2 \\
& \leq \Norm{W_{L-1}}_F^2 \max_{r\in[d_{L-1}]} \sum_{\iota=1}^{d_L} \left(\sum_{i=1}^n \boldsymbol{\epsilon}^{(i,\iota)} \sigma\left(f(\x_i)^\top \frac{W_{L-1}^{(r)}}{\Norm{W_{L-1}^{(r)}}_2}\right)\right)^2  \\
& \leq B_{L-1}^2 \sup_{\w:\Norm{\w}_2\leq 1} \sum_{\iota=1}^{d_L} \left(\sum_{i=1}^n \boldsymbol{\epsilon}^{(i,\iota)} \sigma\left(f(\x_i)^\top \w\right)\right)^2 .
\end{align*}
Thus, we can obtain
\begin{align*}
& \E_{\boldsymbol{\epsilon}} \exp\left(\lambda\sup_{W_L,f\in \F_{L-1}}  \left(\sum_{i=1}^n \sum_{\iota=1}^{d_L} \boldsymbol{\epsilon}^{(i,\iota)}\sigma(f(\x_i)^\top W_{L}^{(\iota)})\right)^2\right) \\
& \leq 2 \E_{\boldsymbol{\epsilon}} \exp\left(\lambda B_L^2B_{L-1}^2 \sup_{\Norm{\w}_2\leq 1,f\in \F_{L-2}}  \sum_{\iota=1}^{d_L} \left(\sum_{i=1}^n \boldsymbol{\epsilon}^{(i,\iota)}\sigma(f(\x_i)^\top \w)\right)^2 \right)\\
& \leq 2 \E_{\epsilon_{1:n}} \exp\left(d_L\lambda B_L^2B_{L-1}^2 \sup_{\Norm{\w}_2\leq 1,f\in \F_{L-2}}  \left(\sum_{i=1}^n \epsilon_i\sigma(f(\x_i)^\top \w)\right)^2 \right).
\end{align*}
Applying Lemma~\ref{lem:rade-cont} with $\tau_\lambda(t) = \exp(d_L\lambda B_L^2 B_{L-1}^2 t^2)$ gives
\begin{align*}
& \E_{\boldsymbol{\epsilon}} \exp\left(\lambda\sup_{W_L,f\in \F_{L-1}}  \left(\sum_{i=1}^n \sum_{\iota=1}^{d_L} \boldsymbol{\epsilon}^{(i,\iota)}\sigma(f(\x_i)^\top W_{L}^{(\iota)})\right)^2\right) \leq 2\E_{\epsilon_{1:n}}\left[\tau_\lambda\left(\sup_{\Norm{\w}_2\leq 1,f\in \F_{L-2}}  \left|\sum_{i=1}^n \epsilon_i\sigma(f(\x_i)^\top \w)\right|\right)\right]\\
& \leq 2\E_{\epsilon_{1:n}}\left[\tau_\lambda\left(\sup_{\Norm{\w}_2\leq 1,f\in \F_{L-2}}  \sum_{i=1}^n \epsilon_i\sigma(f(\x_i)^\top  \w)\right)\right] + 2\E_{\epsilon_{1:n}}\left[\tau_\lambda\left(\sup_{\Norm{\w}_2\leq 1,f\in \F_{L-2}}  -\sum_{i=1}^n \epsilon_i\sigma(f(\x_i)^\top  \w)\right)\right]\\
& = 4 \E_{\epsilon_{1:n}}\left[\tau_\lambda\left(\sup_{\Norm{\w}_2\leq 1,f\in \F_{L-2}}  \sum_{i=1}^n \epsilon_i\sigma(f(\x_i)^\top  \w)\right)\right] \leq 4 \E_{\epsilon_{1:n}}\left[\tau_\lambda\left(\sup_{\Norm{\w}_2\leq 1,f\in \F_{L-2}}  \sum_{i=1}^n \epsilon_if(\x_i)^\top  \w\right)\right]\\
& \leq 4 \E_{\epsilon_{1:n}}\left[\tau_\lambda\left(\sup_{W_{L-2},f\in \F_{L-3}}  \Norm{\sum_{i=1}^n \epsilon_i \sigma(W_{L-2}f(\x_i))}_2\right)\right]\leq 4 \E_{\epsilon_{1:n}}\left[\tau_\lambda\left(B_{L-2}\sup_{\Norm{\w}_2\leq 1,f\in \F_{L-3}}  \left|\sum_{i=1}^n \epsilon_if(\x_i)^\top  \w\right|\right)\right],
\end{align*}
where in the last step we have used  the positive-homogeneous property of $\sigma(\cdot)$ (e.g., analysis similar to handling the supremum over $W_L,f\in\mathcal{F}_{L-1}$).
Applying the inequality above recursively over the layers leads to 
\begin{align*}
\E_{\boldsymbol{\epsilon}} \exp\left(\lambda\sup_{W_L,f\in \F_{L-1}}  \left(\sum_{i=1}^n \sum_{\iota=1}^{d_L} \boldsymbol{\epsilon}^{(i,\iota)}\sigma(f(\x_i)^\top W_{L}^{(\iota)})\right)^2\right) \leq 2^L \E_{\epsilon_{1:n}}\left[\tau_\lambda\left(\prod_{l=1}^{L-2} B_l\Norm{\sum_{i=1}^n \epsilon_i \x_i}_2\right)\right].
\end{align*}
Plug the inequality above into \eqref{eq:to_exp}:
\begin{align*}
& \E_{\boldsymbol{\epsilon}\in\{\pm 1\}^{nd_L}} \left[\sup_{W_L,f\in \F_{L-1}}  \left(\sum_{i=1}^n \sum_{\iota=1}^{d_L} \boldsymbol{\epsilon}^{(i,\iota)}\sigma(f(\x_i)^\top W_{L}^{(\iota)})\right)^2\right]\leq \frac{1}{\lambda} \log\left(2^L \E_{\epsilon_{1:n}}\exp\left(d_L\lambda\left(\prod_{l=1}^L B_l^2\right)\Norm{\sum_{i=1}^n \epsilon_i \x_i}_2^2\right)\right).
\end{align*}
Let $\tilde{\lambda} = d_L\lambda\left(\prod_{l=1}^L B_l^2\right)$ and choose $\lambda = \frac{1}{8esd_L\left(\prod_{l=1}^L B_l^2\right)}$, $s = \left( \sum_{1\leq i\leq \tilde{i}\leq n} (\x_i^\top X_{\tilde{i}})^2\right)^{\frac{1}{2}}$. Then, $\tilde{\lambda} = 1/(8es)$ and we can apply Lemma~\ref{lem:chaos} to show $\E_{\epsilon_{1:n}}\left[\exp\left(2\tilde{\lambda} \sum_{1\leq i\leq \tilde{i}\leq n} \epsilon_i\epsilon_{\tilde{i}}\x_i^\top X_{\tilde{i}}\right) \right] \leq 2$ such that
\begin{align*}
& \E_{\epsilon_{1:n}}\exp\left(\tilde{\lambda} \Norm{\sum_{i=1}^n \epsilon_i \x_i}_2^2\right) = \E_{\epsilon_{1:n}}\left[\exp\left(\tilde{\lambda} \sum_{i=1}^n \Norm{\x_i}_2^2 + 2\tilde{\lambda} \sum_{1\leq i\leq \tilde{i}\leq n} \epsilon_i\epsilon_{\tilde{i}}\x_i^\top X_{\tilde{i}}\right) \right]\\
& = \exp\left(\tilde{\lambda} \sum_{i=1}^n \Norm{\x_i}_2^2\right)\E_{\epsilon_{1:n}}\left[\exp\left(2\tilde{\lambda} \sum_{1\leq i\leq \tilde{i}\leq n} \epsilon_i\epsilon_{\tilde{i}}\x_i^\top X_{\tilde{i}}\right) \right] \leq 2\exp\left(\tilde{\lambda} \sum_{i=1}^n \Norm{\x_i}_2^2\right).
\end{align*}
Since $\lambda = \frac{1}{8esd_L\left(\prod_{l=1}^L B_l^2\right)}$ and $s^2 \leq \sum_{1\leq i\leq \tilde{i}\leq n} \Norm{\x_i}_2^2 \Norm{\x_{\tilde{i}}}_2^2 \leq \left(\sum_{i=1}^n \Norm{\x_i}_2^2\right)^2$, we can obtain
\begin{align*}
& \E_{\boldsymbol{\epsilon}\in\{\pm 1\}^{nd_L}} \left[\sup_{W_L,f\in \F_{L-1}}  \left(\sum_{i=1}^n \sum_{\iota=1}^{d_L} \boldsymbol{\epsilon}^{(i,\iota)}\sigma(f(\x_i)^\top W_{L}^{(\iota)})\right)^2\right]\leq \frac{1}{\lambda} \log\left(2^{L+1} \exp\left(\tilde{\lambda} \sum_{i=1}^n \Norm{\x_i}_2^2\right)\right) \\
& = \frac{(L+1)\log 2}{\lambda} + d_L\left(\prod_{l=1}^L B_l^2\right) \sum_{i=1}^n \Norm{\x_i}_2^2 \leq d_L\left(\prod_{l=1}^L B_l^2\right) \left(8(L+1)e\log 2 + 1\right)\sum_{i=1}^n \Norm{\x_i}_2^2.
\end{align*}
Due to \eqref{eq:to_l2}, we can obtain 
\begin{align}\label{eq:rad_plus}
\hat{\mathfrak{R}}^+_{n}(\mathcal{E}) &= \E_{\epsilon_{1:n}}\left[\sup_{e\in\mathcal{E}} \frac{1}{n}\sum_{i=1}^n \epsilon_i E(\x_i,\y_i)\right] \leq \frac{1}{\sqrt{n}} \sqrt{d_L\left(\prod_{l=1}^L B_l^2\right) \left(8(L+1)e\log 2 + 1\right)} (c_1+c_2).
\end{align}

\subsubsection{Bounding $\mathfrak{R}^-_{n/2}(\bar{\G})$}

We define the dataset $\hat{\mathbf{D}}_s \coloneqq \{(\x_{s(1)}, \y_{s(2)}), \dotsc, (\x_{s(n-1)}, \y_{s(n)})\}$. Consider $\mathcal{E}\coloneqq \{(\x,\y)\mapsto E_\w (\x,\y) \mid \w\in \W\}$ and the following two function classes
\begin{align*}
\bar{\mathcal{E}}\coloneqq \{(\x,\y)\mapsto \bar{E}_\w (\x,\y) \mid \w\in \W\}, \quad \bar{\G}=\{(\x,\y)\mapsto \exp(\bar{E}_\w(\x,\y))\mid \w\in\W\}.
\end{align*}
The empirical Rademacher complexities of $\bar{\mathcal{E}},\bar{\G}$ on $\hat{\mathbf{D}}_s$ can be defined as
\begin{align*}
\hat{\mathfrak{R}}^-_{n/2}(\bar{\mathcal{E}};s) & = \E_{\epsilon_{1:n/2}}\left[\frac{2}{n}\sup_\w \sum_{i=1}^{n/2} \epsilon_i \bar{E}_\w(\x_{s(2i-1)},\y_{s(2i)}) \right],\\
\hat{\mathfrak{R}}^-_{n/2}(\bar{\G};s) & = \E_{\epsilon_{1:n/2}}\left[\frac{2}{n}\sup_\w \sum_{i=1}^{n/2} \epsilon_i \exp(\bar{E}_\w(\x_{s(2i-1)},\y_{s(2i)})) \right].
\end{align*}
Note that $t\mapsto\exp(t)$ is $1$-Lipschitz when $t\leq 0$. Due to Lemma~\ref{lem:talagrand_contraction} and $\bar{E}_\w(\x,\y) = (E_\w(\x,\y) - 1)/\tau$, 
\begin{align}\label{eq:rad_minus}
\hat{\mathfrak{R}}^-_{n/2}(\bar{\G};s) \leq \hat{\mathfrak{R}}^-_{n/2}(\bar{\mathcal{E}};s) = \frac{1}{\tau} \hat{\mathfrak{R}}^-_{n/2}(\mathcal{E};s). 
\end{align}
Then, we can bound $\hat{\mathfrak{R}}^-_{n/2}(\mathcal{E};s)$ in the way similar to bounding $\hat{\mathfrak{R}}^+_{n}(\mathcal{E})$ in Section~\ref{sec:bound_rad_plus}.

\section{Proof of Theorem~\ref{thm:nonparam_sol}}\label{sec:proof_nonparam}

\begin{proof}
The problem in \eqref{eq:prob_zeta} is equivalent to
\begin{align}\label{eq:prob_zeta_equiv}
\min_{\boldsymbol{\zeta}\in\R^n} \left\{\frac{1}{n}\sum_{i=1}^n \tau \log \left(\sum_{j=1}^n \exp((E(\x_i,\y_j)- E(\x_i,\y_i) - \zeta^{(j)})/\tau)\right) + \frac{1}{n}\sum_{j=1}^n \zeta^{(j)}\right\}.
\end{align}
We define that $\Phi(\boldsymbol{\zeta})\coloneqq \frac{1}{n}\sum_{i=1}^n \tau \log \left(\sum_{j=1}^n \exp((E(\x_i,\y_j)- E(\x_i,\y_i)-\zeta^{(j)})/\tau)\right) + \frac{1}{n}\sum_{j=1}^n \zeta^{(j)}$. Due to the first-order optimality
condition, any optimal solution $\boldsymbol{\zeta}_*$ to \eqref{eq:prob_zeta} satisfies
\begin{align*}
\exp(\zeta_*^{(j)}/\tau) = \sum_{j'=1}^n \frac{\exp(E(\x_{j'},\y_j)/\tau)}{\sum_{i'=1}^n \exp((E(\x_{j'},\y_{i'}) - \zeta_*^{i'})/\tau)},\quad \forall j\in[n].    
\end{align*}
Then, we can obtain \eqref{eq:opt_zeta} by changing the variable $\bar{q}^{(j)} = \exp(\zeta_*^{(j)}/\tau)$ for any $j\in[n]$. 

Due to the property of the log-sum-exp function and $E(\x_i,\y_j) \in[-1,1]$, we have 
\begin{align*}
\Phi(\boldsymbol{\zeta}) \geq \frac{1}{n}\sum_{i=1}^n \max_{j\in [n]} \left\{E(\x_i,\y_j)- E(\x_i,\y_i)-\zeta^{(j)}\right\} + \frac{1}{n}\sum_{j=1}^n \zeta^{(j)} \geq -2 - \min_{j\in [n]} \zeta^{(j)}   + \frac{1}{n}\sum_{j=1}^n \zeta^{(j)} \geq -2.
\end{align*}
Thus, $\Phi(\boldsymbol{\zeta})$ is proper convex. 
Besides, each line parallel to the diagonal line $\mathfrak{d}_n =\{\boldsymbol{\zeta}\mid \boldsymbol{\zeta}=z\mathbf{1}_n, z\in\R\}$ can be expressed as $\bar{\mathfrak{d}}_n(\b)=\left\{\boldsymbol{\zeta}\Big\mid \boldsymbol{\zeta}=z\mathbf{1}_n +  \begin{bmatrix}
 0\\
 \b
\end{bmatrix}, z\in\R\right\}$ with some unique $\b\in\R^{n-1}$. For example, in the case $n=2$, a line $\bar{\mathfrak{d}}_2(b)$ with some $b\in\R$ can be rewritten as $\bar{\mathfrak{d}}_2(b)=\{\boldsymbol{\zeta}\mid \zeta^{(2)}=\zeta^{(1)} + b\}$, which is parallel to the diagonal line $\mathfrak{d}_2=\{\boldsymbol{\zeta}\mid \zeta^{(2)}=\zeta^{(1)}\}$. For any point $\boldsymbol{\zeta}$ on a line $\bar{\mathfrak{d}}_n(\b)$, 
\begin{align*}
\Phi(\boldsymbol{\zeta}) &= \frac{1}{n}\sum_{i=1}^n \tau \log \left(\sum_{j=1}^n \exp((E(\x_i,\y_j) - E(\x_i,\y_i)-z+b^{(j)})/\tau)\right) + z+\frac{1}{n}\sum_{j=1}^n b^{(j)}\\
& = \frac{1}{n}\sum_{i=1}^n \tau \log \left(\exp(-z/\tau)\sum_{j=1}^n \exp((E(\x_i,\y_j)-E(\x_i,\y_i)-b^{(j)})/\tau)\right) + z+\frac{1}{n}\sum_{j=1}^n b^{(j)}\\
& = \frac{1}{n}\sum_{i=1}^n \tau \log \left(\sum_{j=1}^n \exp((E(\x_i,\y_j)-E(\x_i,\y_i)-b^{(j)})/\tau)\right) +\frac{1}{n}\sum_{j=1}^n b^{(j)},
\end{align*}
where the expression on the R.H.S is fixed when $z$ varies. Then, the value of $\Phi(\boldsymbol{\zeta})$ does not change along the line $\bar{\mathfrak{d}}_n(\b)$. Note that every point $\boldsymbol{\zeta}\in\R^n$ is uniquely located on one line $\bar{\mathfrak{d}}_n(\b)$ parallel to the diagonal line $\mathfrak{d}_n$. Thus, if $\boldsymbol{\zeta}_*=z_* \mathbf{1}_n + \begin{bmatrix}
 0\\
 \b_*
\end{bmatrix}$ with specific $z_*\in\R$ and $\b_*\in\R^{n-1}$ is a minimum of $\Phi(\boldsymbol{\zeta})$, then any point on the line $\bar{\mathfrak{d}}_n(\b_*)$ is a minimum of $\Phi(\boldsymbol{\zeta})$.

There may exist uncountably infinite many $\b_*\in\R^{n-1}$ and every point on $\bar{\mathfrak{d}}_n(\b_*)$ minimizes $\Phi(\boldsymbol{\zeta})$. However, we can rule out such a case since the set of minima of a proper convex function is convex and $\Phi(\boldsymbol{\zeta})$ is strictly convex along any direction other than the diagonal and parallel lines\footnote{Each log-sum-exp function is strictly convex along any direction other than diagonal and parallel lines.}. Thus, there is only a unique $\b_*\in\R^{n-1}$ and any point on $\bar{\mathfrak{d}}_n(\b_*)=\left\{\boldsymbol{\zeta}\Big\mid \zeta=z\mathbf{1}_n + \begin{bmatrix}
 0\\
 \b_*
\end{bmatrix},z\in\R\right\}$ minimizes $\Phi(\boldsymbol{\zeta})$, i.e., the minimum of  $\Phi(\boldsymbol{\zeta})$ is unique up to an arbitrary scalar additive term $z\in\R$. 

\end{proof}

\section{More Discussions on NUCLR in Algorithm~\ref{alg:nuclr}}

\subsection{Computational and Memory Overheads of NUCLR}\label{sec:discuss_alg_memory}

The computational cost of updating $\mathbf{w}$ in NUCLR is $O(B d)$, where $B$ is the batch size and $d$ is the total number of parameters in the model $\mathbf{w}$. This cost is identical to that of SogCLR. The additional computational cost of NUCLR lies in the update of $\boldsymbol{\zeta}$ (line 4 and line 5 of our algorithm). 

The computation of $G(\zeta_t^{(j)})$ w.r.t. $\zeta_t^{(j)}$ in (14) requires $\{u_{t+1}^{(i)}\}_{i\in \mathcal{B}_t}$ and $\frac{\partial}{\partial \zeta^{(j)}} \phi_i(\mathbf{w}_t,\boldsymbol{\zeta}_t, \mathcal{B}_t)$. Here $\{u_{t+1}^{(i)}\}_{i\in \mathcal{B}_t}$ is also needed in SogCLR so it does not result in additional cost. Besides, note that $\frac{\partial}{\partial \zeta^{(j)}} \phi_i(\mathbf{w}_t,\boldsymbol{\zeta}_t, \mathcal{B}_t)  = -\frac{n-1}{(B-1)\tau} \colorbox{gray!30}{$\exp\left(\frac{E_{\mathbf{w}_t}(\mathbf{x}_i,\mathbf{y}_j) - E_{\mathbf{w}_t}(\mathbf{x}_i,\mathbf{y}_i) - \zeta_t^{(j)}}{\tau}\right )$}$, where the shaded part is already computed in the forward propagation and does not incur additional cost. The computation of the gradient w.r.t. $\zeta_t^{(j)}$ only involves a summation of $B$ scalars such that the computational cost of line 4 and line 5 of NUCLR is only $O(B^2)$. Since $d$ exceeds 70 million in our experiments, the $O(B^2)$ additional computational cost of NUCLR is negligible compared to the dominant $O(B d)$ term. 



Compared to SogCLR, \alg~needs to store one extra $n$-dimensional vector $\boldsymbol{\zeta}$. Maintaining $\boldsymbol{\zeta}$ in GPU only requires less than 100MB for 12 million data points, which is negligible compared to the GPU memory required for backpropagation. Moreover, we may instead maintain the vector $\boldsymbol{\zeta}$ in CPU and only transfer those needed $\{\zeta^{(j)}\}_{j\in\B_t}$ to GPU in each iteration. The overhead can be further reduced by overlapping the communication and computation.

\subsection{Margin Interpretation of NUCLR}\label{sec:discuss_margin}
 
Cross-entropy and contrastive losses with a positive additive margin have been widely studied in the literature~\citep{li2002perceptron,liu2016large,wang2018additive,cao2019learning,li2019overfitting,zhu2020eqco}, which can be viewed as a smooth version of the hinge loss to separate the matching (positive) pair $(\x_i,\y_i)$ from negative pairs $\{(\x_i,\y_j)\mid \y_j\neq \y_i,\y_j\in\Y\}$. In supervised learning tasks such as face verification and multi-class classification, using a relatively large positive margin has been shown to be beneficial~\citep{wang2018additive,cao2019learning}. However, the ``false negative'' issue is more pronounced in self-supervised learning. Determining the appropriate (positive or negative) margin becomes more difficult, as aggressively and uniformly pushing negative pairs away from positive pairs may hurt the performance~\citep{xie2022delving}. As shown in the objective in \eqref{eq:w_step}, our \alg~algorithm adopts an \emph{individualized} negative margin $\zeta^{(j)}$ for each negative data $\y_j$ when updating the model parameter $\w$. Rather than relying on an expensive grid search for individualized margins, our method learns them in a principled way. 
Recall $\zeta^{(j)}$ can serve as a measure of the popularity since $\tilde{q}^{(j)} \propto \exp(\zeta^{(j)}/\tau)$ when $\zeta^{(j)}$ is optimized. As a result, \alg~may help \emph{tolerate} potential false negatives because the negative margin $\zeta^{(j)}$ between pairs $(\x_i, \y_i)$ and $(\x_i, \y_j)$ is larger when $\y_j$ is popular, as it is more likely to be a false negative.

\section{More Experimental Results}


\subsection{Detailed Setup of the Toy Experiment in Section~\ref{sec:noparam}}\label{sec:details_toy}

We construct a dataset $\hat{\mathbf{S}}=\{(\x_i,\y_i)\}_{i=1}^n$ by uniformly sampling $\x_1,\dotsc,\x_n$ from $\X$ and then sampling each $\y_j$ from $p_j = p(\cdot\mid \x_j)$ by rejection sampling. The ground-truth $\q$ can be computed by the analytic expression of $p(\y\mid\x)$. To solve the optimization problem in \eqref{eq:prob_zeta}, we initialize $\boldsymbol{\zeta}_0 = \mathbf{0}_n$ and obtain $\boldsymbol{\zeta}_*$ and $\tilde{\q}' = \exp(\boldsymbol{\zeta}_*/\tau)$ by running gradient descent until the gradient norm $\leq 10^{-15}$. 
Besides, we estimate the true risk $\L= -\E_{\x,\y}[\tau \log p(\y\mid\x)]$ by $-\frac{1}{N}\sum_{i=1}^N \tau \log p(\y_i\mid\x_i)$ on $N=50,000$ sampled pairs and estimate $Z>0$ by $\frac{\|\tilde{\q}'\|_\infty}{\|\q\|_\infty}$ to obtain $\tilde{\q} = \frac{\tilde{\q}'}{Z}$. Then we plug $\tilde{\q}$ into~\eqref{eq:mle_mis_approx} to calculate our empirical risk $\hat{\L}$.  It is worth noting that estimating the true risk $\L$ and the constant $Z$ is only for the plots in Figure~\ref{fig:toy}, which is not necessary for real-world empirical risk minimization problems. 

\subsection{Additional Plot of the Toy Experiment in Section~\ref{sec:noparam}}


\begin{figure}[h]
\begin{subfigure}[b]{0.33\textwidth}
\centering
		\includegraphics[width=0.99\linewidth]{./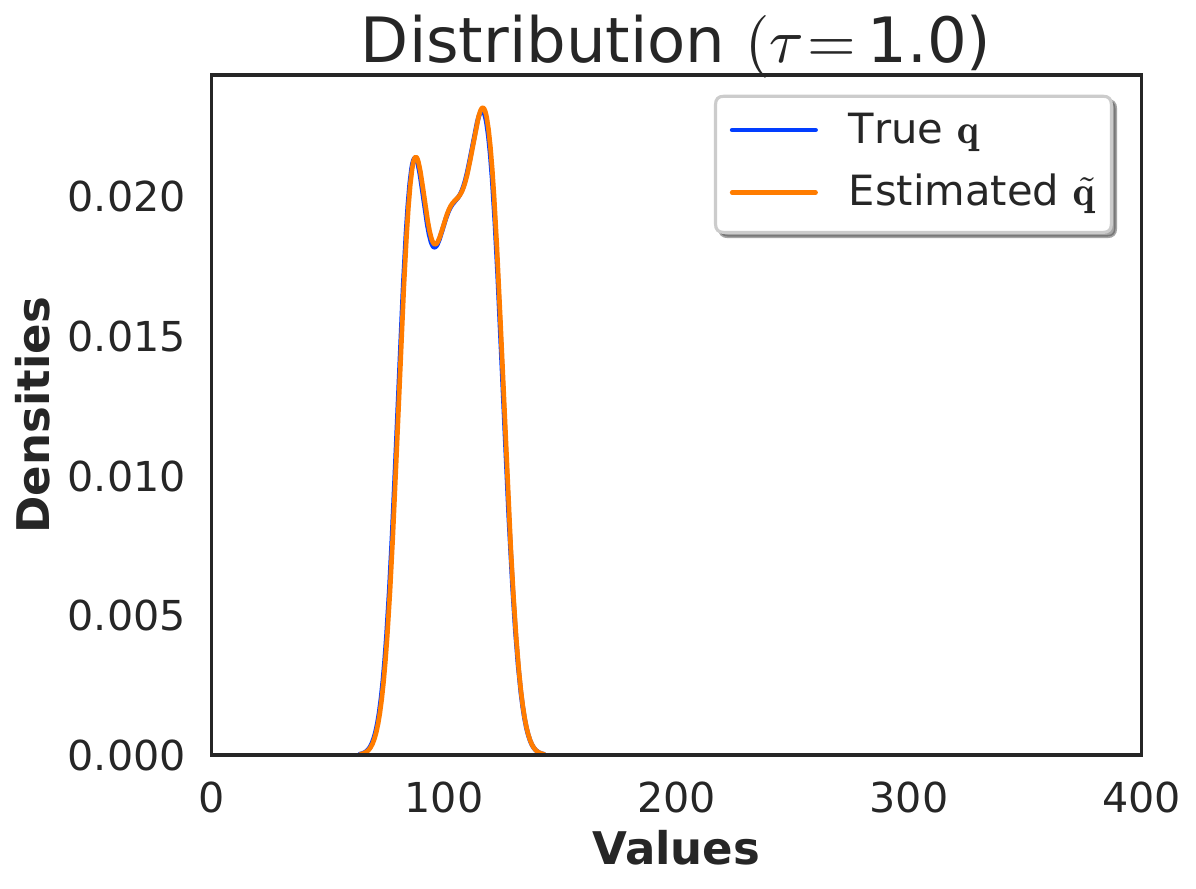}
\end{subfigure}
\begin{subfigure}[b]{0.33\textwidth}
		\centering
		\includegraphics[width=0.98\linewidth]{./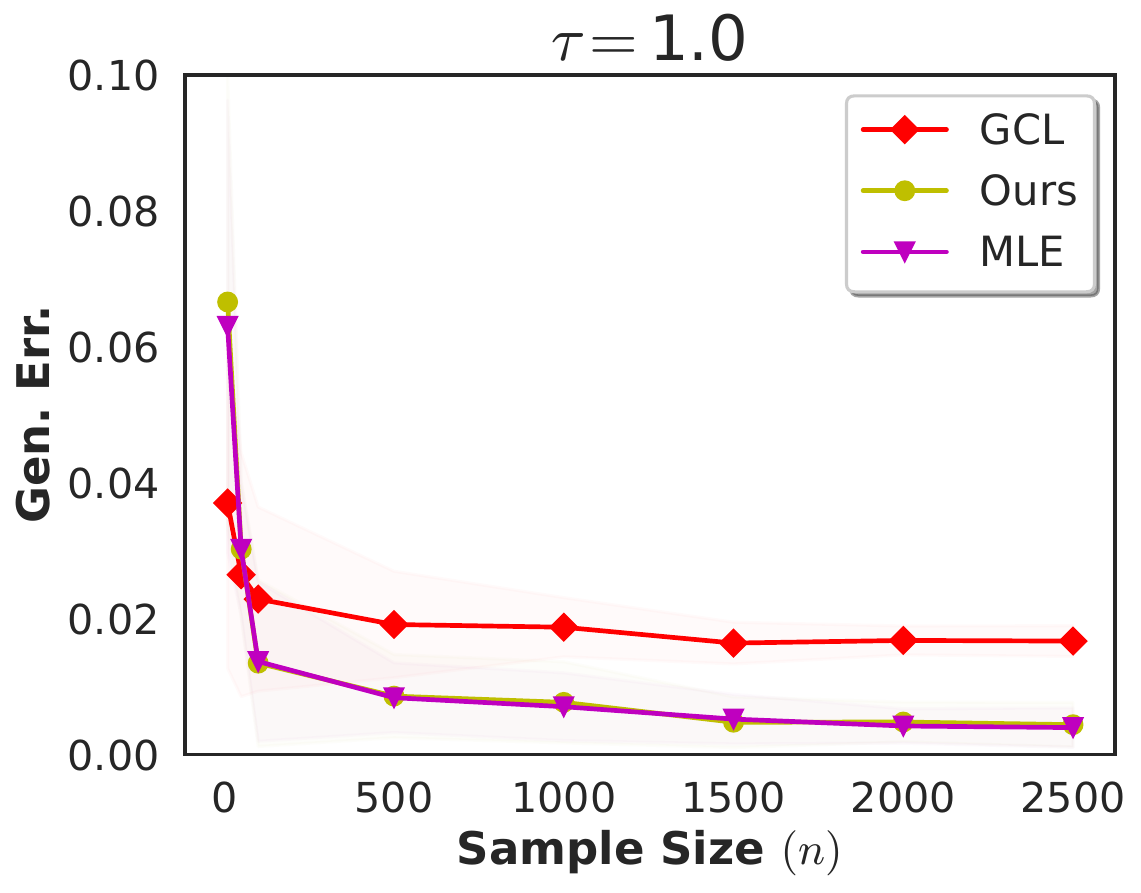}
\end{subfigure}
\begin{subfigure}[b]{0.33\textwidth}
		\centering
		\includegraphics[width=0.98\linewidth]{./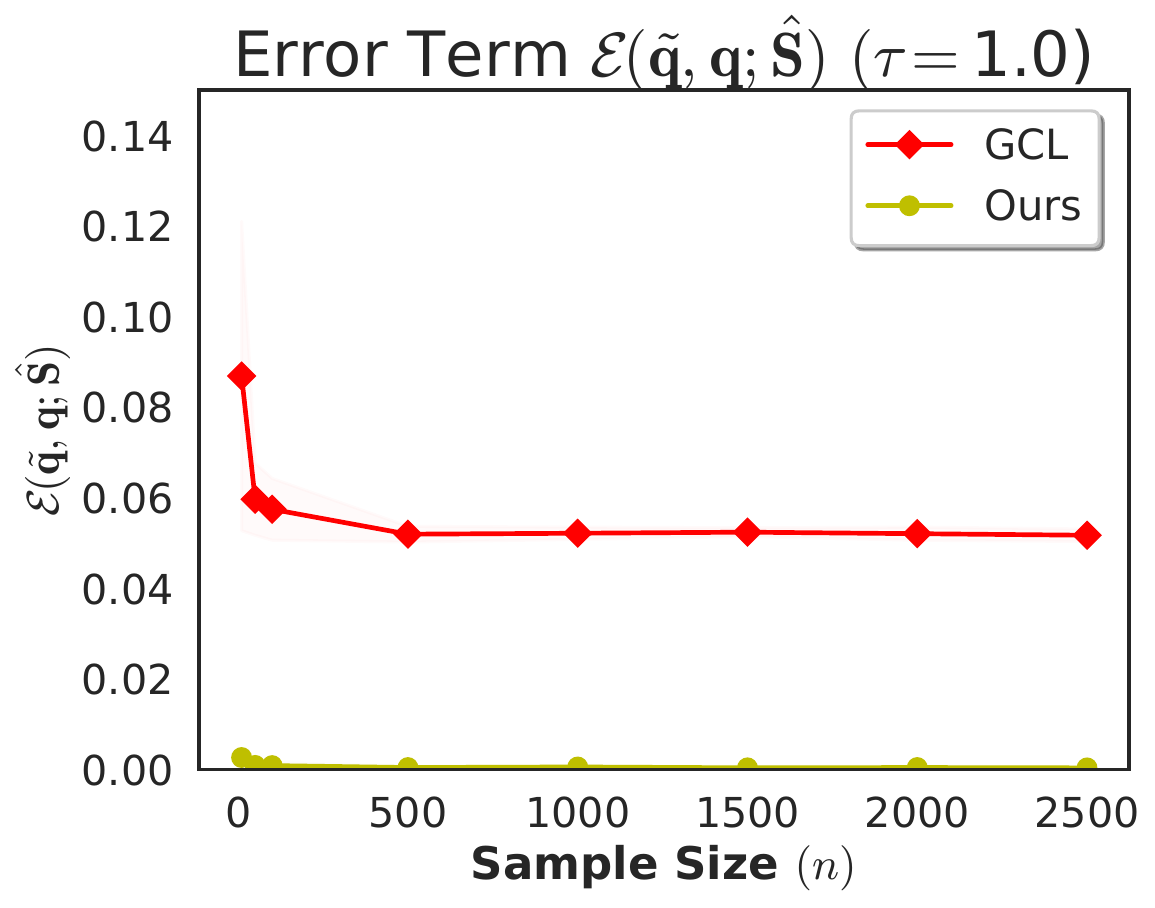}
\end{subfigure}
\begin{subfigure}[b]{0.33\textwidth}
		\centering
		\includegraphics[width=0.99\linewidth]{./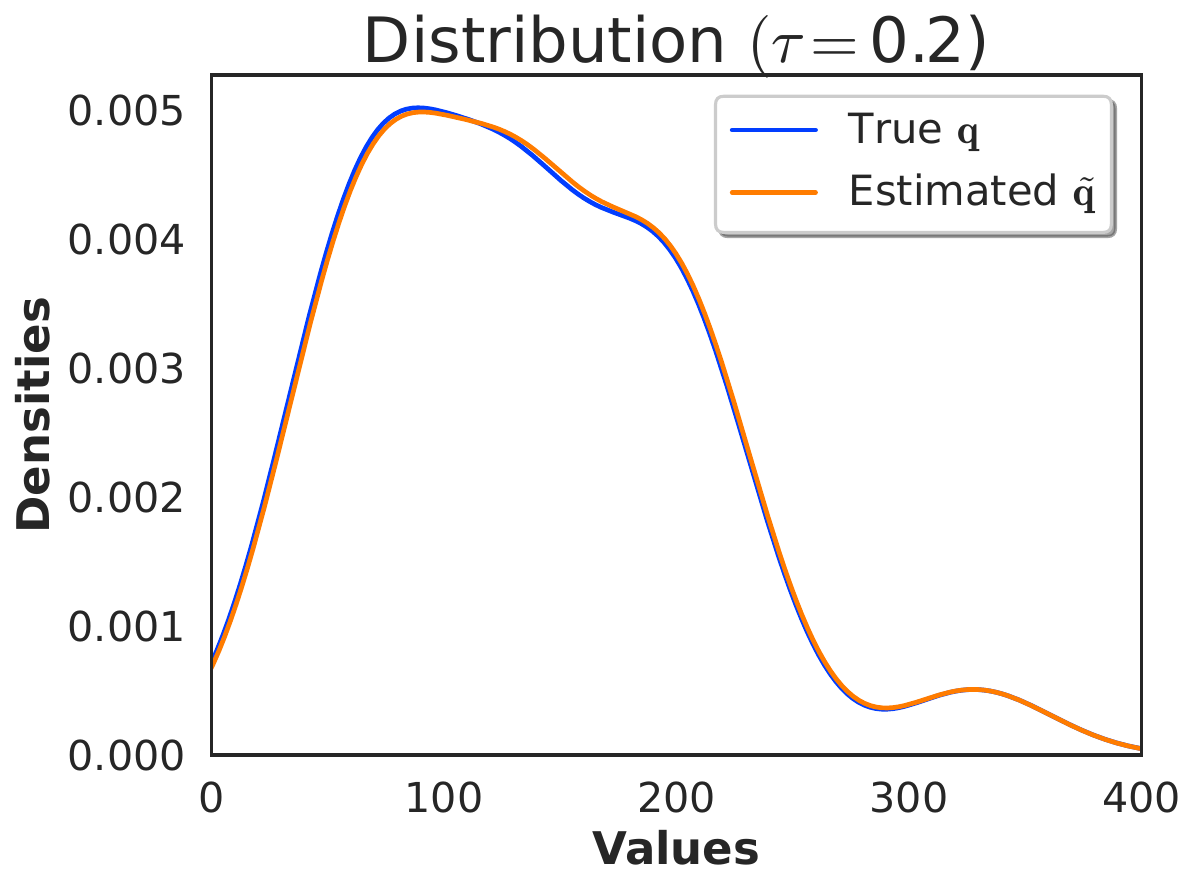}
\end{subfigure}
\begin{subfigure}[b]{0.33\textwidth}
		\centering
		\includegraphics[width=0.98\linewidth]{./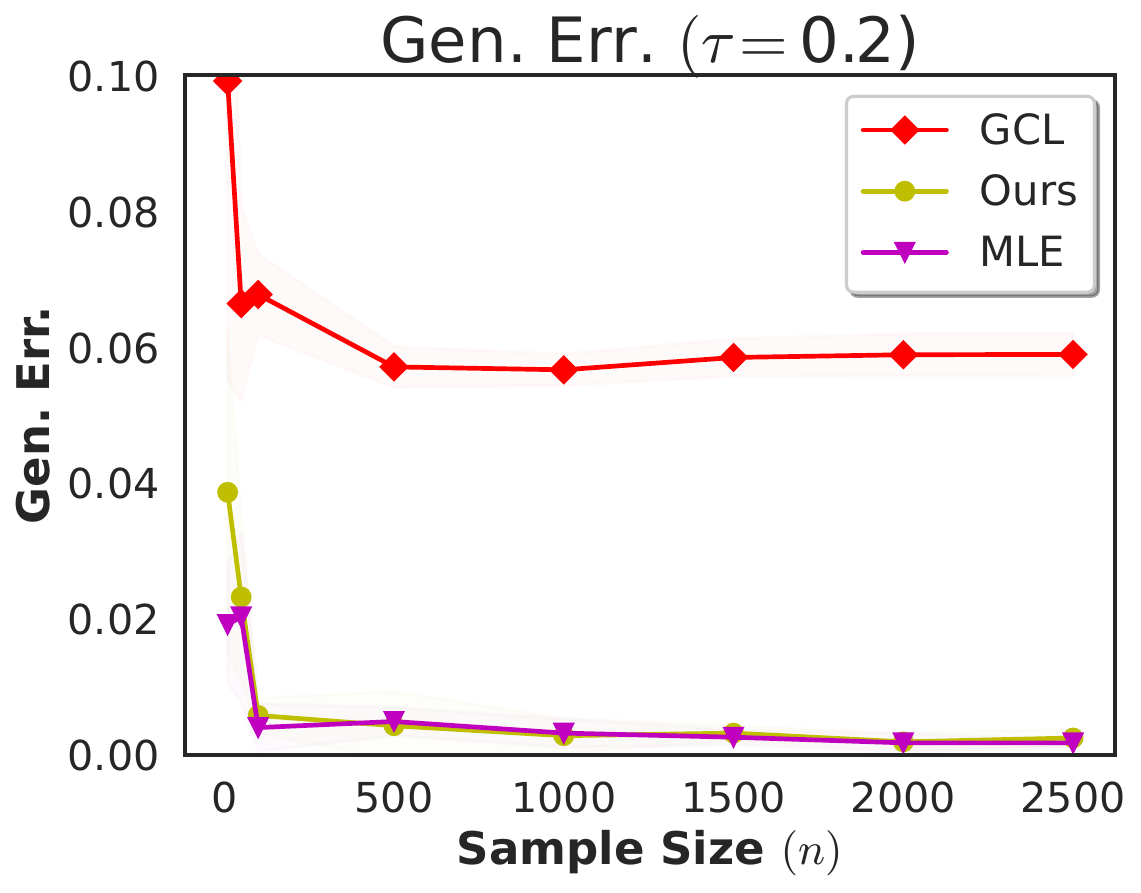}
\end{subfigure}
\begin{subfigure}[b]{0.33\textwidth}
		\centering
		\includegraphics[width=0.98\linewidth]{./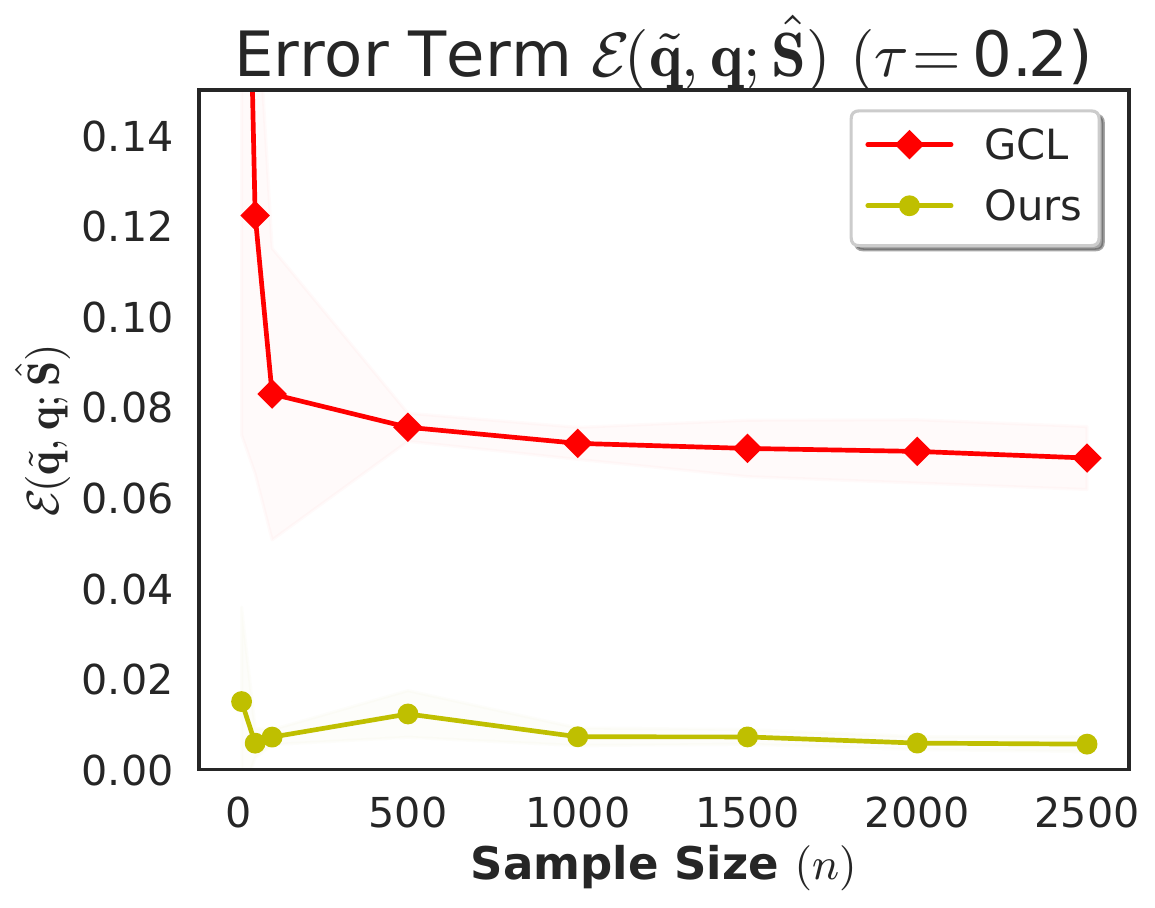}
\end{subfigure}
	\caption{ {\bf Left column:} Distributions of the true $\q$ and our estimated $\tilde{\q}$ with $\tau = 1.0$ and $\tau=0.2$, which are estimated by KDE when $n=100$; {\bf Middle column:} Comparing the generalization error $|\hat{\L}(\tilde{\q},\hat{\mathbf{S}}) - \L|$ of our method and GCL across various $n$. ``MLE'' refers to the MLE objective in \eqref{eq:mle} with the exact partition function; {\bf Right column:} Comparing the error term $\mathcal{E}(\tilde{\q},\q,\hat{\mathbf{S}})$ of our method and GCL across various $n$.}
	\label{fig:toy_different_taus}
\end{figure}

To further verify whether the non-diminishing error might be worse under long-tailed data distributions, we added a new experiment in Figure 7, Appendix G.1 of our revised manuscript. We still use the data spaces $\mathcal{X}$, $\mathcal{Y}$, and the density function $p(\mathbf{y} \mid \mathbf{x})$ as defined in the last paragraph in Pg. 6 of our paper. Here we consider two values $\tau=0.2$ and $\tau=1.0$, where $\tau=0.2$ results in a more long-tailed distribution of $\mathbf{q}$ compared to $\tau=1.0$. 
As demonstrated in Figure~\ref{fig:toy_different_taus}, the generalization error and the error term $\mathcal{E}(\tilde{\mathbf{q}},\mathbf{q};\hat{\mathbf{S}})$ of GCL are worse (larger) when $\mathbf{q}$ is long-tailed. Moreover, our method effectively reduces the error term  $\mathcal{E}(\tilde{\mathbf{q}},\mathbf{q};\hat{\mathbf{S}})$ and the generalization error in both cases.

\subsection{More Bimodal Experimental Results}\label{sec:add_exp_results}

\subsubsection{Comparison with iSogCLR} \label{sec:isogclr}
In Section~\ref{sec:exp} of the main paper, both our algorithm NUCLR and baselines (CLIP, CyCLIP, DCL, SogCLR) use a shared temperature parameter $\tau$ for all data pairs. In contrast, the iSogCLR algorithm~\citep{qiu2023not} learns an individual temperature parameter for each pair of data. Here we compare the testing performance of our NUCLR algorithm to that of iSogCLR. The results show that our NUCLR outperforms iSogCLR on most metrics except for the zero-shot classification on ImageNet1k when pre-trained on the CC12M dataset. Lastly, we note that the individual temperature learned by the approach in iSogCLR could also be incorporated into our algorithm.

\begin{table}[htp]
	\centering 	\caption{Test performance of \alg~and iSogCLR.}
	\label{tab:compare_with_isogclr}	\scalebox{0.8}{		\begin{tabular}{ccccccc}
\toprule[.1em]
 Dataset  & Algorithm  & MSCOCO  & Flickr30k  & CIFAR100 & ImageNet1k  & Mean\\	\midrule
\multirow{3}{*}{CC3M} & iSogCLR &28.50 $\pm$ 0.21 & 51.56 $\pm$ 0.38 & 35.57 $\pm$ 0.99 & 40.18 $\pm$ 0.28 & 38.95 $\pm$ 0.30\\
& \alg &  \bf 29.55 $\pm$ 0.26 & \bf 53.55 $\pm$ 0.22 & \bf 37.45 $\pm$ 0.45& \bf 40.49 $\pm$ 0.30  & \bf 40.26 $\pm$ 0.19   \\\midrule[.1em]
\multirow{3}{*}{CC12M}  & iSogCLR &34.09 $\pm$ 0.25 & 59.42 $\pm$ 0.41 & 27.78 $\pm$ 1.75 & {\bf 50.23 $\pm$ 0.18} & 42.88 $\pm$ 0.38\\
& \alg &  {\bf 34.36 $\pm$ 0.13} &  {\bf 60.45 $\pm$ 0.03} & {\bf 28.16 $\pm$ 1.35} &  49.82 $\pm$ 0.23   &  \bf 43.20 $\pm$ 0.39\\
\bottomrule
\end{tabular}}
\end{table}

\subsubsection{Effect of the Initial Value $\zeta_0$}
We also investigate the performance of \alg~with two different initial values $\zeta_0$ of the auxiliary variable $\boldsymbol{\zeta}$: 1) The natural choice $\zeta_0=0$; and 2) The value $\zeta_0 = -0.05$ according to EqCo~\citep{zhu2020eqco}. As seen in Table~\ref{tab:ablation_initial}, $\zeta_0=-0.05$ performs better on the CC3M dataset while $\zeta_0=0$ yields better results on the CC12M dataset. It is worth noting that our \alg~algorithm, with either initialization $\zeta_0$, leads to overall better performance than SogCLR.
\begin{table}[htp]
	\centering 	\caption{Test performance of \alg~with different initial values of  $\zeta_0$.}
	\label{tab:ablation_initial}	\scalebox{0.8}{		\begin{tabular}{ccccccc}
\toprule[.1em]
 Dataset  & Algorithm  & MSCOCO  & Flickr30k  & CIFAR100 & ImageNet1k  & Mean\\	\midrule
\multirow{3}{*}{CC3M} & SogCLR & 28.54 $\pm$ 0.25 & 52.20 $\pm$ 0.64 & 35.50 $\pm$ 1.71 & \bf 40.40 $\pm$ 0.12 & 39.16 $\pm$ 0.33\\
& \alg~($\zeta_0 = 0.0$) &  29.51 $\pm$ 0.12 & 53.10 $\pm$ 0.35 & 37.17 $\pm$ 1.27 & 40.21 $\pm$ 0.24 & 40.00 $\pm$ 0.26\\
& \alg~($\zeta_0 = -0.05$) &  \bf 29.55 $\pm$ 0.26 & \bf 53.55 $\pm$ 0.22 & \bf 37.45 $\pm$ 0.45& \bf 40.49 $\pm$ 0.30  & \bf 40.26 $\pm$ 0.19   \\\midrule[.1em]
\multirow{3}{*}{CC12M}  & SogCLR &33.91 $\pm$ 0.26 & 59.28 $\pm$ 0.07 & 26.10 $\pm$ 0.88 &  49.82 $\pm$ 0.14 & 42.28 $\pm$ 0.27\\
& \alg~($\zeta_0 = 0.0$) &  {\bf 34.36 $\pm$ 0.13} &  {\bf 60.45 $\pm$ 0.03} & {\bf 28.16 $\pm$ 1.35} &  49.82 $\pm$ 0.23   &  \bf 43.20 $\pm$ 0.39\\
& \alg~($\zeta_0 = -0.05$) & \bf 34.35 $\pm$ 0.14 & 59.69 $\pm$ 0.16 & 26.30 $\pm$ 2.35 & \bf 49.90 $\pm$ 0.28 & 42.56 $\pm$ 0.59\\
\bottomrule
\end{tabular}}
\end{table}


\subsubsection{Images from CC3M dataset with large and small learned popularity}\label{sec:more_examples}
Figure~\ref{fig:pos_examples} and \ref{fig:neg_examples} provide more images from CC3M with  small and large learned popularity $\tilde{q}'$. 

\begin{figure}[H]
\centering
\includegraphics[width=0.8\linewidth]{./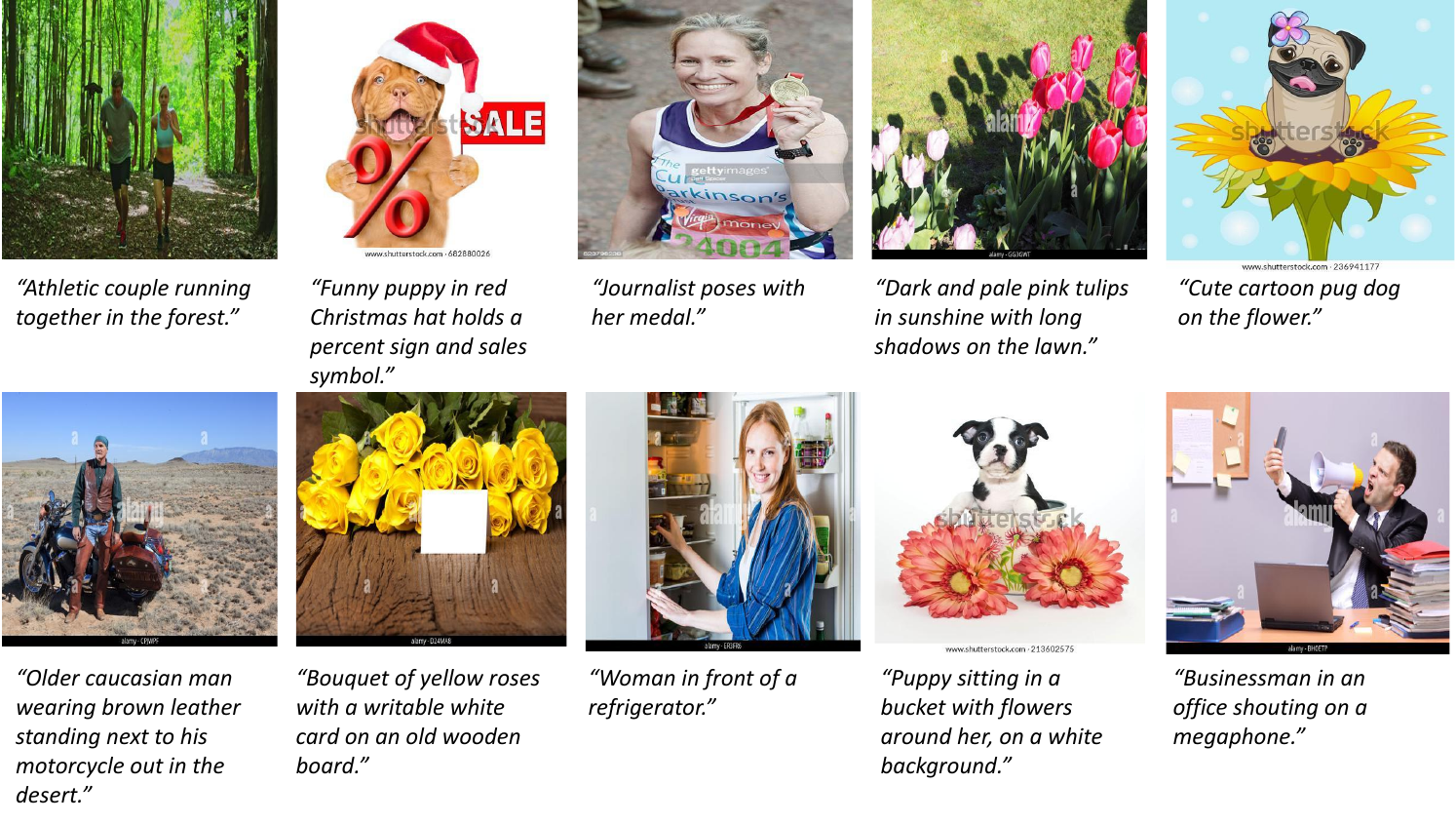}
 \caption{Images (and their captions) from the CC3M dataset with large $\tilde{q}'$ (high learned popularity).}
 \label{fig:pos_examples}
\end{figure}

\begin{figure}[htp]
\centering
\includegraphics[width=0.92\linewidth]{./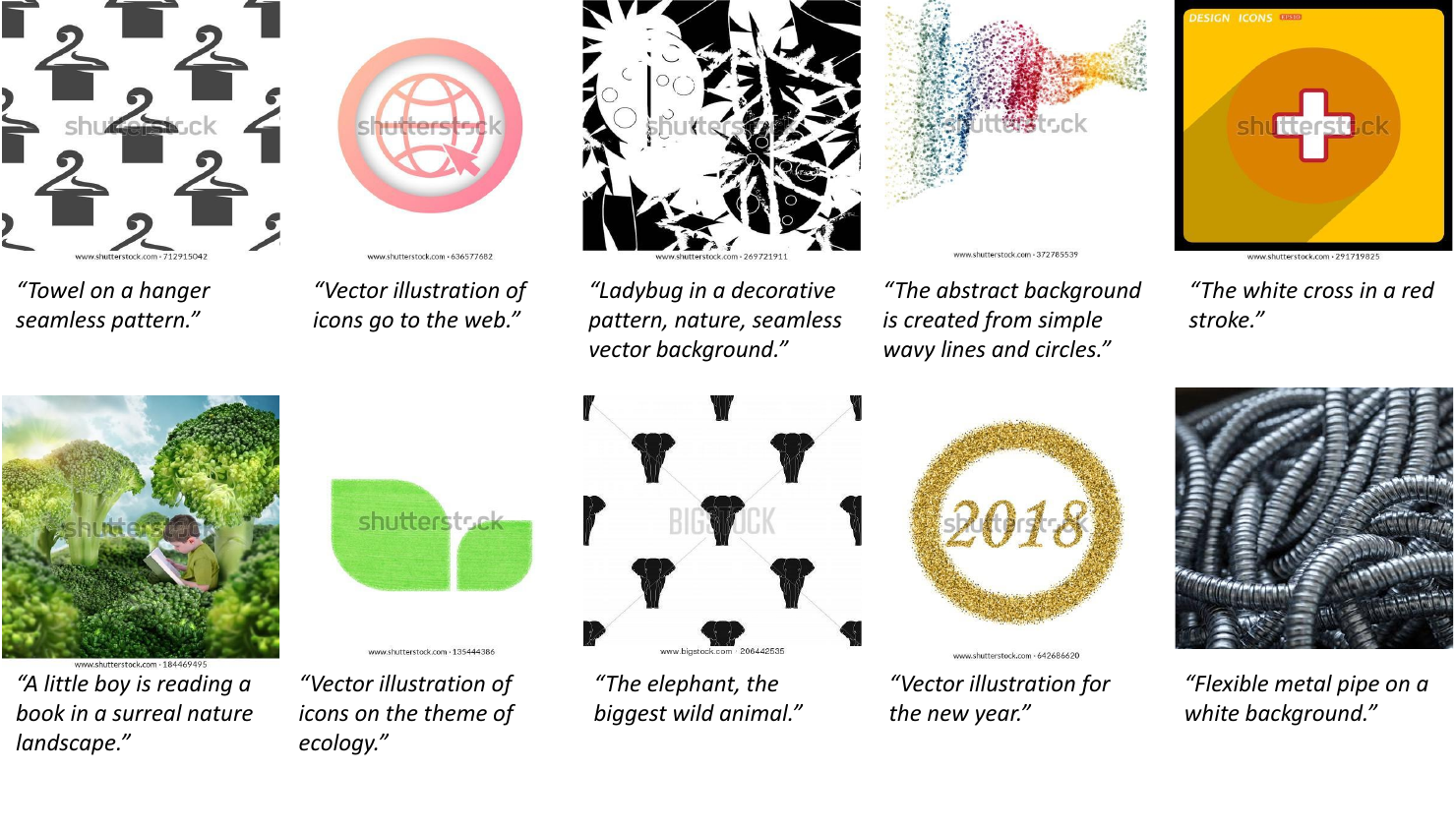}

 \caption{Images (and their captions) from the CC3M dataset with small $\tilde{q}'$ (low learned popularity).}
 \label{fig:neg_examples}
\end{figure}

\subsubsection{Evaluations on More Datasets}

We empirically compare our NUCLR and baselines on two more datasets for downstream zero-shot classification other than the three used in Section~\ref{sec:exp}: DTD~\citep{cimpoi14describing} is a texture recognition dataset that contains textural images in the wild from 47 classes; Food-101~\citep{bossard14} is a food recognition dataset containing 25,250 food images from 101 categories. 


\begin{table}[htp]
	\centering 	\caption{A comparison of test performance on two more datasets. The best results are highlighted in \textbf{black}.}
	\label{tab:test_additional}	
    \scalebox{0.8}{	\begin{tabular}{ccc|ccc}
\toprule[.1em]
\multicolumn{3}{c|}{CC3M} & \multicolumn{3}{c}{CC12M}\\\midrule
Algorithm  & DTD  & Food-101 & Algorithm  & DTD  & Food-101 \\	\midrule
 CLIP  & 23.05 $\pm$ 1.55 & 19.07 $\pm$ 0.14 & CLIP   & 27.73 $\pm$ 1.86 & 43.54 $\pm$ 1.07
\\
 DCL  & 24.11 $\pm$ 1.68 & 18.46 $\pm$ 0.50  & DCL   & 28.07 $\pm$ 1.69 & 43.12 $\pm$ 0.85
\\
 SigLIP  & 22.34 $\pm$ 2.08 & 20.09 $\pm$ 0.23  & SigLIP   & 28.92 $\pm$ 2.37 & 44.37 $\pm$ 0.75
 \\
 CyCLIP   & 24.59 $\pm$ 1.49 & 19.25 $\pm$ 0.85   & CyCLIP   & 27.95 $\pm$ 1.54 & 43.93 $\pm$ 0.69 \\
  SogCLR    & 25.83 $\pm$ 0.73 & \bf 22.33 $\pm$ 0.51  & SogCLR   & 31.79 $\pm$ 0.55 & 49.89 $\pm$ 0.61
 \\
  \alg~(Ours)    &  \bf 27.73 $\pm$ 0.94 & 22.22 $\pm$ 0.31  & \alg~(Ours)    & \bf 31.91 $\pm$ 0.54 & \bf 51.04 $\pm$ 0.15
\\
\bottomrule
\end{tabular}}
\end{table}
Our algorithm demonstrates superior overall test performance on these two datasets. 



\end{document}